\documentclass{article}
\usepackage{graphicx}

\usepackage{comment}
\usepackage{amsmath,amsthm,amsfonts,amssymb,amscd}
\usepackage{authblk}
\usepackage{geometry}
\usepackage{mathtools}
\usepackage{float}
\usepackage{graphicx}
\usepackage{enumitem}
\usepackage{tikz}
\usepackage[square,sort,comma,numbers]{natbib}
\usepackage{subcaption}
\usepackage{hyperref}
\usepackage{thmtools} 
\usepackage{thm-restate}

\usepackage{mathrsfs}
\hypersetup{
     colorlinks=true,
     linkcolor=blue,
     filecolor=blue,
     citecolor = black,      
     urlcolor=cyan,
     }
\usepackage[T1]{fontenc}
\newtheorem{remark}{Remark}[section]
\newtheorem{theorem}{Theorem}[section]
\newtheorem{proposition}{Proposition}[section]

\newtheorem{lemma}{Lemma}[section]
\newtheorem{claim}{Claim}[section]

\theoremstyle{definition}
\newtheorem{definition}{Definition}[section]

\newcommand{\R}{\mathbb{R}}
\newcommand{\N}{\mathbb{N}}

\newcommand{\Z}{\mathbb{Z}}

\newcommand{\E}{\mathbb{E}}
\newcommand{\W}{\mathcal{W}}
\newcommand{\D}{\mathcal{D}}
\newcommand{\K}{\mathcal{K}}

\newcommand{\vc}{\vcentcolon}
\newcommand{\eps}{\varepsilon}
\newcommand{\ind}{\mathbf{1}}

\DeclareMathOperator{\Ima}{Im}

\def\S{{\mathbb S}}

\def\P{{\mathbb P}}
\def\M{{\mathcal M}}
\newcommand{\grad}{\mathrm{grad}}

\usepackage{amsmath}

\usepackage[capitalise]{cleveref}
\usepackage{todonotes}

\title{On the convergence of graph Laplacians with a symmetric divergence}
\author{Liane Xu\thanks{Program in Applied and Computational Mathematics, Princeton University}}
\date{\vspace{-7.5ex}}
\begin{document}
\maketitle
\begin{abstract}
   When analyzing a manifold learning algorithm for data lying on a smooth, compact, connected Riemannian submanifold $(\M, g)$ of $\R^d$, a key estimate for the geodesic distance $d_g$ is that there exists $K > 0$ such that $0 \leq d_g(p, q)^2 - \|p-q\|^2  \leq K d_g(p, q)^4$ for all $p, q \in \M$.
    We observe that more generally, when $\M$ is equipped with a smooth symmetric divergence $D$ satisfying a non-degeneracy condition and $g$ is given by $g_p \vc= \frac{1}{2}\mathrm{Hess}_p(D(p, \cdot))$ for all $p \in \M$, there exists $K > 0$ such that
    $\left| D(p, q) - d_g(p, q)^2 \right| \leq K d_g(p, q)^4$
    for all $p, q \in \M$. We demonstrate that this is sufficient for the pointwise convergence of graph Laplacians constructed with $D$ and discuss examples where $D$ is given by the Sinkhorn divergence on a family of probability measures parametrized by a manifold.
\end{abstract}

\let\thefootnote\relax\footnotetext{\textit{Key words and phrases:} manifold learning, graph Laplacian, Sinkhorn divergence}

\tableofcontents

\section{Introduction}

\subsection{Motivation}
A popular assumption in data science is that high-dimensional data lie on or near a lower-dimensional manifold $\M$ \cite{fefferman2016testing, tenenbaum2000global, belkin2003laplacian}.
However, in certain applications — for example, when trying to understand the different conformations that a molecule can take \cite{zelesko2020earthmover} — we may want to model the underlying data generation process using a family of probability measures $\{ \mu_x \colon x\in \M\} \subset \mathcal{P}(X)$ supported on some compact $X \subset \R^d$, parametrized by an unknown manifold $\M$.
Specifically, suppose we are given $\mu_{x_1}, \ldots, \mu_{x_N}$, where $x_1, \ldots, x_N$ drawn i.i.d. from a probability distribution on $\M$. 
Under what assumptions can we learn something about $\M$, and how can we learn it?

One option is to try to embed $\mu_{x_1}, \ldots, \mu_{x_N}$ into a low-dimensional Euclidean space using a manifold learning algorithm such as Isomap \cite{tenenbaum2000global}, Laplacian eigenmaps \cite{belkin2001laplacian, belkin2003laplacian,belkin2005towards} or diffusion maps \cite{coifman2006diffusion}.
More concretely, each of the aforementioned algorithms only requires a measure of dissimilarity to compare data points. For data that lie in a Euclidean space, the Euclidean distance is often used. For our problem, a natural approach would be to use one of these algorithms, but to compare $\mu_{x_i},\mu_{x_j}$ for each $i, j \in [N]$ using a metric or divergence defined on probability measures. Depending on the application (e.g., \cite{lu2023discovering, zelesko2020earthmover}), one may want to use a metric arising from optimal transport, such as the Wasserstein-2 distance. Manifold learning with the Wasserstein-2 distance 
has also received interest \cite{hamm2024manifoldlearningwassersteinspace, oliver2025laplace} because the Wasserstein-2 space has a formal Riemannian structure \cite{otto2001geometry}.
However, since this structure is only formal, it is much more difficult to work with than a bona fide finite-dimensional smooth Riemannian manifold. 

Perhaps unsurprisingly, regularity assumptions can simplify the analysis of manifold learning algorithms. A key estimate for a smooth, connected, compact Riemannian submanifold $(\M, g)$ of $\R^d$ is that there exists $K > 0$ such that
\begin{equation}
\label{eq:euclidean-estimate-intro}
\|p-q\|^2 \leq d_g(p, q)^2 \leq \|p-q\|^2 + Kd_g(p, q)^4
\end{equation}
for all $p, q \in \M$, where $d_g$ denotes the geodesic distance on $(\M, g)$ \cite{belkin2008towards,bernstein2000graph}. Now suppose that we are instead only given a smooth, compact, connected, orientable manifold $\M$, not necessarily embedded in Euclidean space. In previous work, we found that if $\M$ is embedded in a metric space $(X, d)$, $d^2$ is smooth on a neighborhood of the diagonal $\Xi_\M$ of $\M \times \M$ and 
\[g_p \vc = \frac{1}{2}\mathrm{Hess}_p d^2(p, \cdot)\] is non-degenerate for all $p \in \M$, then a similar inequality to (\ref{eq:euclidean-estimate-intro}) holds, with the Euclidean distance replaced by $d$ \cite{xu2025manifold}.
Using this, we can prove the pointwise convergence of graph Laplacians — a key object in the Laplacian eigenmap and diffusion map algorithms — following a similar argument as the Euclidean setting \cite{belkin2005towards, belkin2008towards}. Separately, Arias-Castro and Qiao obtained results for Isomap under slightly different conditions \cite{arias2025embedding}; although the analysis of Isomap does not require the fourth-order error in (\ref{eq:euclidean-estimate-intro}) \cite{bernstein2000graph}, as noted in \cite{arias2025embedding}, assuming some regularity of $d^2$ around the diagonal $\Xi_\M$ does simplify the analysis.

This current work is motivated by the problem of understanding manifold learning algorithms when the Sinkhorn divergence is used to compare $\mu_{x_i},\mu_{x_j}$ for each $i, j \in [N]$. The Sinkhorn divergence, a debiased version of entropy-regularized optimal transport \cite{genevay2018learning, feydy2019interpolating}, is popular for several reasons. For one, it can be approximated quickly via Sinkhorn's algorithm \cite{sinkhorn1964relationship, cuturi2013sinkhorn}. In addition, for compact $X$, it is differentiable on $\mathcal{P}(X) \times \mathcal{P}(X)$ and convex with respect to each of its inputs \cite{feydy2019interpolating}, and its sample complexity is given by the parametric rate (as opposed to that of the squared Wasserstein-2 distance, which suffers from the curse of dimensionality) \cite{genevay2019sample, mena2019statistical}.
However, since the Sinkhorn divergence itself is not the square of a metric \cite{Lavenant2025}, we cannot apply our previous results from \cite{xu2025manifold}.

\subsection{Contributions}
The contributions of this paper are twofold.
First, we observe that for a smooth, symmetric divergence $D \colon \M \times \M \to [0, \infty)$ under a non-degeneracy condition (see Assumption \ref{assump:divergence} for details), we can use the symmetry of $D$ to obtain the estimate
\begin{equation}
\label{eq:intro-4th-order}
\left| D(p, q) - d_g(p, q)^2 \right| \leq K d_g(p, q)^4, 
\end{equation}
for all $p, q \in \M$, where $K > 0$ is a constant uniform across all $p, q \in \M$ and $d_g$ is the geodesic distance on $(\M, g)$ with $g_p \vc= \frac{1}{2}\mathrm{Hess}_p(D(p, \cdot))$.
From here, the pointwise convergence of graph Laplacians constructed with $D$ can be proven \emph{mutatis mutandis} from the case when $(\M, g)$ is a compact Riemannian submanifold of Euclidean space and the graph Laplacians are constructed with the squared Euclidean distance \cite{belkin2005towards, belkin2008towards, coifman2006diffusion}.

Secondly, since the Sinkhorn divergence has nice regularity properties in several situations of interest, we can apply our results to these situations. 
Specifically, in \cref{sec:sink-div}, we will study the regularity of the Sinkhorn divergence and the non-degeneracy condition under the following two scenarios:
\begin{restatable*}[Particle system parametrized by a manifold]{assumption}{manifoldhyp}
    \label{assump:manifold-hyp}
    Let $\M$ be a smooth, connected, compact orientable $m$-dimensional manifold with $m \geq 1$.
    Assume that we have a smooth embedding
     $\iota = (\iota_1, \cdots, \iota_n) \colon \M \to (\R^d)^n$ such that $\iota_j(x) \neq \iota_k(x)$ for all $x \in \M$ and $j \neq k$.
Fix some $p_1, \ldots, p_n > 0$ such that $\sum_{j = 1}^n p_j = 1$, and define $F \colon \M \to \mathcal{P}(\R^d)$ by
\begin{align*}
    F \colon \M &\to \mathcal{P}(\R^d) \\
    x& \mapsto \sum_{j = 1}^n p_j \delta_{\iota_j(x)}.
\end{align*}
Moreover, assume that $F(x) \neq F(y)$ for all $x \neq y$.
\end{restatable*}

\begin{restatable*}[Deformations of a measure]{assumption}{manifoldhypdeform}
    \label{assump:manifold-hyp-deform}
    Fix any $k \in \Z_{> 0}$.
    Let $X = \overline{\Omega}$ be the closure of a bounded open convex subset
    $\Omega \subset \R^d$, and let $U \subset \R^m$ be an open set.
Let $\mu \in \mathcal{P}(X)$. Suppose $\Psi \colon U \times X \to X$ is a continuous function and that its first $k$ partial derivatives with respect to the first variable
\begin{equation}
\label{eq:psi-part-derivs}
\frac{\partial^k \Psi}{\partial p_{i_1} \partial p_{i_2} \cdots \partial p_{i_k}} \colon U \times X \to \R^d 
\end{equation}
exist and are continuous on $U \times X$ for all $i_1, \cdots, i_k \in [m]$.
Define
\begin{align*}
    F \colon U &\to \mathcal{P}(X) \\
    p &\mapsto \Psi(p, \cdot)_{\#}\mu.
\end{align*}
We will sometimes also write $\mu_p$ for $F(p) = \Psi(p, \cdot)_{\#}\mu.$ Assume that $\mu_p \neq \mu_q$ for all $p \neq q$.
\end{restatable*}

Both assumptions are motivated in part by examples in existing literature.
For example, one can construct an idealized model of a protein by treating each atom as a Dirac delta or point mass (e.g., \cite{singer2021wilson, diepeveen2024riemannian} and references therein), upon which Assumption \ref{assump:manifold-hyp} can be viewed as an idealized model for a protein whose conformation space is parametrized by a manifold. Although two structures are often considered equivalent if one can be transformed into the other via rotations and translations (see \cite{diepeveen2024riemannian} and references therein for ways to handle this), we will not consider this issue in this paper.
Assumption \ref{assump:manifold-hyp-deform} is similar to 
\cite[Section 2.2]{hamm2024manifoldlearningwassersteinspace}, the "horizontal perturbations" in \cite[Remark 3.2]{Lavenant2025} and the assumptions of \cite{shen2020sinkhorn} — each considers a family of measures defined by deforming a template measure — though we use different regularity conditions.
We start with Assumption \ref{assump:manifold-hyp}, as under this assumption, we can illustrate the main ideas without functional analysis. The argument under Assumption \ref{assump:manifold-hyp-deform} is similar but involves more analysis.

The rest of this paper is organized as follows. In \cref{sec:background}, we provide some background on graph Laplacians and Sinkhorn divergences and discuss related work. In \cref{sec:gl-conv}, we discuss (\ref{eq:intro-4th-order}), i.e., that one can approximate the induced squared geodesic distance with the given symmetric divergence with a fourth-order error, and the pointwise convergence of graph Laplacians.
In \cref{sec:sink-div}, we focus more specifically on the regularity of the Sinkhorn divergence and the non-degeneracy condition under Assumptions \ref{assump:manifold-hyp} and \ref{assump:manifold-hyp-deform}.
Lastly, we study two examples in \cref{sec:examples} and conclude in \cref{sec:concl}.

\section{Preliminaries}
\label{sec:background}

Throughout this paper, unless otherwise specified, smooth means $C^\infty$, and we will follow the notation and assumptions in Table \ref{tab:notation}.
When working with a smooth Riemannian manifold $(\M, g)$, we assume that it is equipped with its Levi-Civita connection and that the Laplace-Beltrami operator is given by $\Delta_g f \vc = -\mathrm{div}_g(\grad_g f)$ for $f \in C^2(\M)$.

\begin{table}[ht]
    \centering
    \caption{Notation and assumptions for this paper.}
    \label{tab:notation}
    
    \begin{tabular}{| c | c |}
        \hline
        $\M$   &   smooth, connected, compact orientable $m$-dimensional manifold, $m \geq 1$   \\
        $m$   & dimension of $\M$    \\
        $h$ & function satisfying Assumption \ref{assump:lap-eig-kernel}, used in the construction of the graph Laplacian \\
        $D$ & divergence satisfying Assumption \ref{assump:divergence} \\
        $\eps$ & bandwidth parameter for the graph Laplacian; see \cref{subsec:gl-construction} \\
        $N$ & number of samples \\
        $L^{(\eps, N)}$ & discrete Laplacian, defined in \cref{subsec:gl-construction} \\ 
        $\beta$ & regularization parameter for the Sinkhorn divergence; see \cref{subsec:background-sink-div} \\
        $\mathrm{OT}_\beta$ & optimal transport with entropy regularization $\beta > 0$; see (\ref{eq:ent-reg-ot}) \\
        $S_\beta$ & Sinkhorn divergence with regularization $\beta > 0$; see (\ref{eq:def-sinkdiv}) \\
        $\mathcal{P}(X)$ & the space of Borel probability measures on $X \subset \R^d$ \\
        \hline
    \end{tabular}
\end{table}

\subsection{Construction of the graph Laplacian}
\label{subsec:gl-construction}
We begin by describing the construction of the discrete Laplacians that we will use throughout this paper. 
Fix any $\eps > 0$. We assume that we have functions $h \colon [0, \infty) \to [0, \infty)$ and $D \colon \M \times \M \to \R$ satisfying the following conditions:
\begin{restatable}[Assumptions on $h$ (following \cite{cheng2022convergence, cheng2022eigen})]{assumption}{lap-eig-kernel}
    \label{assump:lap-eig-kernel}
    Let $h \colon [0, \infty) \to [0, \infty)$ be a continuous function such that $h \in C^2((0, \infty))$ and that $h, h', h''$ are exponentially decaying, i.e., there exist $c, c_0, c_1, c_2 > 0$ such that
    \[ \left| h^{(k)}(x) \right| \leq c_k e^{-cx} \]
    for all $x > 0$ and $k = 0, 1, 2.$
    Assume also that $h$ is not identically $0$.
\end{restatable}

\begin{restatable}[Assumptions for the divergence $D$]{assumption}{divergence}
    \label{assump:divergence}
    Let $D \colon \M \times \M \to \R$ be a continuous function satisfying the following conditions:
    \begin{enumerate}
        \item $D(p, q) \geq 0$ for all $p, q \in \M$;
        \item $D(p, q) = 0$ if and only if $p = q$;
        \item $D(p, q) = D(q, p)$ for all $p, q \in \M$;
        \item $D$ is smooth 
        on an open neighborhood of the diagonal $\Xi_\M \vc = \{(p, p) \colon p \in \M \} \subset \M \times \M$;
        \item For all $p \in \M$,
        \begin{equation}
        \label{eq:hess-riem-metric}
            g_p \vc = \frac{1}{2}\mathrm{Hess}_p (D(p, \cdot))
        \end{equation}
        is positive definite, so $(\M, g)$ is a Riemannian manifold. Although the Hessian typically depends on a Riemannian metric, for each $p \in \M$, $f(x) \vc = D(p, x)$ attains a minimum at $p$, so for all $V, W \in T_p\M$,
        \[ \mathrm{Hess}_p f(V, W) = V(Wf) \]
        is well-defined independent of any Riemannian metric on $\M$.
    \end{enumerate}
\end{restatable}
\begin{remark}
In information geometry, for a smooth manifold $\M$, a smooth function $D \colon \M \times \M \to \R$ satisfying 1., 2., and 5. in Assumption \ref{assump:divergence} above is known as a \textit{divergence}; symmetry (condition 4) is not required \cite[Definition 1.1]{amari2016information}. 
\end{remark}

Now suppose that $x_1, \ldots, x_N$ are drawn i.i.d. from a Borel probability measure $\P$ on $\M$. Define $\K_\eps \colon \M \times \M \to [0, \infty)$ via
\begin{equation}
\label{eq:lap-eig-kernel-def}
    \K_\eps(x, y) \vc = h \left( \frac{D(x, y)}{\eps} \right)
\end{equation}
for each $x, y \in \M$. We now define the graph Laplacians that we will use:
\begin{definition}[Graph Laplacian]
Construct a graph with $\{x_i\}_{i \in [N]}$ as vertices, and set the edge weight from $x_i$ to $x_j$ to be
\begin{equation}
\label{eq:weights}
\W_{ij}^{(\eps, N)} \vc = \K_\eps\left( x_i, x_j \right) 
\end{equation}
for each $i, j \in [N]$.
The \emph{degree matrix} $\D^{(\eps, N)} \in \R^{N \times N}$ is defined as
\[ \D_{ij}^{(\eps, N)} = \begin{cases}
    \sum_{k} \W_{ik}^{(\eps, N)} & i = j \\
    0 & i \neq j
\end{cases}. \]
The \emph{(unnormalized) graph Laplacian} is then given by
\begin{equation}
    \mathcal{L}^{(\eps, N)} \vc = \D^{(\eps, N)} - \W^{(\eps, N)}.
\end{equation}
We also define a closely related operator $L^{(\eps, N)}$ via
\begin{equation}
L^{(\eps, N)}f(x) \vc = \sum_{j = 1}^N \K_\eps(x, x_j)(f(x) - f(x_j)) 
\end{equation}
for any function $f \colon \M \to \R$. 
Indeed, observe that for any $f \colon \M \to \R$,
\[ \mathcal{L}^{(\eps, N)} \begin{bmatrix}
    f(x_1) \\
    \vdots \\
    f(x_N)
\end{bmatrix} = \begin{bmatrix}
        L^{(\eps, N)}f(x_1) \\
    \vdots \\
    L^{(\eps, N)}f(x_N)
\end{bmatrix}.
\]
\end{definition}
As an example, if there is a smooth embedding $\iota \colon \M \to \R^d$, then the squared Euclidean distance 
\[D(p, q) \vc = \|\iota(p) - \iota(q)\|^2\] satisfies the conditions in Assumption \ref{assump:divergence}, and $g$ in (\ref{eq:hess-riem-metric}) is the Riemannian metric on $\M$ from restricting the canonical Riemannian structure on $\R^d$ to $\iota(\M).$ We will sometimes refer to this as the Euclidean setting.
This setting is a standard assumption for manifold learning: that our data lie on a compact Riemannian submanifold of a (potentially high-dimensional) Euclidean space. In this case, one well-known dimensionality reduction algorithm is \emph{Laplacian eigenmaps} \cite{belkin2001laplacian, belkin2003laplacian,belkin2005towards},
which uses the spectral decomposition of the normalized graph Laplacian $\left(\D^{(\eps, N)}\right)^{-1}\mathcal{L}^{(\eps, N)}$\footnote{As in \cite{belkin2005towards}, but in contrast to \cite{belkin2001laplacian, belkin2003laplacian}, we do not use a $\delta$-neighborhood rule ($\W^{(\eps)}_{ij} = 0$ if $D(x_i, x_j) \geq \delta$) or a $k$NN rule ($\W^{(\eps)}_{ij} = 0$ if $x_j$ is not one of the $k$ nearest neighbors of $x_i$).}. Since $\left(\D^{(\eps, N)}\right)^{-1}\mathcal{L}^{(\eps, N)}$ is similar to the symmetric matrix $\left(\D^{(\eps, N)}\right)^{-1/2}\mathcal{L}^{(\eps, N)}\left(\D^{(\eps, N)}\right)^{-1/2}$, it has eigenvectors
\[ v^{(j)} = \left(\D^{(\eps, N)}\right)^{-1/2} w^{(j)} \]
for $j = 0, \ldots, N - 1$, where $w^{(0)}, \ldots, w^{(N-1)}$ forms an orthonormal basis of eigenvectors for $\left(\D^{(\eps, N)}\right)^{-1/2}\mathcal{L}^{(\eps, N)}\left(\D^{(\eps, N)}\right)^{-1/2}$. For each $j = 0, \ldots, N - 1$, let $\lambda_j$ denote the eigenvalue corresponding to $v^{(j)}$. Since 
\[\left(\D^{(\eps, N)}\right)^{-1}\mathcal{L}^{(\eps, N)} = I - \left(\D^{(\eps, N)}\right)^{-1}\mathcal{W}^{(\eps, N)}, \] 
without loss of generality, we can order the eigenvalues
\[ 0 = \lambda_0 \leq \lambda_1 \leq \cdots \leq \lambda_{N-1} \]
and take
$v^{(0)}$ to be a multiple of the all ones vector. For a fixed $l \leq N-1$, Laplacian eigenmaps maps each data point $x_i$ into $\R^l$ using the eigenvectors $v^{(1)}, \ldots, v^{(l)}$:
\begin{equation} 
\label{eq:lapeigenmap}
x_i \mapsto \begin{bmatrix}
    v^{(1)}_i \\
    \vdots \\
    v^{(l)}_i
\end{bmatrix}.
\end{equation}
Even when we are only given $h$ as in Assumption \ref{assump:lap-eig-kernel} and $D$ as in Assumption \ref{assump:divergence} (not necessarily the squared Euclidean distance), we can still apply the Laplacian eigenmap algorithm, in the sense that we can construct the matrix $\left(\D^{(\eps, N)}\right)^{-1}\mathcal{L}^{(\eps, N)}$ and compute its eigenvectors.

\subsection{Sinkhorn divergences}
\label{subsec:background-sink-div}
We now recall some facts about entropy-regularized optimal transport, which can be found in \cite{peyre2019computational, nutz2022introduction, chewi2024statisticaloptimaltransport}.
Let $X \subset \R^d$ be compact. 
For $\mu, \nu \in \mathcal{P}(X)$, define for each $\beta > 0$
\begin{equation}
\label{eq:ent-reg-ot}
    \mathrm{OT}_\beta(\mu, \nu) \vc= \inf_{\pi \in \Pi(\mu, \nu)} \int \|x-y\|^2 d\pi(x, y) + \beta \mathrm{KL}(\pi | \mu \otimes \nu),
\end{equation}
where $\Pi(\mu, \nu)$ denotes the set of couplings between $\mu$ and $\nu$ and $\mathrm{KL}$ denotes the Kullback–Leibler divergence. (For this paper, we will only consider the squared Euclidean distance as the cost function.)
Then for each $\mu, \nu \in \mathcal{P}(X)$, as $\beta \to 0$, $\mathrm{OT}_\beta(\mu, \nu)$ converges to the square of the Wasserstein-2 distance
\[ W_2(\mu,\nu) \vc = \left( \inf_{\pi \in \Pi(\mu, \nu)} \int \|x-y\|^2 d\pi(x, y)\right)^{1/2} \]
between $\mu$ and $\nu$ \cite[Section 5]{nutz2022introduction}.
However, for $\beta > 0$ and $\mu \in \mathcal{P}(X)$, $\mathrm{OT}_\beta(\mu, \mu)$ may be non-zero, so
 \cite{genevay2018learning} proposed to instead use the Sinkhorn divergence
\begin{equation}
\label{eq:def-sinkdiv}
    S_\beta(\mu, \nu) \vc = \mathrm{OT}_\beta(\mu, \nu) - \frac{\mathrm{OT}_\beta(\mu, \mu)}{2} - \frac{\mathrm{OT}_\beta(\nu, \nu)}{2}
\end{equation}
as a loss function for machine learning.
In \cite{feydy2019interpolating}, it is shown that for each $\beta > 0,$ $S_\beta$ is non-negative and $S_\beta(\mu, \nu) = 0$ if and only if $\mu = \nu$. 

From now on, we assume that $\beta > 0$ is fixed.
The dual for (\ref{eq:ent-reg-ot}) is given by
\begin{equation}
    \label{eq:ent-reg-ot-dual}
    \mathrm{OT}_\beta(\mu, \nu) = \sup_{f, g \in C(X)} \int f d\mu + \int g d\nu - \beta \int \left( \exp\left( \frac{f(x) + g(y) - \|x-y\|^2}{\beta}\right)-1 \right)d(\mu \otimes \nu)(x, y)
\end{equation}
for $\mu, \nu \in \mathcal{P}(X),$ and there exist $f_{\mu, \nu}, g_{\mu, \nu} \in C(X)$, also known as Schr\"odinger potentials, which attain the maximum in (\ref{eq:ent-reg-ot-dual}). 
In addition, we have that $\pi_{\mu, \nu}$ given by
\begin{equation}
\label{eq:optimal-coupling-form}
    d\pi_{\mu, \nu}(x, y) = \exp \left( \frac{f_{\mu, \nu}(x) + g_{\mu,\nu}(y) - \|x-y\|^2}{\beta} \right)d(\mu \otimes \nu)(x, y)
\end{equation}
is optimal for the primal (\ref{eq:ent-reg-ot}).
Since $\pi_{\mu, \nu}$ is a coupling between $\mu$ and $\nu$, $f_{\mu, \nu}, g_{\mu, \nu}$ satisfy
\begin{equation}
\label{eq:schrodinger-1}
    \int \exp \left(\frac{f_{\mu, \nu}(x) + g_{\mu, \nu}(y) - \|x-y\|^2}{\beta} \right) d\nu(y) = 1 \quad \text{for $\mu$-a.e. $x$}
\end{equation}
\begin{equation}
\label{eq:schrodinger-2}
    \int \exp \left(\frac{f_{\mu, \nu}(x) + g_{\mu, \nu}(y) - \|x-y\|^2}{\beta} \right) d\mu(x) = 1 \quad \text{for $\nu$-a.e. $y$}
\end{equation}
which can also be understood as the Euler-Lagrange equations or first-order optimality conditions for (\ref{eq:ent-reg-ot-dual}) \cite[Remark 3.4]{nutz2022introduction}. 

Observe that maximizers of (\ref{eq:ent-reg-ot-dual}) are not unique, since if $f_{\mu, \nu}, g_{\mu, \nu}$ is a maximizer, so is $f_{\mu, \nu} +\lambda, g_{\mu, \nu} - \lambda$ for any $\lambda \in \R$. 
Nonetheless, the Schr\"odinger potentials $f_{\mu, \nu}, g_{\mu, \nu}$ are unique up to an additive constant upon requiring that (\ref{eq:schrodinger-1}) and (\ref{eq:schrodinger-2}) hold for all $x, y \in X$ (rather than $\mu$-a.e. $x$ and $\nu$-a.e. $y$), i.e.,
\begin{equation}
\label{eq:schrodinger-1-1}
f_{\mu, \nu}(x) = -\beta \log \left( \int \exp \left( \frac{g_{\mu, \nu}(y) - \|x-y\|^2}{\beta} \right) d\nu(y) \right) 
\end{equation}
\begin{equation}
\label{eq:schrodinger-2-1}
g_{\mu, \nu}(y) = -\beta \log \left( \int \exp \left( \frac{f_{\mu, \nu}(x) - \|x-y\|^2}{\beta} \right) d\mu(x) \right) 
\end{equation}
for all $x, y \in X$.

\subsection{Related work}
\label{subsec:related-work}
The Laplacian eigenmap \cite{belkin2001laplacian, belkin2003laplacian,belkin2005towards} and diffusion map \cite{coifman2005geometric, coifman2006diffusion, nadler2006diffusion} algorithms can be viewed as discrete analogues to embeddings of $(\M, g)$ using the eigenfunctions of the Laplace-Beltrami operator \cite{bates2014embedding} and heat kernel \cite{berard1994embedding}, respectively. As such, under the setting where $(\M, g)$ is a compact Riemannian submanifold of $\R^d$ and $D$ is the squared Euclidean distance, several notions of the convergence of graph Laplacians to the Laplace-Beltrami operator (or a weighted version) on $(\M, g)$ have been studied; see, for example, \cite{belkin2005towards, belkin2008towards, hein2005graphs, singer2006graph, coifman2006diffusion, belkin2006convergence, garcia2020error, cheng2022eigen, calder2022improved, calder2022lipschitz, dunson2021spectral, wahl2024kernel, trillos2025minimax} and references therein. 

However, in several manifold learning algorithms (e.g., Isomap \cite{tenenbaum2000global} in addition to the Laplacian eigenmap and diffusion map algorithms), one can replace the Euclidean distance with another measure of dissimilarity between data points. Indeed, depending on the application, other measures of dissimilarity may be more appropriate than the Euclidean distance; for examples, see \cite{carter2009fine, lu2023discovering, kileel2021manifold, zelesko2020earthmover, tenenbaum2000global, mishne2016hierarchical}. 
The particular application which motivates our Assumption \ref{assump:manifold-hyp} is that of understanding the conformation space of a molecule \cite{diepeveen2024riemannian, zelesko2020earthmover}. When comparing two conformations of a molecule, an optimal transport-related distance is better for measuring displacements in space, 
especially when compared to the $L^2$ distance between the densities. Therefore, it is not surprising that Zelesko et al. found using a wavelet approximation of the Wasserstein-1 metric \cite{shirdhonkar2008approximate} in the diffusion maps algorithm beneficial when analyzing the conformation space of a molecule \cite{zelesko2020earthmover}.

Motivated by \cite{zelesko2020earthmover}, \cite{kileel2021manifold} studied the pointwise convergence of the graph Laplacian when the typical Euclidean norm is replaced by an arbitrary norm on $\R^d$, using tools from convex geometry. In \cite{trillos2024fermat}, the spectral convergence of graph Laplacians constructed with Fermat distances is studied, though we note that Fermat distances depend on the data and thus our analysis will not apply. In \cite{oliver2025laplace}, Oliver et al. study the consistency of Laplace learning on a submanifold of Wasserstein-2 space (see \cite{oliver2025laplace} and \cite{hamm2024manifoldlearningwassersteinspace} for the precise assumptions) via $\Gamma$-convergence of the discrete Dirichlet energies.
Though we focus on graph Laplacian-based methods, several authors have also studied adapting other dimensionality reduction and/or manifold learning methods to a space of probability measures; see, e.g., \cite{hamm2023wassmap, bigot2017geodesic, seguy2015principal, hamm2024manifoldlearningwassersteinspace, arias2025embedding, warren2025principal}. 

The appeal of the Wasserstein-2 space is that it has a formal Riemannian structure, introduced by Otto in \cite{otto2001geometry}. For example, the Benamou-Brenier dynamical formulation of the Wasserstein-2 distance \cite{benamou2000computational} is similar to how length-minimizing geodesics can be characterized as energy minimizers in Riemannian geometry. We refer to \cite{villani2009optimaloldandnew, ambrosio2008gradient, chewi2024statisticaloptimaltransport} for more details on this Riemannian structure.
However, the Wasserstein-2 space is not a true finite-dimensional smooth Riemannian manifold, so much care needs to be taken even when constructing a "Riemannian submanifold" of the Wasserstein-2 space \cite{hamm2024manifoldlearningwassersteinspace} and defining the Laplace-Beltrami operator on such a submanifold \cite{oliver2025laplace}. Additional regularity constraints can bring us a bit closer to standard Riemannian geometry \cite{lott2008some}, but the space that \cite{lott2008some} works with is still infinite-dimensional.

In this work, we instead adapt our approach from \cite{xu2025manifold}, so that we only need to work with a finite-dimensional smooth Riemannian manifold.
In \cite{xu2025manifold}, we observed that when a smooth compact orientable manifold $\M$ is embedded into a metric space $(X, d)$, a sufficient condition for the pointwise convergence of graph Laplacians constructed with $d^2$ is for $d^2$ to be smooth on a neighborhood of the diagonal $\Xi_\M$ of $\M \times \M$ and for $\frac{1}{2}\mathrm{Hess}_p d^2(p, \cdot)$ to be nondegenerate for all $p \in \M$.
The main idea is to equip $\M$ with the Riemannian metric 
\begin{equation}
\label{eq:induced-riem-metric}
g_p \vc = \frac{1}{2}\mathrm{Hess}_p d^2(p, \cdot)
\end{equation}
and establish that there exists $K, \eps_0 > 0$ such that
\[
d(p,q)^2 \leq d_g(p, q)^2 \leq d(p, q)^2 + K d_g(p, q)^4
\]
for all $p, q\in \M$ such that $d_g(p, q) < \eps_0$. From here, the pointwise convergence of the graph Laplacians follows \textit{mutatis mutandis} from the Euclidean setting \cite{belkin2005towards}; see \cref{sec:gl-conv} and Appendix \ref{sec:gl-conv-app} for more details.

More generally, we can consider
\begin{equation}
\label{eq:induced-riem-metric-div}
g_p \vc = \frac{1}{2}\mathrm{Hess}_p D(p, \cdot)
\end{equation}
as a Riemannian metric
so long as $D \colon \M \times \M \to \R_{\geq 0}$ is smooth on a neighborhood of $\Xi_\M$, $D(p, p) = 0$ for all $p \in \M$ and (\ref{eq:induced-riem-metric-div}) is nondegenerate for all $p \in \M$.
This idea dates back to Rao \cite{rao1987differential} and is central to information geometry \cite{amari2016information, ay2017information}, which uses the Fisher information as a Riemannian metric. 
Namely, when $D$ is given by, e.g., the Kullback–Leibler divergence or squared Hellinger distance, under some assumptions, one can show that (\ref{eq:induced-riem-metric-div}) is a scalar multiple of the Fisher information \cite{chewi2024statisticaloptimaltransport, kullback1997information}. 
More concretely, in statistics, one often considers a family of probability measures $\{\mu_x \colon x \in \Theta\}$ parametrized by a subset $\Theta \subset \R^m$; under appropriate assumptions, we can define $D(x, y) \vc = \mathrm{KL}(\mu_x || \mu_y)$, upon which (\ref{eq:induced-riem-metric-div}) is half of the Fisher information at $x$ \cite{ kullback1997information}. (Note, however, that in Assumption \ref{assump:divergence}, we also require $D$ to be symmetric.) This can then be used in the natural gradient algorithm \cite{amari1998natural}, a gradient descent algorithm where the gradient is taken with respect to $g$. Combining information geometry and the Riemannian structure of Wasserstein-2 space, more recent work has studied when the Kullback-Leibler divergence is replaced with (a variant of) the squared Wasserstein-2 distance \cite{chen2020optimal, li2023wasserstein, li2018natural}. (\ref{eq:induced-riem-metric}) and/or (\ref{eq:induced-riem-metric-div}) have also been used to study Isomap on a family of measures parametrized by an open set of $\R^m$ \cite{arias2025embedding} and to approximate geodesics on a Riemannian manifold \cite{diepeveen2024riemannian, rumpf2015variational}.

Given the prevalence of the Sinkhorn divergence in machine learning and related areas, a natural direction then is to study (\ref{eq:induced-riem-metric-div}) on a parametrized family of probability measures when $D$ is given by the Sinkhorn divergence. This is done in \cite{shen2020sinkhorn}, which is more focused on the natural gradient algorithm, and \cite{Lavenant2025}, which explores the properties of the Riemannian structure given by (\ref{eq:induced-riem-metric-div}) in more detail. For this paper, we will follow \cite{Lavenant2025}.
We also mention the follow-up work \cite{hardion2025gradient}, which uses \cite{Lavenant2025} to understand a variant of the JKO scheme with the Sinkhorn divergence. To the best of our knowledge, there has not been prior work studying graph Laplacian-based manifold learning algorithms when the Riemannian structure is given by the Hessian of the Sinkhorn divergence.

\section{Approximations of the geodesic distance and convergence of the graph Laplacian}
\label{sec:gl-conv}
It is well-known that for a smooth, connected, compact Riemannian submanifold $(\M, g)$ of $\R^d$, there exists $K > 0$ such that
\begin{equation} 
\label{eq:4th-order-error}
\|p-q\|^2 \leq d_g(p, q)^2 \leq \|p-q\|^2 + Kd_g(p, q)^4
\end{equation}
for all $p, q \in \M$, where $d_g$ denotes the geodesic distance on $(\M, g)$; see, e.g., \cite[Lemma 4.3]{belkin2008towards} or \cite[Lemma 3]{bernstein2000graph}. 
Indeed, (\ref{eq:4th-order-error}) is often helpful when analyzing manifold learning algorithms for data lying on $\M \subset \R^d$.

Now consider the more general case, where we are only given a manifold $\M$ satisfying the assumptions in Table \ref{tab:notation} and a function $D$ satisfying Assumption \ref{assump:divergence}. Endow $\M$ with the Riemannian metric $g$ as defined in (\ref{eq:hess-riem-metric}). For the rest of this section and Appendix \ref{sec:error-proof-app}, let $\exp$ and $d_g$ denote the associated exponential map and geodesic distance, respectively. Fix any $p \in \M$ and $v \in T_p \M$ such that $g_p(v, v) = 1$. 
A priori, from Taylor expanding around $p$, we may expect $D$ to approximate the geodesic distance $d_g$ up to third order, i.e., for $t \in \R$ with $|t|$ small enough,
\begin{equation}
\label{eq:tay-exp-third-order}
D(p,\exp_p(tv)) = t^2 + O(|t|^3) = d_g(p,\exp_p(tv))^2 + O(|t|^3)
\end{equation}
as $t \to 0$.
If $D = d^2$ for a metric $d$ on $\M$, then one can obtain that there exists $K > 0$ such that
\begin{equation}
\label{eq:squared-metric-geo-dist-comparison}
d(p,q)^2 \leq d_g(p, q)^2 \leq d(p, q)^2 + K d_g(p, q)^4 
\end{equation}
for all $p, q \in \M$ using the triangle inequality and compactness of $\M$ \cite{xu2025manifold}. However, a divergence is not necessarily the square of a distance, and in particular, the Sinkhorn divergence is not the square of a distance \cite{Lavenant2025}. Instead, we will use the symmetry of $D$ in Assumption \ref{assump:divergence} to improve upon the third-order error in (\ref{eq:tay-exp-third-order}):

\begin{proposition}
\label{prop:fourth-order-error}
    With $D$ satisfying Assumption \ref{assump:divergence}, there exists $K > 0$ such that
    \[ \left| D(p, q) - d_g(p, q)^2 \right| \leq K d_g(p, q)^4 \]
    for all $p, q \in \M$.
\end{proposition}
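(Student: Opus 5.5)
Because $D$ and $d_g$ are smooth near $\Xi_\M$ and $\M$ is compact, it suffices to prove the estimate for pairs with $d_g(p,q)<\eps_0$, with $\eps_0$ smaller than the injectivity radius. For $d_g(p,q)\ge \eps_0$ both $D$ and $d_g^2$ are bounded by compactness, so $|D-d_g^2|\le C \le (C/\eps_0^4)\, d_g(p,q)^4$. For close pairs we write $q=\exp_p(tv)$ with $g_p(v,v)=1$ and $t=d_g(p,q)$. We then study $\phi(t)\vc= D(p,\exp_p(tv))$, which is smooth in $(p,v,t)$ on the compact set $\{(p,v)\in T\M: g_p(v,v)=1\}\times[-\eps_0,\eps_0]$. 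The goal is to show $\phi(t)=t^2+O(t^4)$ uniformly. By Taylor's theorem with a uniform bound on $\phi^{(4)}$, this reduces to showing $\phi(0)=\phi'(0)=0$, $\phi''(0)=2$ and $\phi'''(0)=0$.

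The first three identities are immediate. $\phi(0)=D(p,p)=0$. Since $D(p,\cdot)$ has a minimum at $p$, we get $\phi'(0)=0$. The second derivative at the critical point is $\phi''(0)=\mathrm{Hess}_p D(p,\cdot)(v,v)=2g_p(v,v)=2$; here the geodesic's acceleration term drops out because the first differential vanishes.

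The main step is $\phi'''(0)=0$, and this is where symmetry enters. Let $\gamma(t)=\exp_p(tv)$. The idea is to compare expanding at the base point $p$ with expanding at the moving point. Define $\psi(s,t)=D(\gamma(s),\gamma(t))$, which is symmetric in $(s,t)$ by condition 3 of Assumption \ref{assump:divergence}. Also $\psi(s,s)=0$, and $\partial_t\psi(s,s)=0$ by the minimum property in each slot. Moreover $\partial_t^2\psi(s,s)=2g_{\gamma(s)}(\dot\gamma,\dot\gamma)=2$ because $\gamma$ is a unit-speed geodesic of $g$, so this quantity is constant in $s$.

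Now Taylor-expand $\psi$ around $(0,0)$ to third order. Write $\psi(s,t)=\sum a_{ij}s^it^j/(i!j!)$ with $a_{ij}=a_{ji}$. The constraints above give the following.
\begin{itemize}
\item From $\psi(s,s)\equiv0$: $a_{00}=0$, $a_{10}+a_{01}=0$, and $a_{20}+2a_{11}+a_{02}=0$. Combined with symmetry and $a_{01}=\phi'(0)=0$, $a_{02}=2$, this yields $a_{11}=-2$.
\item Third order of the same identity: $a_{30}+3a_{21}+3a_{12}+a_{03}=0$, and symmetry turns this into $a_{03}+3a_{12}=0$.
\item From $\partial_t^2\psi(s,s)\equiv 2$, differentiating in $s$ at $0$: $a_{12}+a_{03}=0$.
\end{itemize}
These two linear equations force $a_{03}=a_{12}=0$. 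Since $a_{03}=\phi'''(0)$, we get $\phi'''(0)=0$.

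The delicate points are two: justifying that $\partial_t^2\psi(s,s)=2$ exactly, which holds because $t\mapsto D(\gamma(s),\gamma(t))$ has a critical point at $t=s$ and $\gamma$ is a unit-speed geodesic; and the bookkeeping of the Taylor coefficients above. Uniformity of $K$ follows from continuity of fourth derivatives of $(p,v,t)\mapsto D(p,\exp_p(tv))$ on the compact set, after shrinking $\eps_0$ so that $(p,\exp_p(tv))$ stays in the neighborhood of $\Xi_\M$ where $D$ is smooth.
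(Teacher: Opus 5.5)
Your proposal is correct and is essentially the paper's proof. The paper also reduces to small $d_g(p,q)$ by compactness and applies \cref{lemma:third-order-vanish} to $f(s,t)=D(\gamma(s),\gamma(t))$ along a unit-speed minimizing geodesic, combining the symmetry identity $\partial_t^3 f=-3\,\partial_s\partial_t^2 f$ on the diagonal with the $s$-derivative of $\partial_t^2 f(s,s)\equiv 2$ (exactly your two linear equations), and then bounds the fourth derivative of $t\mapsto D(q,\exp_q(tv))$ uniformly over the unit tangent bundle. The differences are cosmetic: you get the symmetry identity by differentiating $\psi(s,s)\equiv 0$ three times rather than by equating the Taylor expansions of $f(x,y)$ and $f(y,x)$, and you get uniformity by a direct compactness argument rather than the paper's inverse-function-theorem and convex-ball covering; also, $d_g$ itself is not smooth at the diagonal (only $d_g^2$ is), but you never use that.
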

The main idea is to apply the following lemma to $D$ along the unit-speed geodesics of $(\M, g)$:
\begin{lemma}
\label{lemma:third-order-vanish}
Suppose $f \in C^4([a, b] \times [a, b])$ for some $a < b$ and that $f$ is symmetric, i.e., \[f(x, y) = f(y, x)\] for all $x, y \in [a, b]$, that $f \geq 0$ and $f(x, x) = 0$ for all $x \in [a, b]$. 
Let $f_x \vc = f(x, \cdot)$ for each $x \in [a, b]$. 
Then
\begin{equation}
\label{eq:3rd-order-derivs}
\frac{\partial ^3 f}{\partial y^3}(x, x) = -3 \frac{\partial^3 f}{\partial x \partial y^2}(x, x) 
\end{equation}
for all $x \in (a, b)$.
In particular, if additionally we have $f_x''(x) = 2$ for all $x \in (a, b)$, then $f_x'''(x) = 0$ for all $x \in (a, b)$ and so
\begin{equation}
\label{eq:4th-order-error-2}
\left| f(x, y) - (x-y)^2 \right| \leq \frac{1}{24} \left( \sup_{z \in (a, b)} \left|f_x^{(4)}(z) \right| \right) (x-y)^4 \end{equation}
for all $x, y \in (a, b)$.
\end{lemma}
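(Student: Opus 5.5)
We prove Lemma \ref{lemma:third-order-vanish} by differentiating the identities $f(x,x)=0$ and $\partial_y f(x,x)=0$ along the diagonal and using symmetry. The symmetry gives $\partial_x^i\partial_y^j f(x,x)=\partial_x^j\partial_y^i f(x,x)$. Since $f\ge 0$ and $f(x,x)=0$, for each interior $x$ the function $f_x$ has a minimum at $y=x$. Hence $\partial_y f(x,x)=0$ for all $x\in(a,b)$, and by symmetry also $\partial_x f(x,x)=0$.

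Differentiating $\partial_y f(x,x)=0$ in $x$ gives $f_{xy}+f_{yy}=0$ on the diagonal. Differentiating once more gives
\[ f_{xxy}+2f_{xyy}+f_{yyy}=0 \]
on the diagonal. By symmetry, $f_{xxy}(x,x)=f_{xyy}(x,x)$ (swap the roles of the variables). Substituting yields $f_{yyy}(x,x)=-3f_{xyy}(x,x)$, which is (\ref{eq:3rd-order-derivs}). The $C^4$ hypothesis justifies all mixed partials commuting and these differentiations.

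For the second part, suppose $f_x''(x)=f_{yy}(x,x)=2$ for all $x$. Differentiating in $x$ gives $f_{xyy}(x,x)+f_{yyy}(x,x)=0$. Combined with $f_{yyy}=-3f_{xyy}$, this gives $-2f_{xyy}=0$, so both third derivatives vanish. In particular $f_x'''(x)=0$.

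Now Taylor expand $f_x$ around $y=x$ to third order with Lagrange remainder. We have $f_x(x)=0$, $f_x'(x)=0$, $f_x''(x)=2$, and $f_x'''(x)=0$. So $f(x,y)=(y-x)^2+\frac{1}{24}f_x^{(4)}(z)(y-x)^4$ for some $z$ between $x$ and $y$, which gives (\ref{eq:4th-order-error-2}).

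The only real obstacle is bookkeeping: identifying $f_{xxy}=f_{xyy}$ on the diagonal correctly via the symmetry $f(x,y)=f(y,x)$. Differentiating that identity gives $\partial_x^i\partial_y^j f(x,y)=\partial_x^j\partial_y^i f(y,x)$, which we evaluate at $y=x$.

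For Proposition \ref{prop:fourth-order-error}, apply the lemma to $f(s,t)=D(\gamma(s),\gamma(t))$, where $\gamma$ is a unit-speed geodesic. Then $f_{tt}(s,s)=2g(\dot\gamma,\dot\gamma)=2$. Short geodesics lie in the neighborhood of $\Xi_\M$ where $D$ is smooth, and compactness gives a uniform bound on the fourth derivatives over the unit tangent bundle. Here $d_g(p,\exp_p(tv))=|t|$ for $|t|$ below the injectivity radius. For pairs with $d_g$ bounded below, $D$ is bounded, so a large $K$ suffices.
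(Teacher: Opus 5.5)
Your argument is correct. The second half (differentiating $f_{yy}(x,x)=2$ along the diagonal, combining with (\ref{eq:3rd-order-derivs}), then applying Taylor's theorem with Lagrange remainder) is exactly the paper's, but you reach the key identity (\ref{eq:3rd-order-derivs}) by a different route.

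\begin{itemize}
\item The paper Taylor-expands $f(x,\cdot)$ about $x$ and $f(y,\cdot)$ about $y$ with fourth-order Lagrange remainders. It equates the two expansions via $f(x,y)=f(y,x)$; the cubic terms add rather than cancel because $(x-y)^3=-(y-x)^3$. It then divides by $(y-x)^3$ and lets $y\to x$. The difference quotient of $f_{yy}$ along the diagonal tends to $-\frac{1}{2}(f_{xyy}+f_{yyy})(x,x)$, which must balance $\frac{1}{3}f_{yyy}(x,x)$.
\item You instead differentiate the first-order condition $\partial_y f(x,x)=0$ twice along the diagonal and use the diagonal symmetry $f_{xxy}(x,x)=f_{xyy}(x,x)$.
\end{itemize}

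What each buys:
\begin{itemize}
\item Your version is a short, limit-free chain-rule computation with no remainder terms to control.
\item The paper's version keeps the Taylor-expansion picture that motivates the lemma and its use in \cref{prop:fourth-order-error}: swapping base point and target flips the sign of the cubic term.
\item Your route also makes the hypotheses transparent. Differentiating $f(x,x)=0$ once gives $\partial_x f(x,x)+\partial_y f(x,x)=0$, and symmetry gives $\partial_x f(x,x)=\partial_y f(x,x)$. Together these already force $\partial_y f(x,x)=0$, so the assumption $f\ge 0$ is not actually needed for the lemma.
\end{itemize}
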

\cref{lemma:third-order-vanish} follows from Taylor expanding each $f_x$ and using the symmetry condition; see Appendix \ref{sec:error-proof-app}.
We also leave the proof of \cref{prop:fourth-order-error} for Appendix \ref{sec:error-proof-app}, as it is similar to \cite[Lemma 3.5]{xu2025manifold}.
As we cannot use the triangle inequality, in contrast to (\ref{eq:squared-metric-geo-dist-comparison}), we may not have $D(p, q) \leq d_g(p, q)^2$ for all $p, q \in \M$. Nonetheless, \cref{prop:fourth-order-error} is enough for us to prove, for example, the pointwise convergence of the graph Laplacian, following similar arguments as the Euclidean setting \cite{belkin2008towards, coifman2006diffusion}.

\begin{theorem}[c.f. \cite{belkin2008towards, coifman2006diffusion}]
\label{thm:glconv}
Assume Assumptions \ref{assump:lap-eig-kernel} and \ref{assump:divergence}. Fix any $f \in C^3(\M)$, $x \in \M$ and $\alpha > 0$, and suppose $\eps_N = \Omega\left(N^{-\frac{1}{m+2 + \alpha}}\right)$ and $\eps_N \to 0$ as $N \to \infty$.
Define (as in \cite{coifman2006diffusion})
\begin{equation}
\label{eq:def_m_2}
m_2 \vc = \int_{\R^m} v_1^2 h(\|v\|^2) dv. 
\end{equation}
If $x_1, \ldots, x_N$ are drawn i.i.d. from a probability distribution on $\M$ with a density $P \in C^3(\M)$ with respect to the Riemannian volume form $dV_g$ (where $g$ is as given in Assumption \ref{assump:divergence}), then 
    \[\frac{L^{(\eps_N, N)} f(x)}{N\eps_N^{m/2 + 1}}  \to \frac{m_2}{2} \left( P(x) \Delta_g f(x) - 2g_x( \grad_g f(x), \grad_g P (x) ) \right) \]
    almost surely as $N \to \infty$.
    In particular, if $x_1, \ldots, x_N$ are drawn i.i.d. from the uniform distribution on $(\M, g)$, then
    \[\frac{L^{(\eps_N, N)} f(x)}{N\eps_N^{m/2 + 1}} \to \frac{m_2}{2} \left(\frac{\Delta_g f(x)}{\mathrm{vol}_g(\M)} \right) \]
almost surely as $N \to \infty$.
\end{theorem}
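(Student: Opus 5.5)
We split the proof into a bias term and a variance term, following \cite{belkin2008towards, coifman2006diffusion}. For the fixed point $x$, write $L^{(\eps, N)} f(x) = \sum_{j} Y_j$ with $Y_j \vc= \K_\eps(x, x_j)(f(x) - f(x_j))$, i.i.d. in $j$. First we show that the expectation satisfies
\[
\eps^{-m/2-1}\,\E[Y_1] = \eps^{-m/2-1}\int_\M h\!\left(\tfrac{D(x,y)}{\eps}\right)(f(x)-f(y))P(y)\,dV_g(y) \to \frac{m_2}{2}\left(P\Delta_g f - 2 g(\grad_g f, \grad_g P)\right)(x).
\]
Second, we show that $\frac{1}{N\eps^{m/2+1}}\sum_j (Y_j - \E Y_j) \to 0$ almost surely. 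For this we use Bernstein's inequality together with Borel--Cantelli and the rate $\eps_N = \Omega(N^{-1/(m+2+\alpha)})$.

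\textbf{Bias.} We localize to a geodesic ball $B$ of radius $\eps^{1/2-\delta}$ around $x$, for a small $\delta>0$. Outside $B$ we need $D(x,y)$ bounded below. By \cref{prop:fourth-order-error}, $D \ge d_g^2 - K d_g^4 \ge d_g^2/2$ when $d_g \le r_0$. When $d_g(x,y)\ge r_0$, compactness together with conditions 1, 2 and continuity gives $D \ge c_0>0$. The exponential decay of $h$ then makes the contribution from outside $B$ equal to $O(\eps^k)$ for every $k$.

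Inside $B$, we use normal coordinates $y=\exp_x(\sqrt{\eps}\,u)$ and the standard expansions
\[
dV_g = (1+O(\eps|u|^2))\,\eps^{m/2}du, \qquad f(y) = f(x)+\sqrt\eps\,\nabla f\cdot u+\tfrac{\eps}{2}u^T\nabla^2 f\,u+O(\eps^{3/2}|u|^3),
\]
and similarly for $P$. The key replacement is $h(D/\eps)$ by $h(|u|^2)$. By \cref{prop:fourth-order-error}, $D(x,y)/\eps = |u|^2 + O(\eps|u|^4)$. The mean value theorem with $|h'(s)|\le c_1e^{-cs}$ then gives
\[
|h(D/\eps)-h(|u|^2)| \le C\eps|u|^4 e^{-c|u|^2/2},
\]
once $\eps|u|^2$ is small, which holds on $B$. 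This error integrates against $|f(x)-f(y)| = O(\sqrt\eps|u|)$ to give $O(\eps^{3/2})\cdot\eps^{m/2}$, which is $o(\eps^{m/2+1})$. The order-$\sqrt\eps$ terms vanish by oddness. The order-$\eps$ terms produce $-\frac{m_2}{2}(P\,\mathrm{tr}\nabla^2 f + 2\nabla f\cdot\nabla P)$, which equals the stated limit with the sign convention $\Delta_g = -\mathrm{div}\,\grad$. The third-order remainders are controlled because $f,P\in C^3$.

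The main point of care is exactly this kernel-replacement step. \cref{prop:fourth-order-error} only controls $|D-d_g^2|$, not its sign, so the bound must come from a two-sided mean value estimate on $h$, using the decay of $h'$ on a range containing both $|u|^2$ and $D/\eps$. We also note that $h$ is only $C^2$ on $(0,\infty)$ and merely continuous at $0$. This is harmless, since the mean value theorem needs only differentiability on the open interval between the two points, and the set where both points are $0$ has measure zero.

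\textbf{Variance.} We have $|Y_j|\le \|h\|_\infty\cdot 2\|f\|_\infty$. By the same localization, $\E[Y_j^2]\le C\eps^{m/2+1}$, since $|f(x)-f(y)|^2 = O(\eps|u|^2)$. Bernstein's inequality then gives
\[
\P\!\left(\left|\tfrac1N\sum_j (Y_j-\E Y_j)\right|>t\,\eps^{m/2+1}\right) \le 2\exp\!\left(-\frac{cNt^2\eps^{m+2}}{\eps^{m/2+1}+t\,\eps^{m/2+1}}\right) = 2\exp\!\left(-c' N\eps^{m/2+1}\right).
\]
Under $\eps_N=\Omega(N^{-1/(m+2+\alpha)})$ we get $N\eps_N^{m/2+1} \ge cN^{\gamma}$ for some $\gamma>0$. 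Hence these probabilities are summable, and Borel--Cantelli yields almost sure convergence.

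The uniform case follows by taking $P\equiv 1/\mathrm{vol}_g(\M)$, so that $\grad_g P = 0$.
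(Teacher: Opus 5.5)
Your proof is correct. It has the same skeleton as the paper's: a bias/variance split, exponential-tail localization, normal coordinates, and \cref{prop:fourth-order-error} to replace $h(D/\eps)$ by $h(\|v\|^2/\eps)$. Both halves, however, are carried out differently.

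For the bias, the paper first proves the general expansion $G_\eps f = m_0 f + \frac{\eps m_2}{2}(\omega f - \Delta_g f) + O(\eps^{3/2})$ of \cref{prop:integral-approx}. That requires a second-order Taylor expansion of $h$ (hence the decay of $h''$), the expansion $D(p,\exp_p v) = \|v\|^2 + Q_{p,4}(v) + \cdots$, and the Ricci term in the volume density. The paper then observes that the resulting $\omega$ terms cancel in $f(x)G_\eps P(x) - G_\eps[fP](x)$.

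You instead integrate against $f(x)-f(y) = O(\sqrt{\eps}\,|u|)$ from the start. As a result, the $O(\eps|u|^4)$ kernel-replacement error needs only a first-order mean value bound with $h'$, and the $\omega$-type contributions never appear. This is shorter and uses less about $h$. It still relies essentially on the missing cubic term: a $d_g^3$-type error would leave an $O(\eps)$ drift of the same order as the limit. What the paper's route buys is a reusable, $p$-uniform expansion of $G_\eps$ itself. That is the natural starting point for normalized Laplacians or uniform and spectral results.

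For the variance, your Bernstein bound with $\E[Y_1^2]=O(\eps^{m/2+1})$ gives an exponent of order $N\eps_N^{m/2+1}$, so it would tolerate smaller bandwidths than assumed. The paper's Hoeffding bound uses only boundedness and gives an exponent of order $N\eps_N^{m+2}$. The hypothesis $\eps_N = \Omega(N^{-1/(m+2+\alpha)})$ is tailored to that bound. Under the stated hypothesis, both suffice.
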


As it is very similar to the arguments in \cite{belkin2008towards, coifman2006diffusion}, we leave the proof of \cref{thm:glconv} to Appendix \ref{sec:gl-conv-app}. We expect other results, such as the spectral convergence of graph Laplacians (see \cite{calder2022improved, cheng2022eigen} and references therein for the Euclidean setting), to hold as well, but we have not checked this for this work.

\section{Application to Sinkhorn divergences}
\label{sec:sink-div}
In this section, we discuss two models where a family of measures is parametrized by a manifold $\M$ and the Sinkhorn divergence is smooth around the diagonal of $\M \times \M$. We then discuss when the non-degeneracy condition (5. in Assumption \ref{assump:divergence}) holds. Throughout this section, $\beta > 0$ is fixed.

\subsection{Regularity}

We start with showing that the smoothness around the diagonal assumption (4. in Assumption \ref{assump:divergence}) holds in two models of interest.

\subsubsection{Particle system parametrized by a manifold}
We first consider a particle system parametrized by a manifold:

\manifoldhyp

We claim that
$
    D(x, y) \vc = S_\beta(F(x), F(y))
$
is smooth on $\M \times \M$ under Assumption \ref{assump:manifold-hyp}. This follows immediately from the following lemma.

\begin{lemma}
\label{lemma:ot-beta-smooth}
Under Assumption \ref{assump:manifold-hyp},
\[ G_\beta(x, y) \vc = \mathrm{OT}_\beta(F(x), F(y)) \]
is smooth on $\M \times \M$.
\end{lemma}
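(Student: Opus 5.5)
Since $F(x)$ and $F(y)$ are both supported on $n$ points with the fixed weights $p_1, \ldots, p_n$, the problem is finite-dimensional. Set $a_j = \iota_j(x)$ and $b_k = \iota_k(y)$, and let $C_{jk}(x,y) = \|\iota_j(x) - \iota_k(y)\|^2$, which is smooth on $\M \times \M$. The plan is to write $G_\beta$ as the value of a smooth, strictly concave, finite-dimensional dual problem with a unique nondegenerate maximizer, and then apply the implicit function theorem.

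By (\ref{eq:ent-reg-ot-dual}), with $f_j = f(a_j)$ and $g_k = g(b_k)$,
\[ G_\beta(x,y) = \sup_{(f,g) \in \R^n \times \R^n} \Phi(f,g; C(x,y)), \]
where
\[ \Phi(f,g;C) \vc= \sum_j p_j f_j + \sum_k p_k g_k - \beta \sum_{j,k} p_j p_k \left( e^{(f_j + g_k - C_{jk})/\beta} - 1 \right). \]
One point needs care. If $\iota_j(x) = \iota_{j'}(x)$ for $j \neq j'$ this parametrization would be redundant, but Assumption \ref{assump:manifold-hyp} excludes this. Even without it, $\Phi$ is well defined as a function of $(f,g,C)$, and the supremum over $C(X)$ equals the supremum over vectors, because any vector of values can be realized by a continuous function on a finite set of distinct points, and only those values matter.

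The map $\Phi$ is smooth and jointly real-analytic in $(f,g,C)$. It is invariant along $(f + \lambda \mathbf{1}, g - \lambda \mathbf{1})$, so I will impose the normalization $g_n = 0$ (equivalently, work on the quotient). On the slice $\{g_n = 0\}$ I claim $\Phi$ is strictly concave. Its Hessian is $-\frac{1}{\beta}\sum_{j,k} p_j p_k e^{(\cdot)} (e_j + e_{n+k})(e_j + e_{n+k})^T$, which is negative semidefinite. Its kernel consists of vectors $(u,w)$ with $u_j + w_k = 0$ for all $j,k$, which means $u = c\mathbf{1}$ and $w = -c\mathbf{1}$. The normalization $w_n = 0$ forces $c = 0$, so the Hessian restricted to the slice is negative definite. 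The maximizer exists: existence of Schrödinger potentials was recalled in \cref{subsec:background-sink-div}, or one can argue directly by coercivity in finite dimensions. By strict concavity the maximizer is unique on the slice.

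The first-order conditions $\nabla_{(f,g')}\Phi = 0$, where $g' = (g_1, \ldots, g_{n-1})$, form a smooth system in $(f,g',C)$ whose Jacobian in $(f,g')$ is the negative definite Hessian above, hence invertible. By the implicit function theorem, applied around each point $(x_0,y_0)$, the maximizer $(f^*(C), g^*(C))$ depends smoothly on $C$, and hence smoothly on $(x,y)$ through $C(x,y)$. Uniqueness ensures that the local solutions patch together into a global smooth map. Therefore
\[ G_\beta(x,y) = \Phi\left(f^*(C(x,y)), g^*(C(x,y)); C(x,y)\right) \]
is a composition of smooth maps and is smooth on $\M \times \M$. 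An envelope-theorem shortcut also gives the derivative explicitly as $\partial_C \Phi$ evaluated at the optimum, but this is not needed.

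The main obstacle is the nondegeneracy step. I must verify that the only degeneracy of the Hessian is the constant shift, and that the implicit function theorem is applied on the normalized slice rather than on the full space, where the Jacobian is singular. This reduces to the connectivity of the complete bipartite support graph, which holds because all $p_j > 0$, so every entry $p_j p_k e^{(\cdot)}$ is strictly positive.
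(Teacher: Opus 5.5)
Your proposal is correct and follows essentially the same route as the paper: you reduce to the finite-dimensional dual, fix a normalization that removes the $(f+\lambda\mathbf{1}, g-\lambda\mathbf{1})$ invariance, use strict concavity to get a unique maximizer and an invertible Jacobian of the first-order conditions, and conclude with the implicit function theorem. The differences are cosmetic. You normalize $g_n = 0$ where the paper uses $f_1 = 0$, you check the kernel of the Hessian explicitly where the paper invokes a strong-concavity norm, and you route the dependence on $(x,y)$ through the smooth cost matrix $C(x,y)$ rather than applying the implicit function theorem directly with parameters in $\M \times \M$.
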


The main idea is to use the implicit function theorem with the first-order optimality conditions (\ref{eq:schrodinger-1-1}) and (\ref{eq:schrodinger-2-1}).
The proof is similar to the proof of \cite[Theorem 2]{luise2018differential}, but in contrast to \cref{lemma:ot-beta-smooth}, \cite{luise2018differential} assumes that the locations of the Dirac delta functions are fixed, whereas the coefficients $p_j$ are allowed to vary.

The implicit function theorem approach has also been taken in \cite{Lavenant2025, carlier2024displacement}. However, in these works, the measures are not necessarily assumed to be discrete with finite support, and so the proofs become more technical. 
We will discuss this more in \cref{subsec:smooth-deform}, but here we first consider the particle system assumption (Assumption \ref{assump:manifold-hyp}) to illustrate the main ideas.

\begin{proof}[Proof of \cref{lemma:ot-beta-smooth}]
For convenience, define $\mathcal{F}_\beta \colon \R^n \times \R^n \times \M \times \M \to \R$ as
\begin{equation}
    \label{eq:lagrangian}
    \mathcal{F}_\beta (f, g, x, y) \vc = \sum_{j = 1}^n (f_j + g_j) p_j - \beta \sum_{j, k} \left(\exp \left( \frac{f_j + g_k - \|\iota_j (x) - \iota_k(y)\|^2}{\beta} \right)-1\right)p_jp_k,
\end{equation}
so that by (\ref{eq:ent-reg-ot-dual}), we have
\begin{equation}
\label{eq:g-beta}
G_\beta(x, y)  = \sup_{f, g \in \R^n} \mathcal{F}_\beta(f, g, x, y). 
\end{equation}
As discussed in \cref{subsec:background-sink-div}, this problem admits maximizers (the Schr\"odinger potentials), though we also have
\[ \mathcal{F}_\beta(f + \lambda \ind, g- \lambda \ind, x, y) = \mathcal{F}_\beta(f, g, x, y) \]
for all $\lambda \in \R$ and $x, y \in \M$, where $\ind$ denotes the $n$-dimensional vector of all ones. However, if we consider $\R^n \times \R^n / \sim$, where $(f, g) \sim (\widetilde{f}, \widetilde{g})$ if $\widetilde{f} = f + \lambda \ind, \widetilde{g} = g - \lambda \ind$ for some $\lambda \in \R$, 
then for each $x, y \in \M$, $\mathcal{F}_\beta(\cdot, \cdot, x, y)$ is strictly concave on $\R^n \times \R^n / \sim$.

For concreteness, let us identify $\R^n \times \R^n / \sim$ with the subspace $P = \{(f, g) \in\R^n \times \R^n | f_1 = 0\}$, in which case the strict concavity of $\mathcal{F}_\beta (\cdot, \cdot, x, y)$ on $P$ implies that the maximizer $f^*(x, y), g^*(x, y)$ of $\sup_{f, g \in P} \mathcal{F}_\beta(f, g, x, y)$ is unique in $P$ for each $x, y \in \M$. 
Define
\begin{equation}
\label{eq:t-beta-discrete}
    \mathcal{T}^{(\beta)} (f, g, x, y) \vc = \nabla_{f_2, \ldots, f_n, g_1, \ldots, g_n} \mathcal{F}_\beta (f, g, x, y), 
    \end{equation}
where $\nabla_{f_2, \ldots, f_n, g_1, \ldots, g_n}$ denotes the gradient in the $f_2, \ldots, f_n, g_1, \ldots, g_n$ directions.
For each $x, y \in \M$, using the first-order optimality conditions for (\ref{eq:g-beta}) and the strict concavity of $\mathcal{F}_\beta (\cdot, \cdot, x, y)$, $f^*(x, y), g^*(x, y) \in P$ are characterized by
\begin{equation}
\label{eq:first-order-optimality}
\mathcal{T}^{(\beta)}(f^*(x, y), g^*(x, y), x, y) = 0.
\end{equation}
Observe that we can write
the first-order optimality conditions for (\ref{eq:g-beta}) more explicitly as
\begin{align*}
    p_j - p_j \sum_k \exp \left( \frac{f_j + g_k - \| \iota_j(x) - \iota_k(y)\|^2}{\beta} \right)p_k &= 0 \\
    p_j - p_j \sum_k \exp \left( \frac{f_k + g_j - \| \iota_k(x) - \iota_j(y)\|^2}{\beta} \right)p_k &= 0,
\end{align*}
for each $j = 1, \ldots, n$, 
which, upon rearranging, are precisely the Schr\"odinger system (\ref{eq:schrodinger-1}) and (\ref{eq:schrodinger-2}).
For each $x, y \in \M$, when restricted to a bounded convex subset of $P$, $\mathcal{F}_\beta(\cdot, \cdot, x, y)$ is strongly concave with respect to the norm
\[ \vert\vert\vert (f, g) \vert\vert\vert \vc = \left( \sum_{j, k = 1}^n \|f_j + g_k\|^2 \right)^{1/2} ,\]
hence with respect to the Euclidean norm as well. (Here $\vert\vert\vert \cdot \vert\vert\vert$ is only a norm on $P$ — not on $\R^n \times \R^n$ — since we require $f_1 = 0$ for all $(f, g)  \in P$.)
Therefore for each $x, y \in \M$, the Jacobian of $\mathcal{T}^{(\beta)}$ at $f^*(x, y), g^*(x, y), x, y$ with respect to the $f_2, \ldots, f_n, g_1, \ldots, g_n$ variables
is invertible, so the implicit function theorem implies that $f^*$ and $g^*$ are smooth functions on a neighborhood of $(x, y)$. Hence $G_\beta$ is smooth on a neighborhood of $(x, y)$ for any $x, y \in \M$.
\end{proof}

\subsubsection{Smooth deformations of a measure}
\label{subsec:smooth-deform}
In this subsection, we consider measures parametrized by an open set $U \subset \R^m$ instead of $\M$, but our results can be extended to a manifold by passing to charts.

\manifoldhypdeform

\begin{proposition}
\label{prop:smoothness-of-OT-beta}
    Under Assumption \ref{assump:manifold-hyp-deform},
    \[ G_\beta(p, q) \vc = \mathrm{OT}_\beta(F(p), F(q)) \]
    is $C^k$ on $U\times U$. 

\end{proposition}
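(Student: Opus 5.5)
The plan is to mimic the proof of \cref{lemma:ot-beta-smooth}, but with the finite-dimensional unknowns $(f_j,g_k)$ replaced by functions $f,g\in C(X)$. We then apply a Banach-space implicit function theorem to the Schr\"odinger system (\ref{eq:schrodinger-1-1})--(\ref{eq:schrodinger-2-1}). The first step is to rewrite everything on the fixed reference measure $\mu$. Since $\mu_p=\Psi(p,\cdot)_\#\mu$, the potentials $f_{\mu_p,\mu_q}, g_{\mu_p,\mu_q}$ are determined by $\tilde f=f\circ\Psi(p,\cdot)$ and $\tilde g=g\circ\Psi(q,\cdot)$ as elements of $C(X)$. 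These satisfy, for all $x$ and $y$,
\[ \tilde f(x)=-\beta\log\int \exp\Big(\tfrac{\tilde g(y)-\|\Psi(p,x)-\Psi(q,y)\|^2}{\beta}\Big)d\mu(y),\]
\[\tilde g(y)=-\beta\log\int \exp\Big(\tfrac{\tilde f(x)-\|\Psi(p,x)-\Psi(q,y)\|^2}{\beta}\Big)d\mu(x).\]
To remove the additive-constant degeneracy, I impose the normalization $\int \tilde f\,d\mu=\int\tilde g\,d\mu$; alternatively one can work in the quotient $C(X)\times C(X)/\sim$. I then define $\mathcal{T}\colon C(X)\times C(X)\times U\times U\to C(X)\times C(X)$ as the difference of the two sides, modified to encode the normalization. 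With this, $G_\beta(p,q)=\int\tilde f\,d\mu+\int\tilde g\,d\mu$, since the exponential term in (\ref{eq:ent-reg-ot-dual}) vanishes at optimality.

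The second step is regularity of $\mathcal{T}$. In $(\tilde f,\tilde g)$, $\mathcal{T}$ is $C^\infty$ in the Fr\'echet sense: it is a composition of exponentials and integration against a bounded kernel, and $\log$ is applied to quantities bounded below because $X$ is compact and the cost is bounded. In $(p,q)$, $\mathcal{T}$ is $C^k$. This is because the cost $c_{p,q}(x,y)=\|\Psi(p,x)-\Psi(q,y)\|^2$, viewed as a map $U\times U\to C(X\times X)$, is $C^k$. That follows from the continuity of the partial derivatives (\ref{eq:psi-part-derivs}) on $U\times X$ together with uniform continuity on compact sets $K\times X$, $K\subset U$: difference quotients converge uniformly in $x$. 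Joint $C^k$ regularity of $\mathcal{T}$ then follows by the chain rule.

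The third step, which I expect to be the main obstacle, is invertibility of the partial derivative $D_{(\tilde f,\tilde g)}\mathcal{T}$ at the solution. Differentiating the Schr\"odinger system gives an operator of the form $(u,v)\mapsto(u+K v,\;v+K^*u)$, where $K$ is the Markov operator with kernel given by the density of the optimal coupling $\pi$. This operator has a positive continuous density relative to $\mu\otimes\mu$, bounded above and below. Its kernel in $C(X)^2$ consists of pairs with $u=-Kv$ and $v=-K^*u$, so $u=KK^*u$. Because $KK^*$ has a strictly positive kernel, a maximum-principle argument forces $u$ to be constant, hence $(u,v)=(c,-c)$; the normalization excludes this. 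Surjectivity follows from Fredholm theory: $K$ is compact on $C(X)$, since its kernel is uniformly continuous on compact $X\times X$, so the operator is identity plus compact and injectivity implies invertibility. Some care is needed to use the $\mu$-a.e. versus everywhere versions consistently and to handle the normalization row. As an alternative route, I would invoke the strong concavity argument as in \cref{lemma:ot-beta-smooth} on $L^2(\mu)$ and then bootstrap to $C(X)$.

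Finally, the Banach-space implicit function theorem yields $(p,q)\mapsto(\tilde f,\tilde g)\in C(X)^2$ as a $C^k$ map near each $(p_0,q_0)$. Since $(\tilde f,\tilde g)\mapsto\int\tilde f\,d\mu+\int\tilde g\,d\mu$ is a continuous linear map, $G_\beta$ is $C^k$ on $U\times U$. Uniqueness of the normalized potentials ensures that these local solutions agree with the global ones.
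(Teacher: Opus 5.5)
Your proposal is correct, and it takes a genuinely different route from the paper.

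\textbf{How the two routes differ.} The paper keeps the Schr\"odinger potentials as functions on the ambient set $X$ and lets the measures $\mu_p,\mu_q$ vary. Differentiating $\int g(\Psi(p,x))k_\beta(\Psi(p,x),y)\,d\mu(x)$ in $p$ costs a derivative of $g$. So the paper must:
\begin{enumerate}
\item work in $C^k(X)$, including the Whitney-extension remark;
\item prove the fairly laborious Lemma~\ref{lemma:T_beta-smooth};
\item cite \cite[Lemma 3.2]{carlier2024displacement} for invertibility of the linearization on $\widetilde{C}^k(X)$;
\item add Lemma~\ref{lemma:regularity-of-integrand} to control $f_{p,q}\circ\Psi(p,\cdot)$ before integrating against $\mu$.
\end{enumerate}
You pull back to the fixed measure $\mu$ with cost $c_{p,q}(x,y)=\|\Psi(p,x)-\Psi(q,y)\|^2$. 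This moves all $(p,q)$-dependence into a kernel that is $C^k$ as a map into $C(X\times X)$. Consequently:
\begin{enumerate}
\item $C(X)$ suffices, and there is no loss of derivatives;
\item joint $C^k$ regularity is a chain rule through a continuous bilinear integral operator;
\item $G_\beta$ is a bounded linear functional of the unknowns, so you need no analogue of Lemma~\ref{lemma:regularity-of-integrand};
\item you prove invertibility yourself, via the maximum principle for the Markov operator $KK^*$ and the Fredholm alternative, with $K$ compact by Arzel\`a--Ascoli.
\end{enumerate}
What each buys: the paper's route gives $C^k$ dependence of the actual potentials $f_{p,q},g_{p,q}$ in $\widetilde{C}^k(X)$ on the ambient space. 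It also sits in the framework of \cite{carlier2024displacement,Lavenant2025}, which is not tied to pushforwards. Yours exploits the pushforward structure to give a shorter, self-contained proof of exactly what the proposition asks.

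\textbf{One point in your third step needs to be set up carefully.} The linearization $L(u,v)=(u+Kv,\,v+K^*u)$ has range exactly $\{(a,b):\int a\,d\mu=\int b\,d\mu\}$. Suppose you encode the normalization by appending the scalar equation $\int\tilde f\,d\mu-\int\tilde g\,d\mu=0$ to the full system. Then the resulting linearization is injective but not surjective, and the implicit function theorem does not apply. There are two clean fixes:
\begin{enumerate}
\item Pass to the quotient in both domain and target, as the paper does. Injectivity then also requires showing that $L(u,v)=(\lambda,-\lambda)$ forces $\lambda=0$, which follows by integrating both components against $\mu$.
\item Replace an equation rather than append one, e.g.\ use $(\mathcal{T}_1,\ \mathcal{T}_2+\int\tilde f\,d\mu-\int\tilde g\,d\mu)$. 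Its linearization is still identity plus compact, and it is injective by a short variant of your maximum-principle argument. Its zeros are genuine normalized solutions, since integrating the two Schr\"odinger equations shows the extra constant vanishes.
\end{enumerate}
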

\begin{remark}
    In Assumption \ref{assump:divergence}, we assumed that $D$ is smooth (i.e., infinitely differentiable) on a neighborhood of the diagonal $\Xi_\M \subset \M \times \M$, but with a careful reading one can track the precise number of derivatives required for \cref{prop:fourth-order-error} and \cref{thm:glconv}. Alternatively, one can assume that Assumption \ref{assump:manifold-hyp-deform} holds for all $k \in \Z_{> 0}$, upon which \cref{prop:smoothness-of-OT-beta} implies that $D(p, q) \vc = S_\beta(F(p), F(q))$ is infinitely differentiable on $U \times U$.
\end{remark}
We will prove \cref{prop:smoothness-of-OT-beta} at the end of this subsection, after first establishing a few lemmata.
Define (as in \cite[Definition 2.1]{carlier2024displacement}) the map $T^{(\beta)} \colon C(X)^2 \times \mathcal{P}(X) ^2 \to C(X)^2$ by
\begin{equation}
\label{eq:t-beta-def-1}
T^{(\beta)}_1(f, g, \mu, \nu)(x)\vc = \log \left( \int \exp \left(\frac{f(x) + g(y) - \|x-y\|^2}{\beta} \right) d\nu(y) \right)
\end{equation}
\begin{equation}
\label{eq:t-beta-def-2}
T^{(\beta)}_2(f, g, \mu, \nu)(x)\vc = \log \left( \int \exp \left(\frac{f(y) + g(x) - \|x-y\|^2}{\beta} \right) d\mu(y) \right)
\end{equation}
for $f, g \in C(X)$ and $\mu, \nu \in \mathcal{P}(X)$. 

\begin{lemma}
\label{lemma:T_beta-smooth}
    Under Assumption \ref{assump:manifold-hyp-deform},
    the map 
    \[(f, g, p, q) \mapsto T^{(\beta)}(f, g, F(p), F(q)) \]
    is a $C^k$ map from $C^k(X) \times C^k(X) \times U \times U$ to $C^k(X) \times C^k(X)$.

\end{lemma}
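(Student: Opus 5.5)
The plan is to write the map explicitly using the pushforward and reduce everything to differentiating a parameter-dependent integral against the fixed measure $\mu$. Since $F(q)=\Psi(q,\cdot)_\#\mu$, we have
\[
T^{(\beta)}_1(f,g,F(p),F(q))(x) = \frac{f(x)}{\beta} + \log\left( \int \exp\left(\frac{g(\Psi(q,z)) - \|x-\Psi(q,z)\|^2}{\beta}\right) d\mu(z)\right),
\]
and similarly for $T^{(\beta)}_2$ with the roles of $(f,p)$ and $(g,q)$ exchanged. The term $f(x)/\beta$ is a continuous linear map of $f$, hence smooth as a map $C^k(X)\to C^k(X)$. The composition with $\log$ is harmless: the integral $I(g,q)(x)$ is bounded below by $\exp(-(\|g\|_\infty+\mathrm{diam}(X)^2)/\beta)>0$, uniformly on bounded sets. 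Post-composition with the smooth function $\log$ on a set where the argument is bounded away from $0$ is therefore a smooth operation on $C^k(X)$, for instance by the chain rule in the Banach algebra $C^k(X)$. So it suffices to show that
\[
(g,q)\mapsto I(g,q), \qquad I(g,q)(x)=\int \exp\bigl(\beta^{-1}(g(\Psi(q,z))-\|x-\Psi(q,z)\|^2)\bigr)\,d\mu(z),
\]
is $C^k$ from $C^k(X)\times U$ into $C^k(X)$.

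I would proceed in three steps.

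First, handle the $x$-dependence. The integrand is smooth in $x$, and the $x$-derivatives are polynomial in $x$ and $\Psi(q,z)$ times the exponential. These are bounded uniformly because $X$ is compact. So $I(g,q)\in C^\infty(X)$, and $\partial_x^\alpha I$ is an integral of the same type, with the exponential multiplied by a polynomial $P_\alpha(x,\Psi(q,z))$.

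Second, handle the $(g,q)$-dependence. Consider the map $\Phi\colon (g,q)\mapsto \bigl[(x,z)\mapsto \beta^{-1}(g(\Psi(q,z))-\|x-\Psi(q,z)\|^2)\bigr]$, viewed as taking values in functions continuous in $z$ and $C^k$ in $x$, with the sup norm over $z$. The key claim is that $\Phi$ is $C^k$. Linearity in $g$ is clear. For the $q$-derivatives, $g\circ\Psi(q,\cdot)$ is differentiated by the chain rule, which uses the derivatives of $g$ up to order $k$ and the $q$-partials of $\Psi$ up to order $k$. Here the convexity of $X$ is used: the mean value theorem applies to $g$ along segments inside $X$, and $\Psi$ maps into $X$. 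The uniform continuity of these derivatives on compacts of $U\times X$ turns the pointwise derivatives into Fréchet derivatives in the sup norm. After that, $\exp$ is composed, and integration against $\mu$ is a bounded linear operation that commutes with the derivatives. Dominated convergence is justified by compactness of $X$ and local compactness of $U$.

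Third, to reach the $C^k(X)$-norm in $x$, apply the same reasoning to each $\partial_x^\alpha I$ with $|\alpha|\le k$. Mixed derivatives in $x$ and $(g,q)$ commute because everything is continuous.

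The main obstacle is a loss of derivatives in the second step. The map $(g,q)\mapsto g\circ\Psi(q,\cdot)$ is not $C^k$ from $C^k$ into $C^0$ in a naive way, because differentiating $j$ times in $q$ consumes $j$ derivatives of $g$. The map $g\mapsto g$ is only continuous, not uniformly differentiable in $q$, at top order. So the $k$-th Fréchet derivative exists and is continuous only because we measure the output in the sup norm over $z$. I would carefully verify continuity of the $k$-th derivative $\partial_q^k(g\circ\Psi)$, which involves $g^{(k)}(\Psi(q,z))$ times products of $\partial_q\Psi$. Continuity in $(g,q)$ follows from $\|g^{(k)}-\tilde g^{(k)}\|_\infty$ control together with uniform continuity of $g^{(k)}$ on compact $X$. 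Crucially, the $x$-regularity of the output never requires derivatives of $g$, since $g$ enters only through $z$. So there is no loss in the $x$-direction, and the output lies in $C^k(X)$ as claimed.
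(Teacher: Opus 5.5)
Your proposal is correct, but it organizes the argument differently from the paper. The paper also strips off the $f/\beta$ term and the logarithm. It then (implicitly) moves the exponential onto $g$ via $g\mapsto e^{g/\beta}$, so the only object left is $A_\beta(g,p)(y)=\int g(\Psi(p,x))\,k_\beta(\Psi(p,x),y)\,d\mu(x)$, which is \emph{linear} in $g$. It checks by hand, using the mean value theorem, Taylor's theorem and uniform continuity on compact sets, that $D_1A_\beta$ and $D_2A_\beta$ exist and are continuous; note that $D_1A_\beta$ is just $A_\beta(\cdot,p)$, continuous in operator norm. It then appeals to Lang's criterion (partials of class $C^{k-1}$ give a $C^k$ map) and only sketches orders beyond the second.

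You instead keep $g$ in the exponent and write $I$ as a composition of standard maps:
\begin{itemize}
\item the composition operator $(g,q)\mapsto g\circ\Psi(q,\cdot)$, measured in the sup norm over $z$;
\item the $q$-dependence of $\|x-\Psi(q,z)\|^2$;
\item $\exp$ on the Banach algebra of functions continuous in $z$ and $C^k$ in $x$;
\item bounded linear integration against $\mu$.
\end{itemize}

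The analytic content is the same in both routes. Derivatives in the output variable fall only on the Gaussian factor. Each parameter derivative consumes one derivative of $g$, and this loss is harmless because it is measured in the sup norm over the integration variable.

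Your factorization concentrates that issue in a single lemma about $g\circ\Psi(q,\cdot)$. It then obtains every order up to $k$ at once from the chain rule, which is tidier than the paper's order-by-order verification. The cost is setting up the mixed space and proving that lemma carefully. This includes the mean value and Taylor formulas at points of $\partial\Omega$. The paper gets these from a Whitney extension. With your convexity argument, you need a short approximation by interior segments plus continuity of the derivatives up to the boundary.
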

\begin{remark}
\label{rmk:c-k-whitney-extension}
    Here we use $C^k(X)$ to refer to the space of functions in $C^k(\Omega)$ whose derivatives of order $\leq k$ have continuous extensions to $X = \overline{\Omega}$, equipped with the norm
    \[ \|f\|_{C^k(X)} \vc = \sup_{|\alpha| \leq k, x \in \Omega} \left|\partial^\alpha f(x)\right|, \]
    which makes it a Banach space. If one only assumes that $X$ is compact and convex, then \cite{carlier2024displacement} (which we will use in the proof of \cref{lemma:schro-potentials-regularity}) instead defines $C^k(X)$ as the space of functions on $X$ with a $C^k$ extension on $\R^d$ with the norm 
    \[ |||f|||_{C^k(X)} \vc =\inf_{\widetilde{f}}\sup_{|\alpha| \leq k, x\in \R^d} \left|\partial^\alpha\widetilde{f}(x)\right|,\] where the infimum is taken over all $C^k$ extensions of $f$ to $\R^d$.
    In our setting, where $X = \overline{\Omega}$ with $\Omega \subset \R^d$ open, bounded and convex, these two definitions are equivalent (up to equivalence of norms) by Whitney's extension theorem \cite{whitney1934functions} (see also \cite[Theorem 2.3.6 and proof of Theorem 2.3.10]{hormander-I}).
    
In particular, since any $f \in C^k(X)$ has a $C^k$ extension $\widetilde{f}$ on an open neighborhood of $X$, under Assumption \ref{assump:manifold-hyp-deform} we can apply the chain rule to $p \mapsto f(\Psi(p, x))$ for each $x \in X$ and $f \in C^k(X)$.
\end{remark}
We leave the proof of \cref{lemma:T_beta-smooth} for Appendix \ref{sec:sink-div-app}, though we note that
this lemma is similar to \cite[Lemma 3.4]{carlier2024displacement} and \cite[Lemma 3.13]{Lavenant2025}. 
\cite[Lemma 3.4]{carlier2024displacement} is primarily concerned with families of measures arising from interpolations, whereas
\cite[Lemma 3.13]{Lavenant2025} assumes different conditions, which are satisfied, for instance, by $\mu_t = \Psi(t, \cdot)_{\#}\mu$ for $t \in (-\delta, \delta)$, for some $\delta > 0$ and $\Psi$ with only one continuous derivative with respect to the first variable (what they call \textit{horizontal perturbations}) \cite[Remark 3.2]{Lavenant2025}. Therefore \cite[Lemma 3.13]{Lavenant2025} only proves the continuous differentiability of $T^{(\beta)}$. We assume more regularity of $\Psi$ with respect to the first variable (when $k > 1$) to obtain additional regularity of $T^{(\beta)}$.

Following the approach in \cref{lemma:ot-beta-smooth}, we now apply the implicit function theorem.

\begin{lemma}
\label{lemma:schro-potentials-regularity}
    Assume Assumption \ref{assump:manifold-hyp-deform}.
    For each $p, q \in U$, let $f_{p, q}$ and $g_{p, q}$ denote the Schr\"odinger potentials between $F(p) = \Psi(p, \cdot)_{\#}\mu$ and $F(q) = \Psi(q, \cdot)_{\#}\mu$. 
    Then $p, q \mapsto f_{p, q}, g_{p, q}$ is a $C^{k}$ function from $U \times U$ to $\widetilde{C}^k(X) \vc = C^k(X) \times C^k(X)/\sim$, where 
    \[(f, g) \sim (\widetilde{f}, \widetilde{g}) \Leftrightarrow \widetilde{f} = f + \lambda, \widetilde{g} = g - \lambda \text{ for some }\lambda \in \R.\]
    Here we equip $\widetilde{C}^k(X)$ with the quotient norm, so that it is a Banach space.
\end{lemma}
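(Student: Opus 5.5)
We apply the implicit function theorem in Banach spaces, as in the proof of \cref{lemma:ot-beta-smooth}, with the finite-dimensional gradient map replaced by the map $T^{(\beta)}$ of (\ref{eq:t-beta-def-1})--(\ref{eq:t-beta-def-2}).

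\textbf{Step 1: the equation.} Define
\[ \Phi(f, g, p, q) \vc = \bigl(T^{(\beta)}_1(f, g, F(p), F(q)),\; T^{(\beta)}_2(f, g, F(p), F(q))\bigr). \]
By (\ref{eq:schrodinger-1-1}) and (\ref{eq:schrodinger-2-1}), the Schr\"odinger potentials are characterized by $\Phi(f_{p,q}, g_{p,q}, p, q) = 0$. We must first check that $f_{p,q}, g_{p,q} \in C^k(X)$. This holds because the right-hand sides of (\ref{eq:schrodinger-1-1})--(\ref{eq:schrodinger-2-1}) are log-integrals of $e^{-\|x-y\|^2/\beta}$ against finite measures, so they are $C^\infty$ in $x$ on $X$.

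\textbf{Step 2: passing to the quotient.} Replacing $(f, g)$ by $(f+\lambda, g-\lambda)$ leaves $\Phi$ unchanged, so $\Phi$ descends to a map on $\widetilde{C}^k(X) \times U \times U$. By \cref{lemma:T_beta-smooth}, this map is $C^k$. To obtain a well-posed implicit equation, it is convenient to fix a normalization, e.g.\ $f(x_0) = 0$ for some $x_0 \in X$; this identifies the quotient with a closed complemented subspace. The target space should be treated as the image space of the linearization, exactly as the coordinate $f_1$ was dropped in (\ref{eq:t-beta-discrete}).

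\textbf{Step 3: invertibility of the linearization (main obstacle).} We need the partial differential $D_{(f,g)}\Phi$ at $(f_{p,q}, g_{p,q}, p, q)$ to be an isomorphism from $\widetilde{C}^k(X)$ onto the appropriate subspace. Computing it gives
\[ (\dot f, \dot g) \mapsto \Bigl(\tfrac{1}{\beta}\bigl(\dot f + K\dot g\bigr),\; \tfrac{1}{\beta}\bigl(\dot g + K^*\dot f\bigr)\Bigr), \]
where $K$ and $K^*$ are the averaging operators against the conditional kernels of the optimal coupling $\pi_{p,q}$ from (\ref{eq:optimal-coupling-form}). Since these kernels are smooth in $x$, $K$ and $K^*$ map $C(X)$ compactly into $C^k(X)$.

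The plan is then:
\begin{enumerate}
\item Write the operator as $I + (\text{compact})$, so that Fredholm alternative applies.
\item Show injectivity modulo constants. If $\dot f = -K\dot g$ and $\dot g = -K^*\dot f$, then integrating against $\pi_{p,q}$ gives $\int (\dot f(x) + \dot g(y))^2\, d\pi_{p,q} = 0$. Hence $\dot f(x) + \dot g(y) = 0$ on the support of $\pi_{p,q}$, and therefore $(\dot f, \dot g) \sim 0$.
\end{enumerate}

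This is the infinite-dimensional analogue of the strong concavity argument in \cref{lemma:ot-beta-smooth}. Rather than redoing it, I would first try to cite the invertibility result of \cite{carlier2024displacement} (their analysis of the linearized Schr\"odinger system in $C^k$), which is precisely why Remark~\ref{rmk:c-k-whitney-extension} sets up their notion of $C^k(X)$.

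\textbf{Step 4: conclusion.} The implicit function theorem now gives, near each $(p,q)$, a $C^k$ map $(p', q') \mapsto [(f, g)]$ solving $\Phi = 0$. By uniqueness of the Schr\"odinger potentials up to constants, this map coincides with $[(f_{p',q'}, g_{p',q'})]$. Since $(p,q) \in U \times U$ was arbitrary, the map $(p, q) \mapsto (f_{p,q}, g_{p,q})$ is $C^k$ into $\widetilde{C}^k(X)$.
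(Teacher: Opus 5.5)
Your overall plan (implicit function theorem in Banach spaces, $C^k$ regularity from \cref{lemma:T_beta-smooth}, invertibility of the linearization from \cite{carlier2024displacement}) is the paper's. However, your handling of the \emph{codomain} in Steps 2--4 has a genuine gap.

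You propose to treat the target as ``the image space of the linearization''. By your own Fredholm computation, that image is $R_{p,q} = \{(a,b) : \int a\, d\mu_p = \int b\, d\mu_q\}$. This is a codimension-one subspace that moves with $(p,q)$, and the nonlinear map $\Phi$ does not take values in it. What $\Phi$ always satisfies is the \emph{nonlinear} constraint $\int e^{\Phi_1}\, d\mu_p = \int e^{\Phi_2}\, d\mu_q$, since both sides equal $\iint \exp((f(x)+g(y)-\|x-y\|^2)/\beta)\, d\mu_p(x)\, d\mu_q(y)$. If you perturb $f_{p,q}$ by a function that is non-constant on $\mathrm{supp}\,\mu_p$, Jensen gives $\int \Phi_2\, d\mu_q > \int \Phi_1\, d\mu_p$. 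So there is no fixed Banach space on which your reduced equation is posed.

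The analogy with dropping $\partial_{f_1}$ in (\ref{eq:t-beta-discrete}) does not transfer for free. There, the dropped equation is recovered from the identity $\sum_j \partial_{f_j}\mathcal{F}_\beta = \sum_k \partial_{g_k}\mathcal{F}_\beta$, and you give no infinite-dimensional counterpart. Hence in Step 4 you do not yet know that the local solution produced by the implicit function theorem solves the full Schr\"odinger system, so uniqueness of the potentials cannot be invoked.

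The paper repairs this by quotienting the target by the same relation. Composing with the quotient map turns $\Phi$ into a $C^k$ map $\widetilde{C}^k(X)\times U\times U\to\widetilde{C}^k(X)$. Its $(f,g)$-derivative at the potentials is an isomorphism of $\widetilde{C}^k(X)$; this is \cite[Lemma 3.2]{carlier2024displacement}. In your terms, it holds because the kernel is spanned by $(1,-1)$, and $(1,-1)\notin R_{p,q}$ since $\int 1\, d\mu_p = 1 \neq -1 = \int (-1)\, d\mu_q$.

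The step you are missing is that a zero in the quotient is a genuine zero. Suppose $T^{(\beta)}_1\equiv\lambda$ and $T^{(\beta)}_2\equiv-\lambda$. Integrating $e^{T^{(\beta)}_1}$ against $\mu_p$ and $e^{T^{(\beta)}_2}$ against $\mu_q$ gives $e^{\lambda}=e^{-\lambda}$, so $\lambda=0$. Only then does Step 4 go through.

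A smaller point concerns your injectivity argument. The conclusion $\dot f(x)+\dot g(y)=0$ on $\mathrm{supp}\,\pi_{p,q}=\mathrm{supp}\,\mu_p\times\mathrm{supp}\,\mu_q$ only makes $\dot f,\dot g$ constant on the supports, which need not be all of $X$. To conclude $(\dot f,\dot g)\sim 0$, you must use $\dot f=-K\dot g$ and $\dot g=-K^*\dot f$ on all of $X$. Here $K$ and $K^*$ average against probability kernels equivalent to $\mu_q$ and $\mu_p$ respectively, which forces the constants to propagate to all of $X$.
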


\begin{proof}
For each $p, q \in U$, the Schr\"odinger system (\ref{eq:schrodinger-1-1}) and (\ref{eq:schrodinger-2-1}) for $F(p) = \Psi(p, \cdot)_{\#}\mu$ and $F(q) = \Psi(q, \cdot)_{\#}\mu$ can be rewritten as
\begin{equation}
    \label{eq:t-beta-zero}
    T^{(\beta)}(f_{p, q}, g_{p, q}, F(p), F(q)) = 0.
\end{equation}
Using the definition of $T^{(\beta)}$ and \cref{lemma:T_beta-smooth}, 
 $(f, g, p, q) \mapsto T^{(\beta)}(f, g, F(p), F(q))$ is well-defined as a map 
from $\widetilde{C}^k(X) \times U \times U$ to $\widetilde{C}^k(X)$, and moreover it is $C^k$.
In addition, 
by \cite[Lemma 3.2]{carlier2024displacement} (see also \cite{gonzalez2022weak}), for any $\mu, \nu \in\mathcal{P}(X)$,
$D_{f, g} T^{(\beta)}(f_{\mu, \nu}, g_{\mu,\nu}, \mu, \nu)$ is a Banach space isomorphism from $\widetilde{C}^k(X)$ to itself. Here $D_{f, g}$ denotes the derivative with respect to the $\widetilde{C}^k(X)$ variable. 
Although we stated (\ref{eq:t-beta-zero}) as an equivalence of functions in $C^k(X) \times C^k(X)$, if for some $f, g \in C^k(X)$, $\mu, \nu \in \mathcal{P}(X)$ and $\lambda \in \R$
\[\
T^{(\beta)}_1(f, g, \mu, \nu)(x) = \log \left( \int \exp \left(\frac{f(x) + g(y) - \|x-y\|^2}{\beta} \right) d\nu(y) \right) = \lambda
\]
\[
T^{(\beta)}_2(f, g, \mu, \nu)(x) = \log \left( \int \exp \left(\frac{f(y) + g(x) - \|x-y\|^2}{\beta} \right) d\mu(y) \right) = -\lambda
\]
for all $x \in X$, then
\[ e^\lambda = \int \int \exp \left(\frac{f(x) + g(y) - \|x-y\|^2}{\beta} \right) d\nu(y)d\mu(x) = e^{-\lambda}, \]
so $\lambda$ must be 0. Therefore, even if we take (\ref{eq:t-beta-zero}) as an equivalence of functions in $\widetilde{C}^k(X)$, it characterizes the Schr\"odinger potentials for $F(p) = \Psi(p, \cdot)_{\#}\mu$ and $F(q) = \Psi(q, \cdot)_{\#}\mu$.
The lemma statement then follows from the implicit function theorem for Banach spaces \cite[Theorem 5.9]{lang1999diffgeo}.

\end{proof}

By \cref{lemma:schro-potentials-regularity} and the regularity of $\Psi$ in Assumption \ref{assump:manifold-hyp-deform}, we obtain the following:

\begin{lemma}
\label{lemma:regularity-of-integrand}
    Following the notation from \cref{lemma:schro-potentials-regularity}, under Assumption \ref{assump:manifold-hyp-deform}, 
    \[ p, q \mapsto f_{p, q}(\Psi(p, \cdot)), g_{p,q}(\Psi(q, \cdot))\]
    is a $C^k$ map
    from $U \times U$ to $\widetilde{C}(X) \vc = C(X) \times C(X) /\sim$. As with $\widetilde{C}^k(X)$, the space $\widetilde{C}(X)$ is equipped with the quotient norm, making it a Banach space.
\end{lemma}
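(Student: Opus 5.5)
The plan is to write the map as a composition of a $C^k$ map into $\widetilde{C}^k(X)$ with a $C^k$ "evaluation along $\Psi$" map. By \cref{lemma:schro-potentials-regularity}, $(p,q) \mapsto [f_{p,q}, g_{p,q}]$ is $C^k$ from $U \times U$ into $\widetilde{C}^k(X)$. Define
\[
\Phi \colon C^k(X) \times C^k(X) \times U \times U \to C(X) \times C(X), \qquad \Phi(f, g, p, q) \vc= \bigl(f(\Psi(p, \cdot)), \, g(\Psi(q, \cdot))\bigr).
\]
$\Phi$ respects $\sim$: adding $\lambda$ to $f$ and $-\lambda$ to $g$ adds $\lambda$ and $-\lambda$ to the outputs. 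Hence it descends to a map $\widetilde{\Phi} \colon \widetilde{C}^k(X) \times U \times U \to \widetilde{C}(X)$. The desired map is then $(p,q) \mapsto \widetilde{\Phi}([f_{p,q}, g_{p,q}], p, q)$, and the chain rule for Banach-space maps gives $C^k$ regularity once $\widetilde{\Phi}$ is shown to be $C^k$. Since the quotient map is a bounded linear surjection and $\Phi$ is invariant along the fibers, it suffices to show that $\Phi$ is $C^k$.

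To show $\Phi$ is $C^k$, note that $\Phi$ is linear in $(f,g)$. It therefore suffices to treat the single map $E(f, p) = f \circ \Psi(p, \cdot)$ from $C^k(X) \times U$ to $C(X)$. I would verify the following by induction on $j \le k$:
\[
D^j E(f, p)\bigl[(h_1, v_1), \ldots, (h_j, v_j)\bigr]
\]
is given by the formal chain-rule (Faà di Bruno) expression. Each term has the form
\[
(\partial^\alpha f \text{ or } \partial^\alpha h_i)(\Psi(p, x)) \cdot \prod \partial_p^{\gamma}\Psi(p, x)[v\text{'s}],
\]
with $|\alpha| \le j$, and with at most one $h_i$-factor appearing (linearly) in each term. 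By \cref{rmk:c-k-whitney-extension}, every $f \in C^k(X)$ has a $C^k$ extension near $X$, so these pointwise chain-rule formulas are legitimate. Assumption \ref{assump:manifold-hyp-deform} gives continuity of $\partial_p^\gamma \Psi$ for $|\gamma| \le k$; I take this to include the lower orders, as is implicit in the paper.

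The main obstacle is upgrading pointwise differentiability to Fréchet differentiability in the sup norm on $C(X)$, uniformly in $x \in X$. The issue is the top-order term. The $j$-th derivative involves $\partial^\alpha f$ with $|\alpha| = j \le k$. Differentiating once more in $p$ would need $\partial^{k+1} f$ when $j = k$, which we do not have. So for the $k$-th derivative we only prove continuity, not differentiability. This suffices: continuity of $D^k E$ in operator norm follows from uniform continuity of $\partial^\alpha f$ for $|\alpha| \le k$ on the compact set $X$, together with uniform continuity of $\partial_p^\gamma\Psi$ on $K \times X$ for compact neighborhoods $K \subset U$. For $j < k$, Fréchet differentiability of $D^j E$ follows from Taylor's theorem with integral remainder in $p$. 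The remainder is controlled by moduli of continuity of $\partial^{\alpha} f$ (with $|\alpha| \le j+1 \le k$) and of $\partial_p^\gamma \Psi$. These are uniform in $x$ by compactness, and convexity of $X$ keeps the segments between $\Psi(p,x)$ and $\Psi(p+v,x)$ inside $X$. The dependence on the $h_i$ is linear and bounded by $\|h_i\|_{C^k}$, which handles the $f$-directions. Combining these facts gives $\Phi \in C^k$, and the composition argument above finishes the proof.
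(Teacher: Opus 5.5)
Your proposal is correct. It rests on the same factorization as the paper: the $C^k$ potential map $(p,q)\mapsto[f_{p,q},g_{p,q}]$ of \cref{lemma:schro-potentials-regularity}, followed by evaluation along $\Psi$. The mechanics, however, differ.

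The paper avoids any smoothness statement on an infinite-dimensional domain. It introduces $H(p,q,r,s)=\bigl(f_{p,q}(\Psi(r,\cdot)),\,g_{p,q}(\Psi(s,\cdot))\bigr)$ on $U^4$ and proceeds as follows:
\begin{enumerate}
\item it gets the $(p,q)$-partials by precomposing $D\widetilde{H}(p,q)$ with $(\Psi(r,\cdot),\Psi(s,\cdot))$, using that composition is linear in the potentials;
\item it gets the $(r,s)$-partials from the pointwise chain rule and uniform continuity on compacts;
\item it checks that these partials are continuous and restricts to $r=p$, $s=q$.
\end{enumerate}
Since $U^4$ is finite-dimensional, continuous partials up to order $k$ suffice. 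Every estimate is taken along the single family $(p,q)\mapsto f_{p,q}$, so nothing ever has to be uniform over a ball of $C^k(X)$.

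You instead prove Fr\'echet $C^k$-regularity of $\Phi$ on $C^k(X)^2\times U^2$ and apply the Banach-space chain rule. This gives a cleaner, reusable statement: composition with $\Psi(p,\cdot)$ is $C^k$ from $C^k(X)\times U$ to $C(X)$, losing one derivative of $f$ per derivative in $p$. It also makes explicit, via Fa\`a di Bruno, the higher-order bookkeeping that the paper dismisses as ``similar''.

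The cost is that you need operator-norm estimates uniform over the unit ball of $C^k(X)$ in the $h_i$-directions. There, ``linear and bounded by $\|h_i\|_{C^k}$'' is not the fact that does the work. What matters is that the $h_i$-terms of $D^jE$ involve at most $j-1\le k-1$ derivatives of $h_i$, which is sharper than your uniform bound $|\alpha|\le j$. So one derivative is always left over to give a Lipschitz modulus in $p$ that is uniform over $\|h_i\|_{C^k}\le 1$. You need this both for operator-norm continuity of $D^kE$ and for differentiability of $D^jE$ with $j<k$. If an $h_i$-term carried $k$ derivatives, operator-norm continuity of $D^kE$ would fail.

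With this made explicit, your argument is complete. The passage from $\Phi$ to $\widetilde{\Phi}$ is harmless, for example via the bounded linear section $[f,g]\mapsto(f-f(x_0),\,g+f(x_0))$ of the quotient map.
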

As with \cref{lemma:T_beta-smooth}, the proof involves checking that partial derivatives exist and are continuous with respect to appropriate norms; we leave the proof for Appendix \ref{sec:sink-div-app}.
Using \cref{lemma:regularity-of-integrand}, we can now prove \cref{prop:smoothness-of-OT-beta}.

\begin{proof}[Proof of \cref{prop:smoothness-of-OT-beta}]
    Under Assumption \ref{assump:manifold-hyp-deform}, we have
    \[ G_\beta(p, q) = \mathrm{OT}_\beta(F(p), F(q)) = \int f_{p, q} d\mu_p + \int g_{p, q} d\mu_q = \int f_{p, q}(\Psi(p, x)) + g_{p, q}(\Psi(q, x)) d\mu(x). \]
    Although $f_{p, q}, g_{p,q}$ is technically an equivalence class (as an element of $C(X) \times C(X) /\sim$),  the equation above holds for any representative. 
    Since $p, q \mapsto f_{p, q}(\Psi(p, \cdot)), g_{p,q}(\Psi(q, \cdot))$ is $C^k$ from $U \times U$ to $C(X) \times C(X) / \sim$ by \cref{lemma:regularity-of-integrand}, $G_\beta$ is $C^k$ on $U \times U$.
\end{proof}

\subsection{Non-degeneracy}
\label{subsec:non-degen}
In this section, we discuss sufficient conditions for the non-degeneracy condition (5. in Assumption \ref{assump:divergence}) to be satisfied. For this section, unless otherwise stated, $X = \overline{\Omega}$ is the closure of a bounded open set $\Omega \subset \R^d$, $\mu \in \mathcal{P}(X)$ and $j \in \Z_{\geq 0}$. 

To check whether the non-degeneracy condition is satisfied, we use results on the Hessian of the Sinkhorn divergence from \cite{Lavenant2025}, which involve reproducing kernel Hilbert spaces (RKHS).
We recall some facts from RKHS theory but refer to other references, e.g., \cite{steinwart2008support, muandet2017kernel}, for a more thorough exposition.
In what follows, $\mathcal{H}_k$ denotes the RKHS with the symmetric, positive definite kernel $k \colon \mathcal{X} \times \mathcal{X} \to \R$, where the input set $\mathcal{X}$ will be $X$ or $\R^n$, depending on context. For this paper, we will just work with RKHSs of real-valued functions.
Recall that by the Moore-Aronzajn theorem \cite{aronszajn1950theory}, whenever $k$ is symmetric and positive definite, there exists a RKHS $\mathcal{H}_k$ with kernel $k$ and it is unique.

A popular method in statistics and machine learning is to embed a probability measure $\mu$ into $\mathcal{H}_k$ via the map
\[ \mu \mapsto \int k(x, \cdot) d\mu(x), \]
known as the \emph{kernel mean embedding} \cite{ smola2007hilbert, muandet2017kernel}. By the Riesz representation theorem, the right-hand side is indeed in $\mathcal{H}_k$ whenever $f \mapsto \int f d\mu$ is a bounded linear functional on $\mathcal{H}_k$.
More generally, we follow \cite{simon2018kernel} and define the kernel mean embedding for any element of $\mathcal{H}_k^*$:

\begin{definition}[Kernel mean embedding]
    For a symmetric, positive definite kernel $k \colon \mathcal{X} \times \mathcal{X} \to \R$ and its corresponding RKHS $\mathcal{H}_k$, define the \emph{kernel mean embedding} $H_k \colon \mathcal{H}_k^* \to \mathcal{H}_k$ by
    \[ H_k[\nu](y) \vc = \langle \nu, k(\cdot, y)\rangle \]
    for each $y \in \mathcal{X}$ and $\nu \in \mathcal{H}_k^*$. One can check that $H_k[\nu] \in \mathcal{H}_k$ for each $\nu \in \mathcal{H}_k^*$ by observing that it is precisely the representation $\psi_\nu \in \mathcal{H}_k$ of $\nu \in \mathcal{H}_k^*$ given by the Riesz representation theorem:
    \[ \psi_\nu(y) = \langle \psi_\nu, k(\cdot, y) \rangle_{\mathcal{H}_k} =  \langle \nu, k(\cdot, y)\rangle  = H_k[\nu](y). \]
\end{definition}

As in \cite{Lavenant2025}, we are interested in RKHSs with the self-transport kernel:
\begin{definition}[Self-transport kernel]
    For $\mu \in \mathcal{P}(X)$, the \emph{self-transport kernel} $k_\mu \colon X \times X \to \R$ is given by
    \[
        k_\mu(x, y) \vc = \exp \left( \frac{f_{\mu, \mu}(x) + g_{\mu, \mu}(y) - \|x-y\|^2}{\beta} \right),
    \]
    where $f_{\mu, \mu}, g_{\mu, \mu}$ are Schr\"odinger potentials for $(\mu, \mu)$ (with regularization parameter $\beta > 0$). Following \cite{Lavenant2025}, for the rest of this section, we use the convention $f_{\mu, \mu} = g_{\mu, \mu}$, so that we can write
    \[
        k_\mu(x, y) = \exp \left( \frac{f_{\mu, \mu}(x) + f_{\mu, \mu}(y) - \|x-y\|^2}{\beta} \right).
    \]
\end{definition}

As the Gaussian kernel is a positive definite kernel, so is $k_\mu$ for each $\mu \in \mathcal{P}(X)$. As in \cite{Lavenant2025}, we will use $\mathcal{H}_\mu$ (instead of $\mathcal{H}_{k_\mu}$) to denote the RKHS with kernel $k_\mu$, and we will use $H_\mu$ to denote the kernel mean embedding associated to $\mathcal{H}_\mu$.
Since the Schr\"odinger potential $f_{\mu, \mu}$ is smooth, we have $k_\mu \in C^\infty(X \times X)$, which implies that $\mathcal{H}_\mu$ injects continuously into $C^j(X)$ (see \cite{simon2018kernel} or \cite[Appendix B]{Lavenant2025}) and $H_\mu$ is also well-defined as an operator from $C^j(X)^*$ to $C^j(X)$. 
\cite{Lavenant2025} additionally define $K_\mu \colon C(X) \to C^j(X)$  by
\[ K_\mu[\varphi](y) \vc= \int \varphi(x) k_\mu(x, y) d\mu(x) \]
for each $\varphi \in C(X)$ \cite[Definition 3.3]{Lavenant2025}.

We can now state the results from \cite{Lavenant2025} that we will use: as long as a curve of measures $\{\mu_t\}_{t \in (-\delta, \delta)}$ is continuously differentiable in $C^j(X)^*$ with the weak-* topology (what they call a \textit{$C^j$-perturbation}),
\begin{equation}
\label{eq:hess-sink}
    \lim_{t \to 0} \frac{S_\beta(\mu_0, \mu_t)}{t^2} = \frac{\beta}{2} \langle \dot{\mu}, (\mathrm{id} - K_\mu^2)^{-1} H_\mu [\dot{\mu}] \rangle \geq \frac{\beta}{2}\left\| H_\mu [\dot{\mu}] \right\|^2_{\mathcal{H}_\mu},
\end{equation}
where $\mu = \mu_0$ and $\dot{\mu} \in C^j(X)^*$ acts on $\varphi \in C^j(X)$ via
\begin{equation}
\label{eq:mu-dot-def}
\langle \dot{\mu}, \varphi \rangle = \frac{d}{dt} \langle \mu_t, \varphi\rangle \big|_{t = 0}.
\end{equation}
Observe that for any $\varphi \in C^j(X)$ and constant $c \in \R$, 
\[\langle \dot{\mu}, \varphi \rangle = \langle \dot{\mu}, \varphi + c \rangle,\] 
and so we can view $\dot{\mu}$ as acting on $C^j(X) / \R$, the quotient space 
obtained by quotienting out the constant functions in $C^j(X)$.
In particular, \cite{Lavenant2025} show that $\mathrm{id} - K_\mu^2 \colon C^j(X) / \R \to C^j(X) / \R$ is well-defined and has a bounded inverse. Hence $\langle \dot{\mu}, (\mathrm{id} - K_\mu^2)^{-1} H_\mu [\dot{\mu}] \rangle$ in (\ref{eq:hess-sink}) is well-defined.

It is now much more straightforward to check whether the non-degeneracy condition (5. in Assumption \ref{assump:divergence}) holds under Assumption \ref{assump:manifold-hyp} and Assumption \ref{assump:manifold-hyp-deform}.
\begin{proposition}
\label{prop:non-degen}
    Under Assumption \ref{assump:manifold-hyp}, 
    the non-degeneracy condition (5. in Assumption \ref{assump:divergence}) holds. 
    Under Assumption \ref{assump:manifold-hyp-deform}, if for all $p \in U$ and $i \in [m]$, there exists $\varphi \in C^1(X)$ such that
    \begin{equation}
    \label{eq:mu-dot-nontrivial}
        \int \frac{\partial \Psi}{\partial p_i} (p, x) \cdot \nabla \varphi (\Psi(p, x)) d\mu(x) \neq 0,
    \end{equation}
    the non-degeneracy condition also holds.
\end{proposition}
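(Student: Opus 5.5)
Under Assumption \ref{assump:manifold-hyp}, work in local coordinates. Non-degeneracy means that for every $p$ and every nonzero tangent vector $v$ at $p$, $\mathrm{Hess}_p D(p,\cdot)(v,v) > 0$. Take the curve $t \mapsto \mu_t \vc= F(\gamma(t))$, where $\gamma(0)=p$ and $\dot\gamma(0)=v$. Because $D$ is smooth and attains its minimum value $0$ at $p$, the Hessian can be read off from
\[ \tfrac{1}{2}\mathrm{Hess}_p D(p,\cdot)(v,v) = \lim_{t\to 0} \frac{S_\beta(\mu_0,\mu_t)}{t^2}. \]
By (\ref{eq:hess-sink}), this limit is bounded below by $\frac{\beta}{2}\|H_\mu[\dot\mu]\|^2_{\mathcal{H}_\mu}$, where $\mu=\mu_0$.

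The first step is to check that $t\mapsto \mu_t$ is a $C^1$-perturbation, i.e.\ continuously differentiable in $C^1(X)^*$ with the weak-* topology. It suffices to compute
\[ \langle \dot\mu, \varphi\rangle = \frac{d}{dt}\int \varphi\, d\mu_t\Big|_{t=0}. \]
In the particle case this equals $\sum_j p_j\, \nabla\varphi(\iota_j(p))\cdot d\iota_j(v)$. In the deformation case it equals $\int \partial_v \Psi(p,x)\cdot\nabla\varphi(\Psi(p,x))\,d\mu(x)$, by differentiating under the integral; this is justified by continuity of $\partial\Psi/\partial p_i$ on $U\times X$ together with compactness of $X$. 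Continuity in $t$ follows the same way.

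The second step is to show that $H_\mu[\dot\mu]\neq 0$ whenever $\dot\mu\neq 0$ as a functional on $C^1(X)/\R$. By definition, $\langle H_\mu[\dot\mu], \psi\rangle_{\mathcal{H}_\mu} = \langle \dot\mu, \psi\rangle$ for all $\psi\in\mathcal{H}_\mu$. So if $H_\mu[\dot\mu]=0$, then $\dot\mu$ vanishes on $\mathcal{H}_\mu$. Hence it suffices to show that $\mathcal{H}_\mu$ is rich enough, namely that a nonzero $\dot\mu$ cannot annihilate all of $\mathcal{H}_\mu$.

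\emph{This density step is the main obstacle.} I would argue as follows. Multiplication by $e^{f_{\mu,\mu}/\beta}$ maps the Gaussian RKHS (kernel $e^{-\|x-y\|^2/\beta}$) isomorphically onto $\mathcal{H}_\mu$. The Gaussian RKHS is dense in $C^1$ on compact sets, since it contains all functions $e^{-\|x-y\|^2/\beta}$ and is $C^1$-universal (cite \cite{simon2018kernel}). Also, $e^{f_{\mu,\mu}/\beta}$ is smooth and positive. Therefore $\mathcal{H}_\mu$ is dense in $C^1(X)$, and so $\dot\mu$, being continuous on $C^1(X)$, must vanish identically if it vanishes on $\mathcal{H}_\mu$.

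The final step is to show $\dot\mu\neq0$ for every $v\neq0$. Under Assumption \ref{assump:manifold-hyp-deform}, condition (\ref{eq:mu-dot-nontrivial}) as stated only treats coordinate directions; I would read it, as needed, for general directions $v$, or note that one must check the direction $v=\sum v_i e_i$. In the particle case, let $v\neq 0$. Since $\iota$ is an embedding, $d\iota(v)\neq0$, so some $w_j\vc= d\iota_j(v)\neq0$. Because the points $\iota_k(p)$ are distinct, we can choose a smooth $\varphi$ that is supported near $\iota_j(p)$, equals $x\mapsto w_j\cdot x$ near that point, and vanishes near the other $\iota_k(p)$. Then $\langle\dot\mu,\varphi\rangle = p_j\|w_j\|^2>0$. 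Combining the three steps gives $g_p(v,v)>0$, which is the non-degeneracy condition.
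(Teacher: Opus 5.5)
Your proposal is correct and follows the paper's proof step by step: the $C^1$-perturbation, the lower bound (\ref{eq:hess-sink}), density of $\mathcal{H}_\mu = e^{f_{\mu,\mu}/\beta}\mathcal{H}_{k_\beta}$ in $C^1(X)$, and nontriviality of $\dot\mu_0$. Your localized test function also makes explicit the use of distinct particle positions, which the paper leaves implicit. For that, the paper only notes that some $(\iota_j\circ\gamma)'(0)\neq 0$, but distinctness of the $\iota_j(x)$ is what rules out cancellation.

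Your caveat about coordinate directions is well founded. The paper's proof only uses $\mu_t=\Psi(p+te_i,\cdot)_{\#}\mu$, which gives $g_p(e_i,e_i)>0$ but not positive definiteness. In fact, the second half of the statement fails as literally written. Take $d=m=2$, $\mu$ supported in the interior of $X$, $U$ a small ball around $0$, and $\Psi(p,x)=x+\chi(x)c(p)$, with $c(p)=(p_1+p_2,(p_1-p_2)^3)$ and $\chi\in C_c^\infty(\Omega)$ equal to $1$ on $\mathrm{supp}\,\mu$. The family is injective and the hypothesis holds with $\varphi(x)=x_1$, yet $D(0,t(e_1-e_2))=O(t^6)$, so $g_0$ is degenerate. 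The hypothesis therefore has to be imposed for every direction $\sum_i v_i\,\partial\Psi/\partial p_i$ with $v\neq 0$, which is exactly the version you prove.
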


\begin{proof}
    Let us work under Assumption \ref{assump:manifold-hyp} first; the arguments for Assumption \ref{assump:manifold-hyp-deform} are analogous. Let $\gamma \colon (-\delta, \delta) \to \M$ be a smooth curve with $\gamma(0) = x \in \M$ and $\gamma'(0) = v \in T_x\M$ with $v \neq 0$. Set
    \[ \mu_t = F(\gamma(t)) = \sum_{j = 1}^n p_j \delta_{\iota_j(\gamma(t))} \]
    for $t \in (-\delta, \delta),$ and let $X = \overline{B_R(0)}$ for some $R > 0$ large enough that $\Ima ( \iota_j \circ \gamma) \subset X$ for all $j\in [n]$. We first check that $t \mapsto \mu_t$ is continuously differentiable in $C^1(X)^*$ with the weak-* topology, i.e, that
    \begin{equation}
    \label{eq:diff-in-weak-star}
    \dot{\mu}_t \vc = \lim_{s \to 0} \frac{\mu_{t+s} - \mu_t}{s}
    \end{equation}
    exists for each $t \in (-\delta, \delta)$ (where the limit is taken in the weak-* topology) and that $t \mapsto \dot{\mu}_t$ is continuous. This is similar to the argument in \cite[Remark 3.2]{Lavenant2025}: for any $t \in (-\delta, \delta)$ and $\varphi \in C^1(X)$, 
    \begin{align*}
    \lim_{s \to 0} \left\langle \frac{\mu_{t+s} - \mu_t}{s}, \varphi \right \rangle &=  \lim_{s \to 0} \sum_{j = 1}^n p_j \left(\frac{ \varphi(\iota_j(\gamma(t+s))) - \varphi(\iota_j(\gamma(t)))}{s} \right) \\
    &= \sum_{j = 1}^n p_j \nabla \varphi (\iota_j(\gamma(t))) \cdot (\iota_j \circ \gamma)'(t),
    \end{align*}
    so $\dot{\mu}_t \in C^1(X)^*$ exists and is given by
    \[ \left \langle \dot{\mu}_t, \varphi \right \rangle = \sum_{j = 1}^n p_j \nabla \varphi (\iota_j(\gamma(t))) \cdot (\iota_j \circ \gamma)'(t) \]
    for $\varphi \in C^1(X)$. For any $\varphi \in C^1(X)$, $t \mapsto \langle \dot{\mu}_t, \varphi \rangle$ is continuous, so $t \mapsto \dot{\mu}_t$ is continuous from $(-\delta, \delta)$ to $C^1(X)^*$ with the weak-* topology.
    
    Recall that in Assumption \ref{assump:manifold-hyp} we assumed that $\iota$ is a smooth embedding, so $(\iota \circ \gamma)'(t)$ is non-zero whenever $\gamma'(t)$ is non-zero. In particular, since we took $\gamma \colon (-\delta, \delta) \to \M$ to be such that $\gamma'(0) = v \neq 0$, there exists $j \in [n]$ such that $(\iota_j \circ \gamma)'(0)$ is non-zero, so $\dot{\mu}_0$ is non-trivial in $C^1(X)^*$. We claim that $\dot{\mu}_0$ is in fact non-trivial in $\mathcal{H}_{\mu_0}^*$, so that using (\ref{eq:hess-sink}) we can conclude that the non-degeneracy condition holds.
    
    Indeed, it suffices to observe that $\mathcal{H}_\mu$ is dense in $C^1(X)$ for any $\mu \in \mathcal{P}(X)$, as this would imply that any non-trivial element in $C^1(X)^*$ is also non-trivial in $\mathcal{H}_\mu^*$.
    The argument is similar to an argument in \cite[Appendix B]{Lavenant2025}, where they show that $\mathcal{H}_\mu$ is dense in $C(X)$ (i.e., $k_\mu$ is universal).
    Let $k_\beta (x, y) \vc = \exp \left(- \|x-y\|^2 / \beta \right)$ for each $x, y \in X$.
    Then for any $\varphi \in C^1(X)$, $c_1, c_2, \ldots, c_l \in \R$ and $y_1, \ldots, y_l \in X$, we have
    \begin{multline*}
    \left\| \sum_{i = 1}^l \frac{c_i k_\mu(y_i, \cdot)}{\exp\left( f_{\mu, \mu}(y_i)/\beta \right)} - \varphi \right\|_{C^1(X)} \leq 2 \left\|\exp \left( f_{\mu, \mu}/\beta \right)  \right\|_{C^1(X)}
         \left\| \sum_{i = 1}^l c_i k_\beta (y_i, \cdot ) - \frac{\varphi}{ \exp \left( f_{\mu, \mu}/\beta \right) } \right\|_{C^1(X)},
    \end{multline*}
    and the right hand side can be made arbitrarily small, given that the RKHS for the Gaussian kernel $k_\beta$ is dense in $C^1(X)$ \cite{simon2018kernel} (see \cref{rmk:universality-of-gaussian-kernel} below).

    The proof of the statement for Assumption \ref{assump:manifold-hyp-deform} is similar. Let $\{e_i\}_{i \in [m]}$ denote the standard coordinate basis for $\R^m$. With $U, \Psi, X$ as in Assumption \ref{assump:manifold-hyp-deform}, fix any $p \in U, i \in [m]$ and $\delta > 0$ such that $B_\delta(p) \subset U$, and let
    \[ \mu_t \vc = \Psi(p + te_i, \cdot)_{\#} \mu \]
    for $t \in (-\delta, \delta)$.
    Then, as in \cite[Remark 3.2]{Lavenant2025},
    by the chain rule and dominated convergence theorem, $\dot{\mu}_t = \lim_{s \to 0} (\mu_{t+s} - \mu_t)/s \in C^1(X)^*$ exists and is given by
    \[ \langle \dot{\mu}_t,\varphi  \rangle = \int \frac{\partial \Psi}{\partial p_i}(p + te_i, x) \cdot \nabla \varphi (\Psi (p+ te_i, x)) d\mu(x) \]
    for each $\varphi \in C^1(X)$, and so $t \mapsto \langle \dot{\mu}_t, \varphi \rangle$ is continuous for any $\varphi \in C^1(X)$. Therefore $t \mapsto \mu_t$ is continuously differentiable in $C^1(X)^*$ with the weak-* topology. The condition that (\ref{eq:mu-dot-nontrivial}) holds for some $\varphi \in C^1(X)$ ensures that $\dot{\mu}_0$ is non-trivial in $C^1(X)^*$. The density of $\mathcal{H}_{\mu_0}$ in $C^1(X)$ implies that $\dot{\mu}_0$ is non-trivial in $\mathcal{H}_{\mu_0}^*$ and we can conclude.
\end{proof}

\begin{remark}[On the universality of the Gaussian kernel]
    \label{rmk:universality-of-gaussian-kernel}
    It is well-known that the RKHS $\mathcal{H}_{k_{\beta}, X}$ with the Gaussian kernel $k_\beta(x, y) \vc = \exp \left( -\|x-y\|^2/\beta \right)$ is dense in $C(X)$ whenever $X \subset \R^d$ is compact \cite{steinwart2001influence}. We include $X$ in the subscript of $\mathcal{H}_{k_{\beta}, X}$ to emphasize that each element of $\mathcal{H}_{k_{\beta}, X}$ is a function over $X$. When a RKHS $\mathcal{H}_k$ is dense in $C(X)$, its kernel $k$ is called \emph{universal} \cite{steinwart2001influence, micchelli2006universal} or, more specifically, \emph{c-universal} \cite{sriperumbudur2010relation, simon2018kernel}.
    
    The terminology c-universal is used to emphasize that the RKHS $\mathcal{H}_k$ is dense in $C(X)$. One can also study whether a RKHS is dense in other function spaces.
    In particular, \cite{simon2018kernel} show that the RKHS $\mathcal{H}_{k_{\beta}, \R^d}$ (i.e., each element of $\mathcal{H}_{k_{\beta}, \R^d}$ is a function on $\R^d$) with the Gaussian kernel $k_\beta$ is dense in $C_0^1(\R^d)$. Using the terminology in \cite{simon2018kernel}, the Gaussian kernel is $c_0^1$-universal when the input set is $\R^d$. Here the space $C_0^1(\R^d)$ is the space of $C^1$ functions that vanish at infinity along with its first derivatives, equipped with the topology generated by the semi-norms $\|f\|_\alpha \vc = \sup_{x \in \R^d} |\partial^\alpha f(x)|, |\alpha| \leq 1$. We refer to \cite{simongabriel2019kerneldistributionembeddingsuniversal} for more details.

    What we need for the proof of \cref{prop:non-degen} is for $\mathcal{H}_{k_{\beta}, X}$ to be dense in $C^1(X)$ whenever $X = \overline{\Omega}$ for a bounded open convex set $\Omega \subset \R^d$. This follows from the density of $\mathcal{H}_{k_{\beta}, \R^d}$ in $C^1_0(\R^d)$ by using \cite[Theorem 6]{simon2018kernel}, which states that if a RKHS $\mathcal{H}_k$ with kernel $k$ injects continuously into a locally convex topological vector space of functions $\mathcal{F}$, $\mathcal{H}_k$ is dense in $\mathcal{F}$ \emph{if and only if} $k$ is strictly positive definite over $\mathcal{F}^*$, i.e.,
    \[ \left\| H_k[\nu] \right\|_{\mathcal{H}_k}^2 = \langle \nu, H_k[\nu]\rangle > 0 \]
    for all non-trivial $\nu \in \mathcal{F}^*$.
    (We used the forward direction in the proof of \cref{prop:non-degen}; the reverse direction follows from Hahn-Banach.) Thus the density of $\mathcal{H}_{k_{\beta}, \R^d}$ in $C^1_0(\R^d)$ implies that
    \begin{equation}
    \label{eq:gauss-kernel-spd}
    \langle \nu, H_{k_\beta}[\nu] \rangle > 0 
    \end{equation}
    for all non-trivial $\nu \in C^1_0(\R^d)^*$, where we recall that
    \[ H_{k_\beta}[\nu](y) = \langle \nu, k_\beta(\cdot, y) \rangle \]
    for $y \in \R^d$.
    Since any $f \in C^1(X)$ can be extended to some $\widetilde{f} \in C_0^1(\R^d)$ (see \cref{rmk:c-k-whitney-extension}), (\ref{eq:gauss-kernel-spd}) is true
    for all non-trivial $\nu \in C^1(X)^*$, and so $\mathcal{H}_{k_{\beta}, X}$ is dense in $C^1(X)$.
\end{remark}

\section{Examples}
\label{sec:examples}
In this section, we discuss two examples where $D$ is given by the Sinkhorn divergence. Unless otherwise specified, $\beta > 0$ is fixed. For the figures in this section, we set $\beta = 1$. The code for all of the figures in this paper can be found at \href{https://github.com/lzx23/graph_laplacians_w_sdiv}{https://github.com/lzx23/graph\_laplacians\_w\_sdiv}.

\subsection{Rotating particle}
\label{subsec:rotation}
Fix some $R > 0$. Consider 
$F \colon \S^1 \to \mathcal{P}(\R^2)$ 
given by
\begin{equation}
\label{eq:rotating-particle}
F(\theta) \vc = \frac{1}{2}\delta_{(0, 0)} + \frac{1}{2} \delta_{(R\cos \theta, R\sin \theta)}. 
\end{equation}
For convenience, let $\mu_\theta \vc = F(\theta)$ for $\theta \in [0, 2\pi).$
For each $\theta \in [-\pi, \pi),$ we have a closed-form formula for $\mathrm{OT}_\beta (\mu_0, \mu_\theta)$. 
\begin{claim}
\label{claim:rotating-particle-closed-form}
For each $\theta \in [-\pi, \pi)$, the optimal coupling for
\begin{equation}
\label{eq:OT-beta-rotating-particle}
    \mathrm{OT}_\beta(\mu_0, \mu_\theta) = \min_{\substack{\Pi \in \R^{2 \times 2},\\ \ind^\top \Pi =  [1/2, 1/2], \\ \Pi \ind = [1/2, 1/2]^\top}} (\Pi_{12}+\Pi_{21})R^2 + \Pi_{22}\left(2R\sin \left( \frac{\theta}{2} \right)\right)^2 + \beta \sum_{i, j = 1}^2 \Pi_{ij} \log \left(4 \Pi_{ij} \right),
\end{equation}
is given by
\begin{equation}
    \Pi_\theta^* = \begin{bmatrix}
        \frac{1}{2}-p(\theta) & p(\theta) \\
        p(\theta) & \frac{1}{2} -p (\theta)
    \end{bmatrix},
\end{equation}
where $p$ is given by
\begin{equation}
\label{eq:rotation-p-closed-form}
p(\theta) \vc= \frac{1}{2 \left( 1 + e^{R^2\cos(\theta)/\beta} \right)}.
\end{equation}
We then have
\begin{equation}
\label{eq:ot-beta-closed-form}
\mathrm{OT}_\beta(\mu_0, \mu_\theta) = \beta \log(2-4p(\theta)) + R^2(1-\cos(\theta)),
\end{equation}
from which one can compute
\[S_\beta(\mu_0, \mu_\theta) = \mathrm{OT}_\beta(\mu_0, \mu_\theta) - \frac{1}{2}\left( \mathrm{OT}_\beta(\mu_0, \mu_0) + \mathrm{OT}_\beta(\mu_\theta, \mu_\theta) \right) = \mathrm{OT}_\beta(\mu_0, \mu_\theta) - \mathrm{OT}_\beta(\mu_0, \mu_0) 
\]
\[
g^{(\beta)}_\theta\left( \frac{d}{d\theta}, \frac{d}{d\theta} \right) = \frac{1}{2}R^2(1-2p(0))\]
for each $\theta \in [-\pi, \pi).$
\end{claim}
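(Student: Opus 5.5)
The plan is to reduce $\mathrm{OT}_\beta(\mu_0,\mu_\theta)$ to a one-variable strictly convex problem and solve it exactly.

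\emph{Setting up the discrete problem.} Both measures put mass $1/2$ on two atoms: $a_1=(0,0)$, $a_2=(R,0)$ for $\mu_0$, and $b_1=(0,0)$, $b_2=(R\cos\theta,R\sin\theta)$ for $\mu_\theta$. The costs are $\|a_1-b_1\|^2=0$, $\|a_1-b_2\|^2=\|a_2-b_1\|^2=R^2$, and $\|a_2-b_2\|^2=2R^2(1-\cos\theta)=(2R\sin(\theta/2))^2$. Since $\mu_0\otimes\mu_\theta$ gives mass $1/4$ to each pair, $\mathrm{KL}(\Pi\,|\,\mu_0\otimes\mu_\theta)=\sum_{i,j}\Pi_{ij}\log(4\Pi_{ij})$. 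This gives exactly (\ref{eq:OT-beta-rotating-particle}).

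The marginal constraints leave one free parameter. Every feasible $\Pi$ has the form $\begin{bmatrix}1/2-p & p\\ p & 1/2-p\end{bmatrix}$ with $p\in[0,1/2]$. The objective becomes
\[
\Phi(p)=2pR^2+(1-2p)R^2(1-\cos\theta)+\beta\bigl[(1-2p)\log(2-4p)+2p\log(4p)\bigr].
\]
This is strictly convex in $p$, since the entropy terms are strictly convex and the rest is linear. Its derivative tends to $\mp\infty$ at the endpoints $p=0$ and $p=1/2$, so the unique minimizer is the interior critical point.

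\emph{Solving for the minimizer and the value.} Differentiating gives
\[
\Phi'(p)=2R^2\cos\theta+2\beta\log\frac{4p}{2-4p}.
\]
Setting this to zero gives $\frac{2p}{1-2p}=e^{-R^2\cos\theta/\beta}$, which is (\ref{eq:rotation-p-closed-form}). To evaluate $\Phi$ at this point, I substitute $\log(4p)=\log(2-4p)-R^2\cos\theta/\beta$. The entropy part then collapses to $\beta\log(2-4p)-2pR^2\cos\theta$, and the transport part is $R^2(1-\cos\theta)+2pR^2\cos\theta$. Adding these gives (\ref{eq:ot-beta-closed-form}).

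\emph{The Sinkhorn divergence.} A rotation by $\theta$ is an isometry of $\R^2$ carrying $\mu_0$ to $\mu_\theta$, so $\mathrm{OT}_\beta(\mu_\theta,\mu_\theta)=\mathrm{OT}_\beta(\mu_0,\mu_0)$. This gives the stated expression for $S_\beta$.

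\emph{The metric.} By the same rotation invariance, $g^{(\beta)}_\theta$ does not depend on $\theta$, so it suffices to compute $\tfrac12\frac{d^2}{d\theta^2}S_\beta(\mu_0,\mu_\theta)\big|_{\theta=0}$. Write $c(\theta)=R^2\cos\theta/\beta$. Then $2-4p=2e^{c}/(1+e^{c})$, and so
\[
\mathrm{OT}_\beta(\mu_0,\mu_\theta)=\beta\log 2+R^2-\beta\log(1+e^{c(\theta)}).
\]
Let $\sigma(c)=e^c/(1+e^c)$. The second derivative of $-\beta\log(1+e^{c})$ is $-\beta[\sigma(c)c''+\sigma'(c)(c')^2]$. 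At $\theta=0$ we have $c'(0)=0$ and $c''(0)=-R^2/\beta$, so the second derivative equals $R^2\sigma(c(0))$. Finally, $\sigma(c(0))=1-2p(0)$, which gives $g^{(\beta)}_\theta(\frac{d}{d\theta},\frac{d}{d\theta})=\tfrac12 R^2(1-2p(0))$.

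\emph{Main obstacle.} The only delicate point is bookkeeping: keeping the KL reference measure's factor $1/4$ consistent with the dual formulation, and confirming that the minimizer is interior. Strict convexity together with the infinite endpoint slopes of the entropy settles the latter.
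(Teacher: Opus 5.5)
Your proof is correct, but it works on the primal side, while the paper's proof works on the dual side.

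\textbf{The paper's route.} It takes from the theory of Schr\"odinger potentials that the optimal coupling has Gibbs form $\Pi_{ij}=\frac14 a_i b_j e^{-c_{ij}/\beta}$, with $c_{ij}$ the squared distances between atoms. It uses $\Pi_{12}=\Pi_{21}$ to symmetrize the scalings and reads off $p(\theta)$ from the cross-ratio identity $\bigl(\frac{1/2-p}{p}\bigr)^2=e^{2R^2\cos\theta/\beta}$. The value comes from the dual formula $\mathrm{OT}_\beta(\mu_0,\mu_\theta)=\frac12(f_1+f_2+g_1+g_2)$. The metric comes from differentiating $H(\theta)$ using an explicit expression for $p'(\theta)$.

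\textbf{Your route.} You note that the marginal constraints alone, not any symmetry argument, reduce the feasible set to a one-parameter segment. You minimize the strictly convex function $\Phi$ at its critical point; the equation $\Phi'(p)=0$ is equivalent to the paper's cross-ratio identity. You then evaluate the primal objective directly.

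\textbf{What yours buys.} It is self-contained: it never uses the existence or structure of Schr\"odinger potentials. It also proves existence, uniqueness and interiority of the minimizer, which the paper takes for granted. You spell out the isometry argument for $\mathrm{OT}_\beta(\mu_\theta,\mu_\theta)=\mathrm{OT}_\beta(\mu_0,\mu_0)$, which the paper leaves implicit. Your closed form $\mathrm{OT}_\beta(\mu_0,\mu_\theta)=\beta\log 2+R^2-\beta\log\bigl(1+e^{R^2\cos\theta/\beta}\bigr)$ makes the smoothness and evenness in $\theta$, used in the paper's fourth-order-error discussion, immediate.

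\textbf{What the paper's buys.} It exhibits the potentials explicitly and matches the first-order-optimality framework the paper uses for its smoothness results. It is also the approach that scales to configurations with more atoms, where the feasible set is no longer one-dimensional and the Gibbs/cross-ratio structure is what determines the coupling.
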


We leave the proof of Claim \ref{claim:rotating-particle-closed-form} for Appendix \ref{sec:example-app}. 
For the plots in this subsection, we will use the closed-form formulae for $S_\beta$ and $g^{(\beta)}$, and, unless otherwise specified, $R = 2$ and $\beta = 1$. For the rest of the subsection, we will write $g$ for $g^{(1)}$.
\begin{remark}
\label{rmk:sinkhorn-algo}
In practice, Sinkhorn's algorithm (or variants thereof) are used to approximately solve the entropy-regularized optimal transport problem \cite{cuturi2013sinkhorn}. We did not include the error from Sinkhorn's algorithm in our analysis in \cref{sec:sink-div} (i.e., we have assumed that we can compute the Sinkhorn divergences exactly) and leave this for future work.
Nonetheless,
for this example and the parameters that we use, we find empirically that the default implementation of Sinkhorn's algorithm in the Python Optimal Transport toolbox \cite{flamary2021pot, flamary2024pot} provides a good enough approximation of the Sinkhorn divergence that the resulting approximation error for the discrete Laplacian is very small; see Appendix \ref{sec:example-app} for more details.
\end{remark}

\textbf{Fourth-order error.}
We first discuss \cref{prop:fourth-order-error}.
For this example, a more straightforward way of seeing that the fourth-order error in \cref{prop:fourth-order-error} holds is to observe that $\theta \mapsto S_\beta(\mu_0, \mu_\theta)$ is smooth and even, so that there is no third-order term in its Taylor expansion around $0$. We can also observe that \cref{prop:fourth-order-error} holds numerically in \cref{fig:rotating-particle-4th-order-error}.

\begin{figure}
    \centering
    \includegraphics[width=0.7\linewidth]{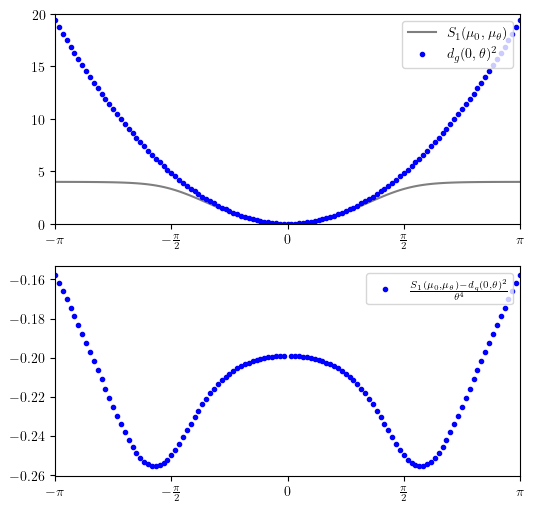}
    \caption{Visualization of the approximation error in \cref{prop:fourth-order-error} for the rotating particle example in \cref{subsec:rotation}.}
    \label{fig:rotating-particle-4th-order-error}
\end{figure}

\begin{figure}
    \centering
    \includegraphics[width=0.8\linewidth]{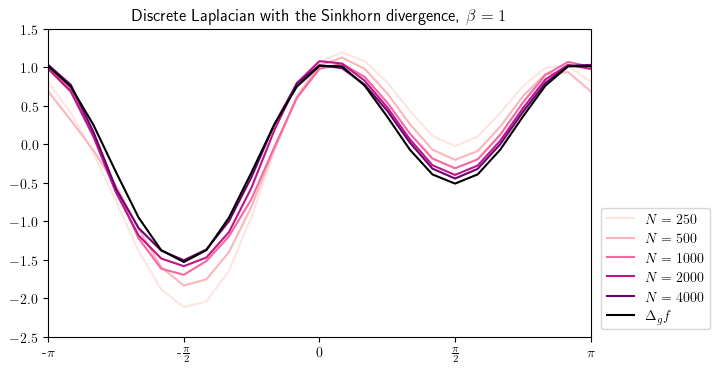}
    \caption{Example of pointwise convergence of the discrete Laplacian for the example in \cref{subsec:rotation}. We plot $\theta \mapsto \frac{2 \mathrm{vol}_g(\M)}{m_2}\frac{L^{(\eps_N, N)} f(\theta)}{N\eps_N^{m/2 + 1}}$ for $N = 250,\ 500,\ 1000,\ 2000,\ 4000$ and compare with $\Delta_g f$.}
    \label{fig:rotating-particle-gl-conv}
\end{figure}

\textbf{Convergence of the discrete Laplacian.}
Next, in \cref{fig:rotating-particle-gl-conv} we test the pointwise convergence of the discrete Laplacian constructed with $D(\theta, \varphi) \vc = S_1(\mu_\theta, \mu_\varphi)$ for $\theta, \varphi \in \S^1$. 
We draw $N$ samples $\theta_1, \ldots, \theta_N$ i.i.d. uniformly from $[0, 2\pi)$, take $h(x) = e^{-x/2}$ as the kernel function for constructing the weight matrix, set $\eps_N = 4N^{-1/3.01}$ and test using the function $f(\theta) = \sin(\theta) + \frac{1}{2}\cos(2\theta).$ For this choice of $h$, $m_2$ as defined in (\ref{eq:def_m_2}) can be computed to be $m_2 = (2\pi)^{m/2} = \sqrt{2\pi}$.
For convenience, we plot $\theta \mapsto \frac{2 \mathrm{vol}_g(\M)}{m_2}\frac{L^{(\eps_N, N)} f(\theta)}{N\eps_N^{m/2 + 1}}$, so that by \cref{thm:glconv}, for each $\theta \in \S^1$, the limit as $N \to \infty$ (almost surely) should be 
\begin{equation}
\label{eq:laplacian-f}
\Delta_g f(\theta) = -\frac{1}{g_0\left(\frac{d}{d\theta}, \frac{d}{d\theta} \right)} \frac{d^2f}{d\theta^2}(\theta).
\end{equation}
For our numerical experiments, we use the closed-form formula of $g$ to compute (\ref{eq:laplacian-f}) and $\mathrm{vol}_g(\M) = 2 \pi \sqrt{g_0\left(\frac{d}{d\theta}, \frac{d}{d\theta} \right)}$. 

In \cref{fig:rotating-particle-lap-eig}, \`a la the Laplacian eigenmap algorithm, we map each sample $\theta_i$ to $v^{(1)}_i, v^{(2)}_i$, where $v^{(1)}, v^{(2)}$ are the first two non-constant eigenvectors of $(\D^{(\eps_N, N)})^{-1}\mathcal{L}^{(\eps_N, N)}$, as introduced in \cref{subsec:gl-construction}. We color each point according to the value of $\theta_i$. As expected, we can see the underlying geometry of $(\S^1, g)$ in \cref{fig:rotating-particle-lap-eig}.
 
\begin{figure}
    \centering
    \includegraphics[width=\linewidth]{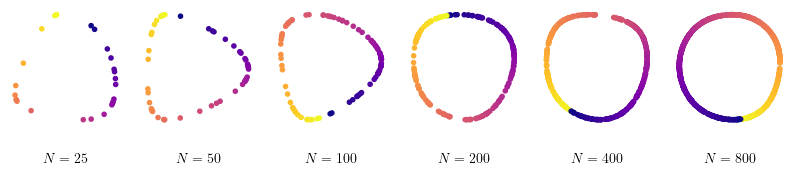}
    \caption{Embedding of samples into $\R^2$ using the first two non-constant eigenvectors $v^{(1)}, v^{(2)}$ of $(\D^{(\eps_N, N)})^{-1}\mathcal{L}^{(\eps_N, N)}$ for the example in \cref{subsec:rotation}. Each point is colored according to the value of $\theta_i$.}
    \label{fig:rotating-particle-lap-eig}
\end{figure}

\subsection{Dilation of two particles}
\label{subsec:dilation}
In this subsection, we discuss the following example from \cite[Section 7.2]{Lavenant2025}: for $x > 0$, set
\[ \mu_x \vc = \frac{1}{2} \delta_x + \frac{1}{2}\delta_{-x}. \]
Then, as shown in \cite[Lemma 7.5]{Lavenant2025}, the Sinkhorn divergence between $\mu_x$ and $\mu_y$ for $x, y> 0$ is given by the closed-form formula
\begin{equation}
\label{eq:dilation-sink-div}
S_\beta(\mu_x, \mu_y) = \beta \left( -\log \left(k_\beta(x, y) + k_\beta(x, -y) \right) + \frac{1}{2} \left( \log\left( 1 + k_\beta(x, -x)\right) + \log\left( 1 + k_\beta(y, -y)\right)\right)  \right), 
\end{equation}
where $k_\beta(x, y) \vc = \exp \left( -(x-y)^2/\beta \right)$ is the Gaussian kernel.
From (\ref{eq:dilation-sink-div}), one can show (see \cite[Eq. 7.3]{Lavenant2025}) that the associated Riemannian metric on $(0, \infty)$ is given by
\begin{equation}
\label{eq:dilation-riem-metric}
    g_x\left( \frac{d}{dx}, \frac{d}{dx} \right) \vc = \lim_{y \to x} \frac{S_\beta(\mu_x, \mu_y)}{(y-x)^2} = 1 + \frac{k(x)}{1+k(x)} \left( \frac{8x^2}{\beta(1+k(x))} - 2 \right)
\end{equation}
for $x \in (0, \infty)$,
where $k(x) \vc = k_\beta(x, -x)$. For convenience, we will use $S_\beta(x, y)$ to denote $S_\beta(\mu_x, \mu_y)$ and
$g(x)$ to denote $g_x\left( \frac{d}{dx}, \frac{d}{dx} \right)$
for $x, y \in (0, \infty)$. We also observe that the squared geodesic distance on $(0, \infty)$ with respect to the Riemannian metric $g$ in (\ref{eq:dilation-riem-metric}) is given by
\begin{equation}
\label{eq:geo-dist}
d_g(x, y)^2 = \left( \int_x^y \sqrt{g(z)}dz \right)^2 \end{equation}
for $x, y \in (0, \infty).$
\\~\\

\textbf{Fourth-order error.}
We now discuss how \cref{lemma:third-order-vanish} can be applied to this example. Fix any $a \in (0, \infty).$ Consider the Taylor expansions of $S_\beta(a, \cdot)$ and $d_g^2(a, \cdot)$ around $a$:
\[ S_\beta(a, a + h) = g(a) h^2 + \frac{1}{6} \frac{\partial^3 S_\beta}{\partial y^3} (a, a) h^3 + O(h^4) \]
\[ d_g^2(a, a+h) = g(a) h^2 + \frac{1}{6} \frac{\partial^3 [d_g^2]}{\partial y^3} (a, a) h^3 + O(h^4). \]
\cref{lemma:third-order-vanish} implies that 
\begin{equation}
\label{eq:4th-order-example}
\left| S_\beta(a, a+h) - d_g^2(a, a+h) \right| = O(h^4)
\end{equation}
as $h \to 0$, or equivalently,
\begin{equation}
\label{eq:equiv-of-3rd-derivs}
\frac{\partial^3 S_\beta}{\partial y^3} (a, a) = \frac{\partial^3 [d_g^2]}{\partial y^3} (a, a).
\end{equation}
One way to see this is to take the approach of \cref{prop:fourth-order-error}: consider some small enough $T > 0$ and a unit-speed, length-minimizing geodesic segment (with respect to $g$) $\gamma \colon [-T, T] \to (0, \infty)$ such that $\gamma(0) = a$. Now apply (\ref{eq:4th-order-error-2}) in \cref{lemma:third-order-vanish} to $f(s, t) \vc = S_\beta(\gamma(s), \gamma(t))$ to show that there exists $K > 0$ such that
\[ \left| S_\beta(a, \gamma(t)) - t^2 \right| \leq Kt^4  \]
for all $t \in (-T, T)$, or equivalently,
\[ \left| S_\beta\left(a, \gamma(t)\right) - d_g^2\left(a,  \gamma(t)\right) \right|  \leq K d_g^4\left(a, \gamma(t)\right) \]
for all $t \in (-T, T)$. 
For this example, upon fixing any $a_1, a_2 \in (0, \infty)$ such that $a_1 < a_2$, we have
\[ \left(\inf_{z \in [a_1, a_2]} g(z)\right) (x-y)^2 \leq d_g^2(x, y) \leq \left(\sup_{z \in [a_1, a_2]} g(z)\right) (x-y)^2 \]
for all $x, y \in [a_1, a_2]$, so 
(\ref{eq:4th-order-example}) holds.
Alternatively, we can directly compute the derivatives of $d_g^2$ with respect to the second variable:
\begin{align*}
    \frac{\partial [d_g^2]}{\partial y}(a, y) &= 2 \left( \int_a^y \sqrt{g(z)} dz \right) \sqrt{g(y)}\\
    \frac{\partial^2 [d_g^2]}{\partial y^2}(a, y) &= 2 \left( \left( \int_a^y \sqrt{g(z)} dz \right) \frac{g'(y)}{2\sqrt{g(y)}} + g(y) \right)
\end{align*}
for $y \in (0, \infty)$, so that
\begin{align*}
    \frac{\partial^3 [d_g^2]}{\partial y^3}(a, a) = 3g'(a).
\end{align*}
Recall that by construction 
$g(x) = \frac{1}{2} \frac{\partial^2 S_\beta}{\partial y^2}(x, x),$
so we have
\[ \frac{\partial^3 [d_g^2]}{\partial y^3}(a, a) = \frac{3}{2} \left( \frac{\partial^3 S_\beta}{\partial x \partial y^2}(a, a) + \frac{\partial^3 S_\beta}{\partial y^3}(a, a) \right) = \frac{\partial^3 S_\beta}{\partial y^3}(a, a), \]
where the second equality follows from \cref{lemma:third-order-vanish}.

We check (\ref{eq:4th-order-example}) numerically for $\beta = 1$ in \cref{fig:fourth-order-error}. In particular, we set $\delta = .005$ and plot  
\small
\[\frac{S_1(1, x) - \widehat{d}_g(1, x)^2}{(x-1)^3}, \frac{S_1(1, x) - g(1)(x-1)^2}{(x-1)^3}, \frac{S_1(1, x) - \widehat{d}_g(1, x)^2}{(x-1)^4}, \frac{S_1(1, x) - g(1)(x-1)^2}{(x-1)^4}\]
\normalsize
at $x = 1 + k \delta$ for $k \in \Z \setminus \{0\}$ such that $1 + k \delta \in [.9, 1.1]$. Here $\widehat{d}_g$ denotes a numerical approximation for the geodesic distance $d_g$ which we will describe below. In \cref{fig:fourth-order-error}, both $S_1$ and $g(1)$ are computed using their closed-form formulae, though we obtain similar results if we compute $S_1$ using Sinkhorn's algorithm as implemented in the Python Optimal Transport library \cite{flamary2021pot, flamary2024pot}; see Appendix \ref{sec:example-app}.

As $x \mapsto g(1)(x-1)^2$ is the second-order Taylor expansion of $x \mapsto S_1(1, x)$ around $1$, we expect $\frac{S_1(1, x) - g(1)(x-1)^2}{(x-1)^3}$ to converge as $x \to 1$, though $\frac{\left|S_1(1, x) - g(1)(x-1)^2\right|}{(x-1)^4}$ may go to infinity as $x \to 1$. This is indeed what we see in \cref{fig:fourth-order-error}. In contrast, if (\ref{eq:4th-order-example}) holds, we expect $\frac{S_1(1, x) - d_g(1, x)^2}{(x-1)^4}$ to stay bounded as $x \to 1$. This agrees with the behavior of $\frac{S_1(1, x) - \widehat{d}_g(1, x)^2}{(x-1)^4}$ in \cref{fig:fourth-order-error}.
\begin{figure}
    \centering
    \includegraphics[width=.85\linewidth]{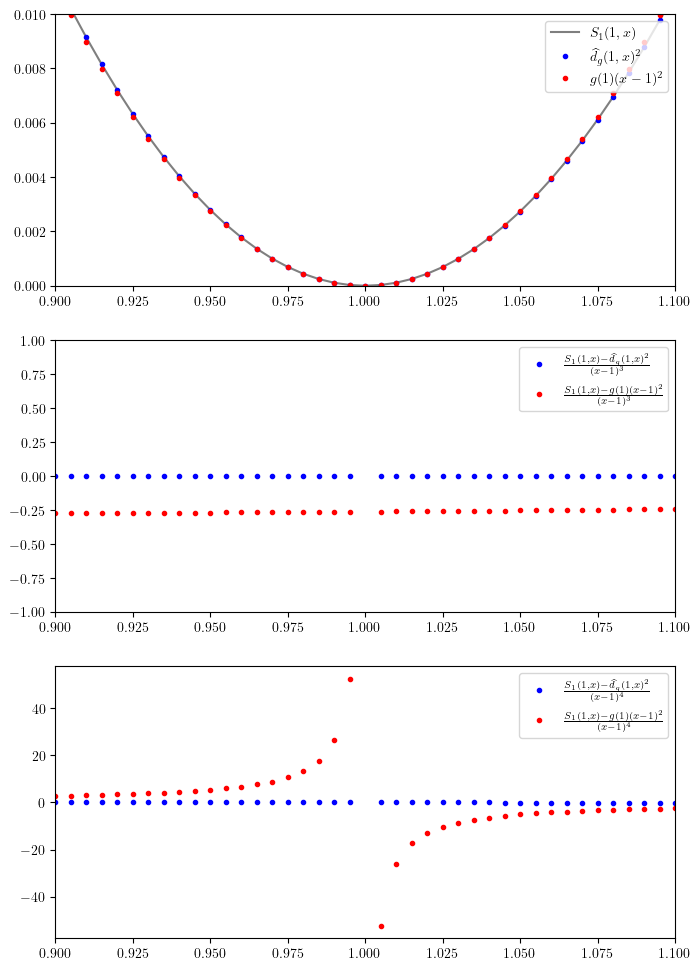}
    \caption{Visualization of the approximation error in (\ref{eq:4th-order-example}) for the dilation example in \cref{subsec:dilation}. Both $S_1$ and $g(1)$ are computed using their closed-form formulae, whereas the geodesic distance is approximated using Simpson's rule.}
    \label{fig:fourth-order-error}
\end{figure}

Last but not least, we describe how we compute $\widehat{d}_g$: for each $k \in \N$ such that $k\delta \in [.9, 1.1)$, $d_g(k\delta, (k+1)\delta)$ is approximated by applying Simpson's rule to (\ref{eq:geo-dist}), and these estimates are then summed up to approximate the geodesic distance $d_g(1, 1+k\delta)$ for $k \in \Z$ such that $ k\delta \in [-.1, .1]$.

\begin{figure}
    \centering
\includegraphics[width=.85\linewidth]{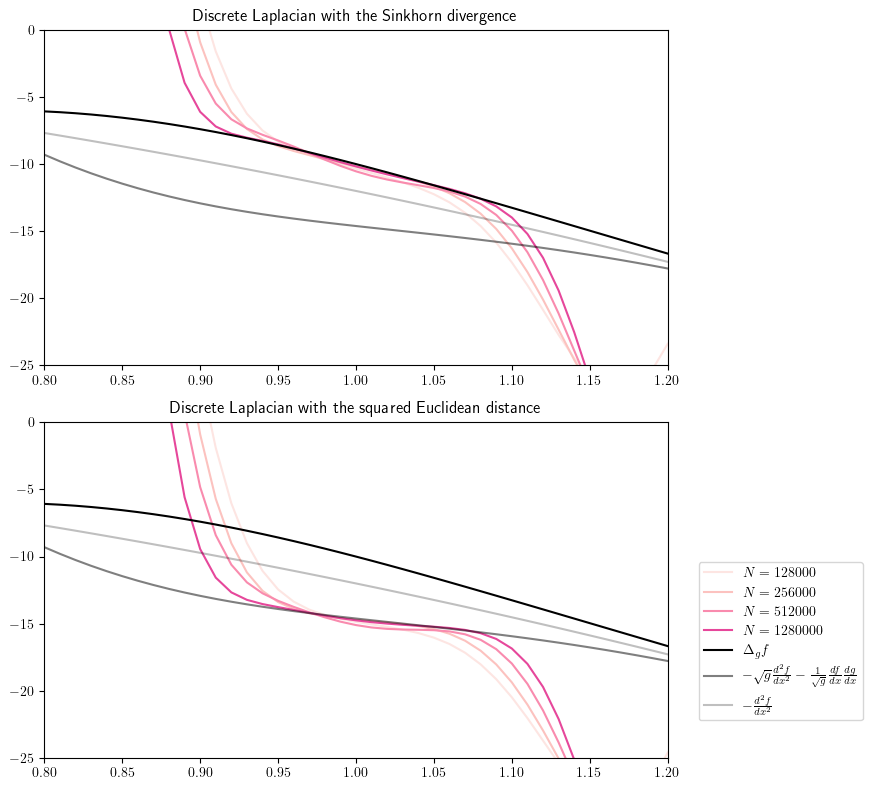}
    \caption{Examples of pointwise convergence of the discrete Laplacian for the example in \cref{subsec:dilation}. As in \cref{fig:rotating-particle-gl-conv}, we plot $x \mapsto \frac{2 \mathrm{vol}_g(\M)}{m_2}\frac{L^{(\eps_N, N)} f(x)}{N\eps_N^{m/2 + 1}}$, but in the top plot, $L^{(\eps_N, N)} f$ is constructed using $D(x, y)  = S_1(x, y)$ for $x, y \in \M$, whereas in the bottom plot, $L^{(\eps_N, N)} f$ is constructed using $D(x, y) = (x-y)^2$ for $x, y \in \M$.}
    \label{fig:dilation-gl-convergence}
\end{figure}
\begin{figure}
    \centering
    \includegraphics[width=\linewidth]{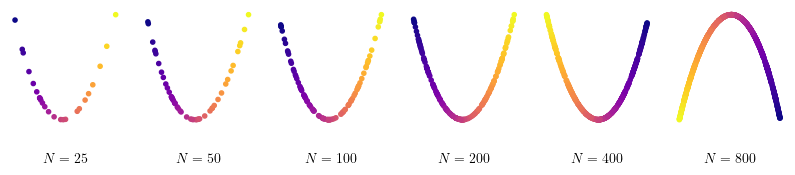}
    \caption{Embedding of samples into $\R^2$ using the first two non-constant eigenvectors $v^{(1)}, v^{(2)}$ of $(\D^{(\eps_N, N)})^{-1}\mathcal{L}^{(\eps_N, N)}$ (constructed with $D = S_1$) for the example in \cref{subsec:dilation}. Each point is colored according to the value of $x_i$.}
    \label{fig:dilation-lap-eig}
\end{figure}

\begin{remark}[On the approximation error for the geodesic distance]
Per the standard error bound for Simpson's rule (see, e.g., \cite{aschernumericalmethods}), for each $k \in \N$ such that $k\delta \in [.9, 1.1)$, our approximation $\widehat{d}_g(k\delta, (k+1)\delta)$ of $d_g(k\delta, (k+1)\delta)$ satisfies
\[ \left| \widehat{d}_g(k\delta, (k+1)\delta) - d_g(k\delta, (k+1)\delta)\right| \leq \frac{C\delta^5}{2880}, \]
where $C$ is an upper bound on the absolute value of the fourth derivative of $\sqrt{g}$ on $[.9, 1.1]$. 
Hence
\small
\[ \left| \frac{S_1(1, 1+k\delta) - \widehat{d}_g(1,1+k\delta)^2}{(k\delta)^3} - \frac{S_1(1, 1+k\delta) - d_g(1,1+k\delta)^2}{(k\delta)^3} \right| \leq \frac{C|k|\delta^5}{2880 |k\delta|^3} \left(2 d_g(1, 1+k\delta) + \frac{C|k|\delta^5}{2880}\right)  \]
\normalsize
and similarly
\small
\[ \left| \frac{S_1(1, 1+k\delta) - \widehat{d}_g(1,1+k\delta)^2}{(k\delta)^4} - \frac{S_1(1, 1+k\delta) - d_g(1,1+k\delta)^2}{(k\delta)^4} \right| \leq \frac{C|k|\delta^5}{2880 |k\delta|^4} \left(2 d_g(1, 1+k\delta) + \frac{C|k|\delta^5}{2880}\right) \]
\normalsize
for $k \in \Z \setminus \{0\}$ such that $ k\delta \in [-.1, .1]$. 
\end{remark}

\textbf{Convergence of the discrete Laplacian.} We also provide an example of the pointwise convergence of the discrete Laplacian in \cref{fig:dilation-gl-convergence}. For the rest of the subsection, we take $\beta = 1$. Although in previous sections we assumed that $\M$ does not have a boundary, for a manifold with boundary, the pointwise convergence of discrete Laplacians can be shown at points away from the boundary, following a similar proof. We use rejection sampling to sample $x_1, \ldots, x_N$ i.i.d. from the uniform distribution with respect to $g$ (as in (\ref{eq:dilation-riem-metric}), with $\beta = 1$) on $\M = [.8, 1.2]$, which has density 
\begin{equation}
\label{eq:sampling-density}
P(x) = \frac{\sqrt{g(x)}}{\mathrm{vol}_g (\M)}
\end{equation}
with respect to the Lebesgue measure on $[.8, 1.2]$. We set $h(x) = e^{-x/2}$ and $\eps_N = .15 N^{-1/3.01}$ and test using the function $f(x) \vc = x^4 - 8x$. As noted previously, with this choice of $h$, the constant $m_2$ in (\ref{eq:def_m_2}) is given by $m_2 = (2\pi)^{m/2} = \sqrt{2\pi}$. Similar to \cref{fig:rotating-particle-gl-conv}, we plot
\[x \mapsto \frac{2 \mathrm{vol}_g(\M)}{m_2}\frac{L^{(\eps_N, N)} f(x)}{N\eps_N^{m/2 + 1}},\] 
so that when $L^{(\eps_N, N)}f$ is constructed using $D(x, y) \vc= S_1(x, y)$, we expect convergence to
\[\Delta_g f \vc = -\frac{1}{\sqrt{g}} \frac{d}{dx} \left( \frac{1}{\sqrt{g}} \frac{df}{dx} \right) = -\frac{1}{g} \frac{d^2 f}{dx^2} + \frac{1}{2g^2} \frac{dg}{dx} \frac{df}{dx}
\]
away from the boundary. 
We use the closed-form formulae for $g$ and $\frac{dg}{dx}$ and approximate $\mathrm{vol}_g(\M)$ using our approximation for $d_g(.8, 1.2)$ described above.

For comparison, we also plot 
$x \mapsto \frac{2 \mathrm{vol}_g(\M)}{m_2}\frac{L^{(\eps_N, N)} f(x)}{N\eps_N^{m/2 + 1}}$ where $L^{(\eps_N, N)} f$ is constructed using the squared Euclidean distance $D(x, y) \vc= (x-y)^2$ instead of $S_1(x, y)$. Since we are sampling from the probability measure on $[0.8, 1.2]$ with density 
$P$ as in (\ref{eq:sampling-density}), we expect that the limiting function is instead
\[ \frac{2 \mathrm{vol}_g(\M)}{m_2} \left( \frac{m_2}{2} \left( -P \frac{d^2f}{dx^2} - 2 \frac{df}{dx} \frac{dP}{dx} ) \right)\right) = -\sqrt{g} \frac{d^2f}{dx^2} - \frac{1}{\sqrt{g}} \frac{df}{dx} \frac{dg}{dx}. \]
This is confirmed in the second subplot of \cref{fig:dilation-gl-convergence}.

In \cref{fig:dilation-lap-eig}, we map the $i$th sample to $(v^{(1)}_i, v^{(2)}_i)$, where $v^{(1)}, v^{(2)}$ are again the first two non-constant eigenvectors of the normalized graph Laplacian $(\D^{(\eps_N, N)})^{-1}\mathcal{L}^{(\eps_N, N)}$ constructed with $D = S_1$. Each point is colored according to the value of $x_i$.
For manifolds with boundary, we expect that for $x_1, \ldots, x_N$ sampled i.i.d. uniformly from $(\M, g)$, with high probability the eigenvectors of the graph Laplacian (appropriately scaled) converge to the Neumann eigenfunctions of the Laplace-Beltrami operator as $N \to \infty$, as is the case when $(\M, g)$ is a Riemannian submanifold of $\R^d$ and the squared Euclidean distance is used in the construction of the graph Laplacian \cite{coifman2006diffusion, singer2017spectral}. 

We also observe in \cref{fig:dilation-lap-eig} that the eigenvector $v^{(1)}$ gives us an ordering of the samples. Indeed, this is very similar to the spectral algorithm for seriation proposed by Atkins et al. \cite{atkins1998spectral}, which uses a Fiedler eigenvector (an eigenvector with the smallest eigenvalue among those orthogonal to the all ones vector) of the \textit{unnormalized} graph Laplacian to order samples. See \cite{warren2025principal} for an example of seriation using a graph Laplacian constructed from the squared Wasserstein-2 distance.

\section{Conclusion and future work}
\label{sec:concl}
In this work, we observe that a smooth symmetric divergence $D \colon \M \times \M \to \R_{\geq 0}$ satisfying Assumption \ref{assump:divergence} gives us an approximation of the squared geodesic distance $d_g^2$ (for $g$ as in (\ref{eq:hess-riem-metric})) up to \textit{fourth} order, in the sense that there exists $K > 0$ such that
 \[ \left| D(p, q) - d_g(p, q)^2 \right| \leq K d_g(p, q)^4 \]
for all $p, q \in \M$. This is a similar error bound to when $(\M, g)$ is a Riemannian submanifold of $\R^d$ and $D$ is given by the squared Euclidean distance, which is the typical setting for manifold learning. As an example of the kinds of results we expect to carry over from the Euclidean setting, we prove the pointwise convergence of graph Laplacians.

One motivation for this work was to better understand manifold learning with Sinkhorn divergences. To this end, we describe two models (Assumptions \ref{assump:manifold-hyp} and \ref{assump:manifold-hyp-deform}) and discuss how to check for the smoothness and non-degeneracy required for Assumption \ref{assump:divergence} under these models. We also study two concrete examples in \cref{sec:examples}. 

However, in our analysis, we fix the regularization parameter $\beta > 0,$ and we assume that we can compute the Sinkhorn divergence exactly. One direction for future work would be to study the dependence of various quantities (e.g., the Riemannian metric, $K$ from \cref{prop:fourth-order-error}) on $\beta$; see \cite{Lavenant2025} for an informal argument, for measures with density, on the convergence as $\beta \to 0$ of the Riemannian metric induced by $S_\beta$ to the Riemannian structure of Wasserstein-2 space. Another direction would be to do a more detailed analysis of manifold learning algorithms when the Sinkhorn divergences are computed via Sinkhorn's algorithm (or a variant thereof), which is used often in practice.

\section*{Acknowledgments}
This work was supported by
the National Science Foundation Graduate Research Fellowship Program under Grant No. DGE-2444107
and the Simons Foundation Math+X Investigator Award to Amit Singer. Any opinions,
findings, and conclusions or recommendations expressed in this material are those of the
author and do not necessarily reflect the views of the National Science Foundation. We thank Amit Singer for comments on an earlier draft and Gilles Mordant for helpful discussions.

\bibliographystyle{abbrv}
\bibliography{refs}

\appendix

\section{Proofs for \cref{sec:gl-conv}}
\label{sec:error-proof-app}
\begin{proof}[Proof of \cref{prop:fourth-order-error}]
    It suffices to prove that there exists $K, r > 0$ such that
    \begin{equation}
        \label{eq:4th-order-error-locally}
        \left| D(p, q) - d_g(p, q)^2 \right| \leq K d_g(p, q)^4
    \end{equation}
    for all $p,q \in \M$ such that $d_g(p, q) < r$. Indeed, since we assumed that $D$ is continuous and $\M$ is compact,
    \[ \sup_{ p, q \in \M, d_g(p, q) \geq r }  \frac{\left| D(p, q) - d_g(p, q)^2 \right|}{d_g(p, q)^4} < \infty. \]

    First consider any unit-speed, length-minimizing geodesic segment $\gamma \colon [-\delta, \delta] \to \M$ on $(\M, g)$. 
    Let $f(s, t) \vc = D(\gamma(s), \gamma(t))$ for $s, t \in [-\delta, \delta]$, and for each $s \in [-\delta, \delta],$ let $f_s \vc = f(s, \cdot)$. Assume $\delta$ is small enough so that $f \in C^4([-\delta, \delta] \times [-\delta, \delta])$. By Assumption \ref{assump:divergence}, $f$ is symmetric, $f \geq 0$ and $f(s, s) = 0$ for all $s \in [-\delta, \delta]$. Additionally, for any fixed $s \in (-\delta, \delta)$,
    \[ \lim_{t \to s} \frac{f(s, t)}{(t-s)^2} = \lim_{t \to s} \frac{D(\gamma(s), \gamma(t))}{(t-s)^2} = g_{\gamma(s)}(\gamma'(s), \gamma'(s)) = 1 \]
    since $\gamma$ is a unit-speed geodesic segment and we defined $g$ via
    \[ g_p = \frac{1}{2}\mathrm{Hess}_p (D(p, \cdot)) \]
    for all $p \in \M$.
    Thus $f_s''(s) = 2$ for all $s \in (-\delta, \delta)$, so \cref{lemma:third-order-vanish} implies that
    \begin{equation}
    \label{eq:pointwise-4th-order-error}
    \left| D(\gamma(0), \gamma(t)) - d_g(\gamma(0), \gamma(t))^2\right| = \left| D(\gamma(0), \gamma(t)) - t^2 \right| \leq \frac{\sup_{s \in (-\delta, \delta)} \left| f_0^{(4)}(s) \right|}{24} d_g(\gamma(0), \gamma(t))^4 
    \end{equation}
    for all $t \in (-\delta, \delta)$. Taking $q = \gamma(0)$ and $v = \gamma'(0)$, we have $f_0(t) = D(\gamma(0), \gamma(t)) = D(q, \exp_q(tv))$ for $t \in (-\delta, \delta)$. Defining $G_{q, v}(t) \vc= D(q, \exp_q(tv))$ for $(q, v) \in T\M$, it thus suffices to show that there exists $K, r > 0$ such that for all $q \in \M$ and $v \in T_q \M$ such that $g_q(v, v) = 1$,
    \begin{enumerate}
        \item $s, t \mapsto D(\exp_q(sv), \exp_q(tv))$ belongs to $C^4([-r, r] \times [-r, r])$, and
        \item $\left| G_{q, v}^{(4)} (t) \right| < K$ for all $t \in (-r, r)$.
    \end{enumerate}
    
    This follows from the compactness of $\M$ and is almost the same as \cite[Lemma 3.5]{xu2025manifold}. We provide the details for completeness.
    First fix any $p \in \M$. 
    Define $F \colon T\M \to \M \times \M$ by
    \begin{align*}
        F(q, v) \vc = (q, \exp_q(v)).
    \end{align*}
    (As $\M$ is compact, $F$ is well-defined on all of $T\M$ by Hopf-Rinow; if $\M$ were not compact, $F$ is still well-defined on a neighborhood of $(p, 0)$.)
    Then by the inverse function theorem, there exists an open neighborhood $U$ of $(p, 0)$ and an open neighborhood $V$ of $(p, p)$ such that $F |_U \colon U \to V$ is a diffeomorphism \cite[Theorem 3.7]{do1992riemannian}. 
    Making $U$ smaller if necessary, we can also assume $D$ is smooth on $V$, so $D \circ F$ is smooth on $U$.
    For small enough $r(p) > 0$, 
    the geodesic ball \[B(p, r(p)) \vc= \{q \in \M \colon d_g(p,q) < r(p)\}\]
    is strongly convex normal neighborhood of $p$ (see \cite[Ch. 3]{do1992riemannian}) and $\overline{B(p, r(p))} \times \overline{B(p, r(p))} \subset V$.

    For each $q \in \overline{B\left(p, \frac{r(p)}{2}\right)}$, $v \in T_q \M$ such that $g_q(v, v) = 1$ and $t \in \left[-\frac{r(p)}{2}, \frac{r(p)}{2}\right]$, 
    recall that by definition we have
    \[ G_{q, v}(t) \vc= D(F(q, tv)) = D(q, \exp_q(tv)). \]
    As $B(p, r(p))$ is strongly convex, $\exp_q(tv) \in \overline{B(p, r(p))}$ 
    for all $q \in \overline{B\left(p, \frac{r(p)}{2}\right)}$, $v \in T_q \M$ such that $g_q(v, v) = 1$ and $t \in \left[-\frac{r(p)}{2}, \frac{r(p)}{2}\right]$,
    and so for all such $q, v, t$, we have
    \[ (q, tv) \in F^{-1}\left( \overline{B(p, r(p))} \times \overline{B(p, r(p))} \right) \subset U. \]
    As $D \circ F$ is smooth on $U$ and $F^{-1}\left( \overline{B(p, r(p))} \times \overline{B(p, r(p))} \right)$ is compact, there exists $K_p > 0$ such that
    the fourth derivative
    \[ \left| G_{q, v}^{(4)}(t) \right| < K_p \]
    for all $q \in \overline{B\left(p, \frac{r(p)}{2}\right)}$, $v \in T_q \M$ such that $g_q(v, v) = 1$ and $t \in \left(-\frac{r(p)}{2}, \frac{r(p)}{2}\right)$.

    As $\M$ is compact, one can cover $\M$ with finitely many balls of the form $B\left(p, \frac{r(p)}{2} \right)$, and so there exist $K, r > 0$ such that $s, t \mapsto D(\exp_q(sv), \exp_q(tv))$ belongs to $C^4([-r, r] \times [-r, r])$ and $\left| G_{q, v}^{(4)} (t) \right| < K$ for all $t \in (-r, r)$ for all $(q, v)$ in the unit tangent bundle of $\M$.
\end{proof}

\begin{proof}[Proof of \cref{lemma:third-order-vanish}]

    For any $x, y \in (a, b)$, we have the Taylor expansions
    \begin{equation}
    \label{eq:taylor-exp-x}
        f(x, y) = \frac{1}{2} \frac{\partial^2 f}{\partial y^2}(x,x)(y-x)^2 + \frac{1}{6} \frac{\partial^3 f}{\partial y^3}(x, x)(y-x)^3 + \frac{1}{24}\frac{\partial^4 f}{\partial y^4}(x, s)(y-x)^4
    \end{equation}
    \begin{equation}
    \label{eq:taylor-exp-y}
    f(y, x) = \frac{1}{2} \frac{\partial^2 f}{\partial y^2}(y,y)(x- y)^2 + \frac{1}{6}\frac{\partial^3 f}{\partial y^3}(y,y)(x-y)^3+ \frac{1}{24} \frac{\partial^4 f}{\partial y^4}(y, t)(x-y)^4
    \end{equation}
    for some $s, t$ between $x$ and $y$.
   Since $f$ is symmetric, 
   we can equate (\ref{eq:taylor-exp-x}) and (\ref{eq:taylor-exp-y}) to obtain
   \[ \frac{\frac{\partial^2 f}{\partial y^2}(x,x) -  \frac{\partial^2 f}{\partial y^2}(y,y)}{2(y-x)} + \frac{1}{6} \left( \frac{\partial^3 f}{\partial y^3}(x,x) + \frac{\partial^3 f}{\partial y^3}(y, y) \right) + \frac{1}{24} \left( \frac{\partial^4 f}{\partial y^4}(x, s) - \frac{\partial^4 f}{\partial y^4}(y, t) \right)(y-x) = 0 \]
   for any $x, y \in (a, b)$ such that $x \neq y$.
    Fixing $x \in (a, b)$ and taking $y\to x$, we then have
    \[ -\frac{1}{2}\left( \frac{\partial^3 f}{\partial x \partial y^2}(x, x) + \frac{\partial ^3 f}{\partial y^3}(x, x)  \right) + \frac{1}{3} \frac{\partial^3 f}{\partial y^3}(x, x) = 0, \]
    which implies (\ref{eq:3rd-order-derivs}) upon rearranging.
    To prove (\ref{eq:4th-order-error-2}), observe that if we have
    \[ f_x''(x) = \frac{\partial^2 f}{\partial y^2}(x, x) = 2 \]
    for all $x \in (a, b)$, then
    \[ \frac{\partial^3 f}{\partial x \partial y^2}(x, x) + \frac{\partial^3 f}{\partial y^3}(x, x) = 0 \]
    for all $x \in (a, b)$. Using (\ref{eq:3rd-order-derivs}), we obtain
    \[ -\frac{1}{3}\frac{\partial^3 f}{\partial y^3}(x, x) + \frac{\partial^3 f}{\partial y^3}(x, x) = 0 \]
    for all $x \in (a, b)$, so $f_x'''(x) = 0$ for all $x \in (a, b)$ and (\ref{eq:4th-order-error-2}) follows from Taylor's theorem with the Lagrange remainder.
\end{proof}

\subsection{Pointwise convergence of the graph Laplacian (\cref{thm:glconv})}

\label{sec:gl-conv-app}

Throughout this section, assume that $D, g$ are given by Assumption \ref{assump:divergence} and that $h$ satisfies Assumption \ref{assump:lap-eig-kernel}. 
For each $\eps > 0$ and continuous $f \colon \M \to \R$, we follow \cite{coifman2006diffusion} and define $G_\eps f$ by
\[ G_\eps f (p) \vc = \frac{1}{\eps^{m/2}} \int_\M \K_\eps(p, q) f(q) dV_g(q) \]
for each $p \in \M$,
where $dV_g$ denotes the Riemannian volume form for $g$ and we recall from (\ref{eq:lap-eig-kernel-def}) that $\K_\eps \colon \M \times \M \to \R$ is given by
\[ \K_\eps(p, q)\vc = h \left( \frac{D(p, q)}{\eps} \right). \]
Using the notation from \cite[Lemma 8]{coifman2006diffusion}, we have the following proposition:
\begin{proposition}
\label{prop:integral-approx}
    Fix any $f \in C^3(\M)$. Then for each $p \in \M$,
    \begin{equation}
    \label{eq:G-eps-expansion}
    G_\eps f(p) = m_0 f(p) + \frac{\eps m_2}{2} \left( \omega(p) f(p) - \Delta_g f(p) \right) + O(\eps^{3/2}), 
    \end{equation}
    as $\eps \to 0^+$,
    where $\omega \colon \M \to \R$ depends on $g$ and $D$, and
    \[ m_0 \vc = \int_{\R^m} h(\|v\|^2) dv \]
    \[ m_2 \vc = \int_{\R^m} v_1^2 h(\|v\|^2) dv. \]
    If $f \in C^4(\M)$, the error can be improved to $O(\eps^2)$.
    Moreover, the multiplicative constant for the $O(\eps^{3/2})$ ($O(\eps^2)$ if $f \in C^4(\M)$) error can be made uniform across $p \in \M$.
\end{proposition}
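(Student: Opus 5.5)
We follow \cite[Lemma 8]{coifman2006diffusion}, working in normal coordinates and using \cref{prop:fourth-order-error} to replace $D$ by the squared geodesic distance inside the kernel at a cost of higher order.

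\textbf{Step 1: localization.} Fix $p \in \M$ and split the integral defining $G_\eps f(p)$ into the geodesic ball $B(p, \eps^{\gamma})$ for some $\gamma \in (1/3, 1/2)$ and its complement. Take $\eps$ small enough that $\eps^\gamma$ is below the uniform radius $r$ from the proof of \cref{prop:fourth-order-error} and below the injectivity radius of $\M$; both are uniform in $p$ by compactness. On the complement, \cref{prop:fourth-order-error}, continuity, positivity of $D$ off the diagonal and compactness give $D(p,q) \geq c\, d_g(p,q)^2 \geq c\,\eps^{2\gamma}$. The exponential decay of $h$ then makes this part $O(\eps^{-m/2} e^{-c\eps^{2\gamma - 1}})$, which is smaller than any power of $\eps$, uniformly in $p$.

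\textbf{Step 2: kernel replacement.} Inside the ball, write $q = \exp_p(v)$ with $\|v\|_{g_p} = d_g(p,q)$, using $g_p$-orthonormal coordinates. Set $D(p,q) = \|v\|^2 + e(p,v)$. \cref{prop:fourth-order-error} gives $|e| \leq K\|v\|^4$. Expand
\[ h\!\left(\tfrac{\|v\|^2 + e}{\eps}\right) = h\!\left(\tfrac{\|v\|^2}{\eps}\right) + h'\!\left(\tfrac{\|v\|^2}{\eps}\right)\tfrac{e}{\eps} + O\!\left(\tfrac{e^2}{\eps^2}\sup |h''|\right). \]
Since $h$ is only $C^2$ on $(0,\infty)$, the point $\|v\|^2/\eps$ near $0$ needs care. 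One fix is to use the mean value form of the expansion together with the exponential bounds on $h', h''$. The other is to note that, after the change of variables, both $e/\eps$ and $e^2/\eps^2$ are $O(\eps)$ times a polynomial in $u = v/\sqrt{\eps}$.

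Because $D$ is smooth, $e(p,v)$ is a smooth function of $v$ whose Taylor expansion starts at order four, say $e = e_4(p,v) + O(\|v\|^5)$ with $e_4$ a quartic form. After the substitution $v = \sqrt{\eps}\,u$, the $h'$ term therefore contributes $\eps\, f(p) \int h'(\|u\|^2)\, e_4(p,u)\, du$ plus $O(\eps^{3/2})$. This is one piece of $\omega(p)$.

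\textbf{Step 3: the standard expansion.} Also expand $f(\exp_p v) = f(p) + \nabla f \cdot v + \tfrac12 v^\top \nabla^2 f\, v + O(\|v\|^3)$ and the volume density $\sqrt{\det g}(v) = 1 - \tfrac16 \mathrm{Ric}_p(v,v) + O(\|v\|^3)$ in normal coordinates. After the substitution $v = \sqrt\eps u$, odd moments vanish by the symmetry of $h(\|u\|^2)$. This yields $m_0 f(p)$ at zeroth order, $\tfrac{\eps m_2}{2}\bigl(-\Delta_g f(p)\bigr)$ from the trace of the Hessian (recall $\Delta_g = -\mathrm{div}\,\grad$), and a curvature term proportional to the scalar curvature, giving the second piece of $\omega(p)$.

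The remainders are third-order terms in $v$, i.e.\ $O(\eps^{3/2})$. When $f \in C^4$ and the expansions are carried one order further, the cubic terms are odd and integrate to zero, improving the error to $O(\eps^2)$. The $O(\|v\|^5)$ part of $e$ is likewise handled by parity at the next order.

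\textbf{Uniformity and main obstacle.} All constants in Steps 1--3 are bounds on derivatives of $D\circ F$, of $f$, and of the metric in normal coordinates over a compact subset of $T\M$, as in the proof of \cref{prop:fourth-order-error}, together with moments of $h, h', h''$. They are therefore uniform in $p$. I expect the main obstacle to be Step 2: justifying the kernel perturbation with only $C^2$ regularity of $h$ away from $0$, and checking that the correction $e$ contributes exactly at order $\eps$. The fourth-order bound from \cref{prop:fourth-order-error} is precisely what prevents an $O(\eps^{1/2})$ term from appearing at that step.
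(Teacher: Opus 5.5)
Your proposal is correct and follows essentially the same route as the paper. You localize, pass to normal coordinates, and expand $h$ to first order with a Lagrange remainder controlled by $|D(p,q)-\|v\|^2|\le K\|v\|^4$ from \cref{prop:fourth-order-error}, which also removes the cubic term of $D(p,\exp_p(v))$; you then expand $f$ and $\sqrt{\det g}$ and use parity plus compactness for uniformity. The only difference is that you localize to a shrinking ball $B(p,\eps^\gamma)$ \`a la Coifman--Lafon, whereas the paper uses a fixed radius $r<\min(\mathrm{inj}(\M,g),\sqrt{1/(2K)},\dots)$ so that the intermediate point satisfies $s\ge \|v\|^2/(2\eps)$ and hence $|h''(s)|\le c_2e^{-c\|v\|^2/(2\eps)}$. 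That bound is exactly what resolves the Step 2 ``obstacle'' you flag, and it works equally well on your shrinking ball, whereas a crude $\sup|h''|$ bound would not give $O(\eps^2)$.
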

This proposition is well-known when $(\M, g)$ is a compact Riemannian submanifold of $\R^d$ and $D$ is the squared Euclidean distance \cite[Lemma 8]{coifman2006diffusion} (see also \cite[Lemma A.5]{cheng2022convergence}).
\cref{prop:integral-approx} allows us to control the bias of the discrete Laplacian $L^{(\eps_N, N)}f$, and so \cref{thm:glconv} follows from standard concentration inequalities.

\begin{proof}[Proof of \cref{thm:glconv}]
We first show that $L^{(\eps_N, N)}f$ concentrates around its mean. This step is the same as if $(\M,g)$ were a compact Riemannian submanifold of Euclidean space and $D$ were the squared Euclidean distance, as it does not depend on any particular properties of the Euclidean distance (see \cite[Section 4.1]{kileel2021manifold} or \cite[Section 3]{xu2025manifold}).
For completeness, we briefly review the argument: for each $\eps > 0$ and $j \in \N$, upon defining the random variables $X_j^{(\eps)} \vc = \K_\eps(x, x_j)$, $Y_j^{(\eps)} \vc = \K_\eps(x, x_j)f(x_j)$, we have
\[ L^{(\eps, N)}f(x) = \sum_{j = 1}^N f(x) X_j^{(\eps)}-Y_j^{(\eps)},\]
and we can apply Hoeffding's inequality
since $f(x) X_1^{(\eps)} - Y_1^{(\eps)}, ..., f(x) X_N^{(\eps)}- Y_N^{(\eps)}$ are i.i.d. and $\left|f(x) X_j^{(\eps)} - Y_j^{(\eps)}\right| \leq 2c_0\|f\|_\infty$ almost surely for any $j \in \N$.
This gives us
\[ \mathbb{P}\left( \left| \frac{L^{(\eps, N)}f(x)}{N} - \E\left[ f(x) X_1^{(\eps)}-Y_1^{(\eps)} \right] \right| > \delta   \right) \leq 2e^{\textstyle-\frac{N\delta^2}{8c_0^2\|f\|_\infty^2}} \]
for any $\delta > 0$.
As $\eps_N = \Omega(N^{-\frac{1}{m+2 + \alpha}})$ for some $\alpha > 0$,
we have
\[ \sum_{N \in \N} \mathbb{P}\left( \frac{1}{\eps_N^{m/2 +1}} \left| \frac{L^{(\eps_N, N)}f(x)}{N} - \E\left[  f(x) X_1^{(\eps_N)}  - Y_1^{(\eps_N)}\right] \right| > \delta   \right) \leq \sum_{N \in \N} 2e^{\textstyle-\frac{N\left(\delta \eps_N^{m/2 + 1} \right)^2}{8c_0^2\|f\|_\infty^2}}< \infty, \]
for any $\delta > 0$, so one can conclude
\[  \mathbb{P}\left( \frac{1}{\eps_N^{m/2 +1}} \left| \frac{L^{(\eps_N, N)}f(x)}{N} - \E\left[  f(x) X_1^{(\eps_N)}  - Y_1^{(\eps_N)}\right] \right|\xrightarrow[]{n \to \infty} 0 \right) =  1\]
by using Borel-Cantelli and observing that $\delta > 0$ was arbitrary.
It thus remains to prove that
\[ \lim_{N \to \infty} \frac{1}{\eps_N^{m/2 +1}} \E\left[  f(x) X_1^{(\eps_N)} - Y_1^{(\eps_N)} \right] = \frac{m_2}{2}\left( P(x) \Delta_g f(x) - 2g_x( \grad_g f(x), \grad_g P (x) )\right). \]
This follows from expanding the definition of $\E\left[ f(x) X_1^{(\eps_N)} - Y_1^{(\eps_N)} \right]$ and using \cref{prop:integral-approx}:
\begin{align*}
    \frac{1}{\eps_N^{m/2}}\E\left[ f(x) X_1^{(\eps_N)} - Y_1^{(\eps_N)} \right] &=  \frac{1}{\eps_N^{m/2}}\left(f(x) \int \K_{\eps_N}(x, y) P(y) dV_g(y) - \int f(y) \K_{\eps_N}(x, y) P(y) dV_g(y)\right) \\
    &=   f(x) G_{\eps_N} P(x) - G_{\eps_N}[fP](x)  \\
    &= f(x)\left(m_0 P(x) + \frac{\eps_N m_2}{2} \left( \omega(x) P(x) - \Delta_g P(x) \right) + O\left(\eps_N^{3/2}\right)\right) - \\
    & \ \ \ \ \  \left(m_0 f(x)P(x) + \frac{\eps_N m_2}{2} \left( \omega(x) f(x)P(x) - \Delta_g (fP)(x) \right) + O\left(\eps_N^{3/2}\right)\right) \\
    &= \frac{\eps_N m_2}{2} \left( P(x) \Delta_g f(x) - 2g_x( \grad_g f(x), \grad_g P (x) ) \right) + O\left(\eps_N^{3/2}\right).
\end{align*}

\end{proof}

It remains to prove \cref{prop:integral-approx}. It is almost the same as the arguments in \cite{belkin2008towards, coifman2006diffusion}; again we provide the details for completeness.

\begin{proof}[Proof of \cref{prop:integral-approx}]
    Let $U$ be an open neighborhood of the diagonal $\Xi_\M \subset \M \times \M$ such that $D$ is smooth on $U$. 
    Let $\mathrm{inj}(\M, g)$ denote the injectivity radius of $(\M, g)$, and let $K$ be a positive number satisfying
    \begin{equation}
    \label{eq:fourth-order-error}
    \left| D(p, q) -d_g(p, q)^2\right| \leq Kd_g(p, q)^4 
    \end{equation}
    for all $p, q \in \M$. Such a $K > 0$ exists by \cref{prop:fourth-order-error}.
    Recall that we assumed that $\M$ is compact, so we also have $\mathrm{inj}(\M, g) > 0$.
    Choose
    \[ r \in \left( 0,  \min \left( \mathrm{inj}(\M, g), \inf_{(p, q) \in U^c} d_g(p, q), \sqrt{\frac{1}{2K}} \right) \right) \]
    if the complement $U^c$ of $U$ is non-empty, and 
     \[ r \in \left( 0, \min \left( \mathrm{inj}(\M, g), \sqrt{\frac{1}{2K}} \right) \right) \]
     if $U = \M \times \M$.

   \textbf{1. Approximation of $G_\eps f$ with an integral over a geodesic ball.}
    For any $p \in \M$, $\eps \in (0, r)$ and $f \in C^3(\M),$ we can approximate $G_\eps f(p)$ with an integral over the geodesic ball $B(p, r)$ of radius $r$:
    \[
    \left| G_\eps f(p) - \frac{1}{\eps^{m/2}} \int_{B(p, r)} \K_\eps(p, q) f(q) dV_g(q) \right| \leq  \frac{\mathrm{vol}_g(\M \setminus B(p, r)) \sup_{q \in \M \setminus B(p, r)} \left|\K_\eps(p, q) f(q) \right| }{\eps^{m/2}}.
    \]
    Since we assumed that $D$ is continuous and $D(p, q) = 0$ if and only if $p = q$,
    \[ D^* \vc = \inf_{p, q \in \M \colon d_g(p, q) \geq r} D(p, q) \]
    is strictly greater than 0.
    Per our assumptions on $h$ (Assumption \ref{assump:lap-eig-kernel}), 
    \[ 0 \leq \K_\eps(p, q) = h\left( \frac{D(p, q)}{ \eps } \right) \leq c_0 e^{-\frac{c D(p, q) }{\eps}} \leq c_0 e^{-\frac{cD^*}{\eps}}\]
    for all $p, q\in \M$ such that $d_g(p, q) \geq r$. Hence
    \begin{equation}
        \label{eq:error-1}
        \left| G_\eps f(p) - \frac{1}{\eps^{m/2}} \int_{B(p, r)} \K_\eps(p, q) f(q) dV_g(q) \right| \leq \frac{c_0 \mathrm{vol}_g(\M) e^{-\frac{cD^*}{\eps}}\|f\|_\infty }{\eps^{m/2}}
    \end{equation}
    for any $p \in \M$.
    
    \textbf{2. Taylor expansions and integration in normal coordinates.} Fix any $p \in \M$.
    We now approximate $\eps^{-m/2} \int_{B(p, r)} \K_\eps(p, q) f(q) dV_g(q)$ with $m_0 f(p) + \frac{\eps m_2}{2} \left(\omega(p)f(p) - \Delta_g f(p) \right)$, where $\omega \colon \M \to \R$ is to be determined. For this, we use normal coordinates around $p$ and Taylor expansions. For concreteness, we can fix an orthonormal basis $\{e_1, \ldots, e_m\}$ of $T_p \M$ to identify $T_p\M$ with $\R^m$. For any $q = \exp_p(v)$, $q\neq p$ with $\|v\| = d_g(p, q) < r$,  
    \begin{equation}
        \label{eq:kernel-tay-exp}
        h\left( \frac{D(p, q)}{\eps} \right) = h \left( \frac{\|v\|^2}{\eps} \right) + \left(\frac{D(p, q) - \|v\|^2}{\eps}\right) h'\left( \frac{\|v\|^2}{\eps} \right) + \frac{1}{2}\left(\frac{D(p, q) - \|v\|^2}{\eps}\right)^2 h''(s)
    \end{equation}
    for some $s$ between $D(p, q)/\eps$ and $\|v\|^2/\eps$, and moreover
    \[ \left| D(p, q) - \|v\|^2 \right| = \left| D(p, q) -d_g(p, q)^2\right| \leq Kd_g(p, q)^4 = K\|v\|^4. \]   
    By the choice of $r$, we have $r^2 \leq \frac{1}{2K}$, so the right-hand side can be bounded by
    \[ K\|v\|^4 \leq Kr^2 \|v\|^2 \leq \frac{\|v\|^2}{2} \]
    for all $v \in T_p \M$ such that $\|v\| < r$.
    By the exponential decay of the second derivative of $h$ in Assumption \ref{assump:lap-eig-kernel}, for $v, s$ as in (\ref{eq:kernel-tay-exp}),
    \[ \left|h''(s)\right| \leq c_2 e^{-cs} \leq c_2 e^{-\frac{c\left(\|v\|^2 - K \|v\|^4\right)}{\eps}} \leq c_2e^{-\frac{c\|v\|^2}{2\eps}}. \]
    Since
    \begin{equation} 
    \label{eq:order-of-moments}
    \int_{\R^m} \|v\|^k e^{-\frac{\alpha\|v\|^2}{\eps}}dv = O\left( \eps^{\frac{m+k}{2}} \right) 
    \end{equation}
    for any fixed $\alpha > 0$ and $k \in \N$ (use a change of variables $v \mapsto v/\sqrt{\eps}$), we have
    \begin{align*}
        \int_{\|v\| < r} \left(\frac{D(p, q) - \|v\|^2}{\eps}\right)^2 e^{-\frac{c\|v\|^2}{2\eps}} dv \leq \frac{K^2}{\eps^2} \int_{\R^m} \|v\|^8e^{-\frac{c\|v\|^2}{2\eps}}dv = O\left( \eps^{\frac{m+4}{2}} \right),
    \end{align*}
    so that approximating $h$ with its first-order Taylor expansion results in an $O(\eps^2)$-error:
    \small{
    \begin{align*}
        \frac{1}{\eps^{m/2}}& \int_{B(p, r)} \K_\eps(p, q) f(q) dV_g(q) \\
        &= \frac{1}{\eps^{m/2}} \int_{\|v\|< r} h\left( \frac{D(p, \exp_p(v))}{\eps} \right) f\left( \exp_p(v) \right)\sqrt{ \det \left[g_{\exp_p(v)}\right] } dv \\
        &= \frac{1}{\eps^{m/2}} \int_{\|v\| < r} \left( h \left( \frac{\|v\|^2}{\eps} \right) + \left(\frac{D(p, \exp_p(v)) - \|v\|^2}{\eps}\right) h'\left( \frac{\|v\|^2}{\eps} \right) \right) f(\exp_p(v)) \sqrt{ \det \left[g_{\exp_p(v)}\right] } dv + O(\eps^2).
    \end{align*}
    }\normalsize
    Here we use the shorthand $\left[g_{\exp_p(v)}\right]$ for the matrix
    \[ \left[g_{\exp_p(v)}\right] \vc = \left[ g_{\exp_p(v)} \left( \frac{\partial }{\partial v_i}, \frac{\partial }{\partial v_j} \right)\right]_{i, j} \]
    for each $v \in T_p \M, \|v\| \leq r$.
    We can also Taylor expand $v \mapsto f(\exp_p(v))$, $v \mapsto \sqrt{ \det\left[g_{\exp_p(v)}\right]}$ and $v \mapsto D(p, \exp_p(v))$ around $0$. Following the notation in \cite[Appendix B]{coifman2006diffusion}, we use $Q_{p, k}$ to denote a generic homogeneous polynomial of degree $k$ whose coefficients can depend on $p$ (i.e., the particular polynomial may change from line to line). Then as $v \to 0$, we have
    \begin{equation}
    \label{eq:f-tay-exp}
    f(\exp_p(v)) = f(p) + \langle \grad_g f(p), v\rangle + \frac{1}{2}\mathrm{Hess}_p f(v, v) + O(\|v\|^3)
    \end{equation}
    \begin{equation}
    \label{eq:jac-tay-exp}
    \sqrt{\det \left[g_{\exp_p(v)}\right]} = 1 - \frac{\mathrm{Ric}_p(v, v)}{6} + Q_{p, 3}(v) + O(\|v\|^4)
    \end{equation}
    \begin{equation}
        \label{eq:taylor-expand-D}
        D(p, \exp_p(v)) = \|v\|^2 + Q_{p, 4}(v) + Q_{p, 5}(v) + O(\|v\|^6)
    \end{equation}
    for $v \in T_p\M$ such that $\|v\| < r$. Here $\mathrm{Ric}_p$ denotes the Ricci curvature tensor at $p$. Notably, $v \mapsto D(p, \exp_p(v))$ has no third-order term in its Taylor expansion because of \cref{prop:fourth-order-error}. 
    If we additionally have $f \in C^4(\M)$, then we have
    \begin{equation}
    \label{eq:f-tay-exp-c4}
    f(\exp_p(v)) = f(p) + \langle \grad_g f(p), v\rangle + \frac{1}{2}\mathrm{Hess}_p f(v, v) + Q_{p, 3}(v) + O(\|v\|^4).
    \end{equation}
    Observing that odd functions integrate to 0 and using the exponential decay of $h, h'$ from Assumption \ref{assump:lap-eig-kernel} with (\ref{eq:order-of-moments}), we obtain
    \small
    \begin{align*}
        \frac{1}{\eps^{m/2}} & \int_{\|v\| < r}  h \left( \frac{\|v\|^2}{\eps} \right)  f(\exp_p(v)) \sqrt{ \det \left[g_{\exp_p(v)}\right] } dv \\
        &= \frac{1}{\eps^{m/2}} \int_{\|v\| < r} h \left( \frac{\|v\|^2}{\eps} \right) \left( f(p)  \left( 1 - \frac{\mathrm{Ric}_p(v, v)}{6} \right) + \frac{1}{2}\mathrm{Hess}_p f(v, v) \right)  dv + O(\eps^{3/2}),
    \end{align*}
    \normalsize
    where the error can be improved to $O(\eps^2)$ if $f \in C^4(\M)$ and we can use (\ref{eq:f-tay-exp-c4}),
    and also
    \small
    \begin{align*}
    \frac{1}{\eps^{m/2}} \int_{\|v\| < r}  &\left(\frac{D(p, \exp_p(v)) - \|v\|^2}{\eps}\right)  h'\left( \frac{\|v\|^2}{\eps} \right)  f(\exp_p(v)) \sqrt{ \det \left[g_{\exp_p(v)}\right] } dv \\
    &= \frac{1}{\eps^{m/2}} \int_{\|v\| < r} \left(\frac{Q_{p, 4}(v)}{\eps}\right) h'\left( \frac{\|v\|^2}{\eps} \right) f(p) dv + O(\eps^{2}).
    \end{align*}
    \normalsize
    Again using the exponential decay of $h$ and $h'$, we can approximate the above integrals over $\{ v \in T_p \M \colon \|v\|< r\}$ with their respective integrals over $\R^m$ with an error smaller than $O(\eps^2)$, so that altogether we obtain
    \small
    \begin{align*}
    \frac{1}{\eps^{m/2}}& \int_{B(p, r)} \K_\eps(p, q) f(q) dV_g(q) \\
    &= \frac{1}{\eps^{m/2}} \int_{\R^m} h \left( \frac{\|v\|^2}{\eps} \right) \left( f(p)  \left( 1 - \frac{\mathrm{Ric}_p(v, v)}{6} \right) + \frac{1}{2}\mathrm{Hess}_p f(v, v) \right) + \frac{f(p)Q_{p, 4}(v) h'\left( \frac{\|v\|^2}{\eps} \right)}{\eps}    dv + O(\eps^{3/2}) \\
    &= f(p) \left( m_0 + \eps \int_{\R^m} \frac{- h(\|v\|^2)\mathrm{Ric}_p(v, v)}{6}   + Q_{p, 4}(v) h'(\|v\|) dv \right) + \frac{\eps}{2} \int_{\R^m} h(\|v\|^2) \mathrm{Hess}_p f(v, v) dv + O(\eps^{3/2})
    \end{align*}
    \normalsize
    upon the change of variables $v \mapsto v/\sqrt{\eps}$,
    where again the error can be improved to $O(\eps^2)$ if $f \in C^4(\M)$. (\ref{eq:G-eps-expansion}) now follows from the equality
    \[ \int_{\R^m} h(\|v\|^2) \mathrm{Hess}_p f(v, v) dv = -m_2 \Delta_g f(p), \]
    which holds from diagonalizing $\mathrm{Hess}_p f$ and using $\Delta_g f(p) = - \mathrm{Tr}(\mathrm{Hess}_p f)$.
    
    The multiplicative constant for the error can be made uniform over $p \in \M$ by observing that the dependence of the error on $p$ is in terms of finitely many derivatives of 
    \[v \mapsto f(\exp_p(v)), \sqrt{ \det \left[g_{\exp_p(v)}\right] }, D(p, \exp_p(v)),\] which can be uniformly bounded on the compact set $\{ (p, v) \in T \M \colon \|v\| \leq r\}.$
    
\end{proof}

\section{Proofs for \cref{sec:sink-div}}
\label{sec:sink-div-app}
For this section, we use $\eps$ to denote an infinitesimal, not the bandwidth parameter for the graph Laplacian in \cref{subsec:gl-construction}.
\begin{proof}[Proof of \cref{lemma:T_beta-smooth}]
For convenience, let $k_\beta(x, y) \vc = \exp\left( - \|x-y\|^2/\beta \right).$
    Similar to \cite[Lemma 3.13]{Lavenant2025}, it suffices to prove that $A_\beta \colon C^k(X) \times U \to C^k(X)$ given by
        \[ [A_\beta(g, p)](y) \vc= \int g(x) k_\beta(x, y) d\mu_p(x) \]
    is $C^k$. The proof of \cite[Lemma 3.13]{Lavenant2025} establishes the continuous differentiability of $A_\beta$ when $U = (-\delta, \delta)$ for some $\delta > 0$ and $\{\mu_t\}_{t \in (-\delta, \delta)}$ is what they call a $C^k$-perturbation (continuously differentiable in $C^k(X)^*$ with the weak-* topology). Although we could use similar arguments to establish the continuous differentiability of $A_\beta$ in our setting, under Assumption \ref{assump:manifold-hyp-deform}, a more direct proof is possible. Moreover, since this approach generalizes to higher order derivatives and elucidates the dependence of the regularity of $A_\beta$ on the regularity of $\Psi$ with respect to the first variable, we will take this approach.
    
    To start, let us define
    \begin{equation}
    \label{eq:H-def}
    H_g(p, x, y) \vc = g(\Psi(p, x)) k_\beta(\Psi(p, x), y)
    \end{equation}
    \begin{equation}
    \label{eq:H-alpha-def}
    H_{g, \alpha}(p, x, y) \vc = g(\Psi(p, x))\partial_y^\alpha k_\beta(\Psi(p, x), y) = \partial_y^\alpha H_g(p, x, y)
    \end{equation}
    for each $p \in U$, $x, y \in X$, $g \in C^k(X)$ and multi-index $\alpha$ with $|\alpha| \leq k$, so that
    \[ [A_\beta(g, p)](y) = \int H_g(p, x, y) d\mu(x) \]
    for each $g \in C^k(X)$ and $p \in U$.
    One can then check via the dominated convergence theorem that $A_\beta(g, p)$ is indeed in $C^k(X)$ for each $g \in C^k(X)$ and $p \in U$, with
    \begin{equation}
    \label{eq:deriv-A-beta-g-p}
    \partial^{\alpha}[A_\beta(g, p)](y) =\int g(\Psi(p, x)) \partial_y^{\alpha}k_\beta(\Psi(p, x), y) d\mu(x)  = \int H_{g, \alpha}(p, x, y) d\mu(x)
    \end{equation}
    for each multi-index $\alpha$ with $|\alpha| \leq k$.

\bigskip

    \textbf{Continuity of $A_\beta$.}
    We first show that $A_\beta$ is continuous from $C^k(X) \times U$ to $C^k(X)$. Suppose $g_n \to g$ in $C^k(X)$ and $p_n \to p$ in $U$. For any multi-index $\alpha$ with $|\alpha| \leq k$, since $H_{g, \alpha}$
    is continuous on $U \times X \times X$, it is uniformly continuous on
    $\overline{B_r(p)} \times X \times X$ for some small enough $r> 0$. Hence for any $\eps > 0$, there exists $\delta > 0$ such that whenever $|p_n - p| < \delta$,
    \[ \left\| A_\beta(g, p_n) - A_\beta(g, p) \right\|_{C^k(X)} \leq \eps. \]
    Moreover, for any $n \in \N$, one can bound
    \[ \left\| A_\beta(g_n, p_n) - A_\beta(g, p_n) \right\|_{C^k(X)} \leq \|g_n - g\|_{C^0(X)} \sum_{|\alpha| \leq k} \sup_{x, y \in X} \left| \partial_y^{\alpha} k_\beta(x, y) \right|,   \]
    so $\|A_\beta(g_n, p_n) - A_\beta(g, p)\|_{C^k(X)} \to 0$ as $n \to \infty$.

    \bigskip
    
    If $k \geq 1$, by \cite[Proposition 3.5]{lang1999diffgeo}, to show that $A_\beta$ is $C^k$, it suffices to check that the partial derivatives \[D_1 A_\beta \colon C^k(X) \times U \to \mathcal{B}(C^k(X), C^k(X))\] \[D_2 A_\beta \colon C^k(X) \times U \to \mathcal{B}(\R^m, C^k(X))\] exist and are $C^{k-1}$. 

    \bigskip

    \textbf{Existence and continuity of $D_1 A_\beta$ when $k \geq 1$.}
    We discuss $D_1 A_\beta$ first. 
    Upon fixing any $p \in U$, the map $g \mapsto A_\beta(g, p)$ is a bounded linear operator from $C^k(X)$ to itself, so
    for any $g \in C^k(X)$,
    \[D_1 A_\beta(g, p)= \left(f \mapsto A_\beta(f, p)\right).\] 
    To show that $D_1 A_\beta$ is continuous, it suffices to check that if $p_n \to p$ in $U$, then the map $f \mapsto A_\beta(f, p_n)$ converges to $f \mapsto A_\beta(f, p)$ in the $\mathcal{B}(C^k(X), C^k(X))$ norm. 
    The continuity of $A_\beta$ gives us that for each fixed $f \in C^k(X)$, we have $A_\beta(f, p_n) \to A_\beta(f, p)$ in $C^k(X)$; we would now like to show that this convergence is uniform across $f \in C^k(X)$ such that $\|f\|_{C^k(X)} = 1$. 
    Using (\ref{eq:deriv-A-beta-g-p}) and the definition of $H_{f, \alpha}$ from (\ref{eq:H-alpha-def}), we have
    \begin{align*}
       \partial^{\alpha}[A_\beta(f, p_n)](y)& - \partial^{\alpha} [A_\beta(f, p)](y) \\&= \int f(\Psi(p_n, x))\partial_y^\alpha k_\beta(\Psi(p_n, x), y) - f(\Psi(p, x))\partial_y^\alpha k_\beta(\Psi(p, x), y) d\mu(x) \\
       &= \int H_{f, \alpha}(p_n, x, y) - H_{f, \alpha}(p, x, y) d\mu(x),
    \end{align*}
    for any $f \in C^k(X)$, $n \in \N$ and multi-index $\alpha$ such that $|\alpha| \leq k$. Let $r > 0$ be such that $\overline{B_r(p)} \subset U$.
    By the mean value theorem and the chain rule, for each multi-index $\alpha$ such that $|\alpha| \leq k$,
    there exists $C > 0$ (which can depend on the derivatives of $k_\beta$ and the first derivatives of $\Psi$ with respect to the first variable) such that for large enough $n$,
    \[ \sup_{x, y \in X} \left| H_{f, \alpha}(p_n, x, y) - H_{f, \alpha}(p, x, y) \right| \leq \left\|p_n - p \right\| \sup_{\substack{q \in \overline{B_r(p)}, \\x,y \in X}} \left\| \nabla_q H_{f, \alpha}(q, x, y) \right\| \leq C \left\|p_n - p \right\| \|f\|_{C^1(X)}. \]
    It follows that there exists $C' > 0$ such that
    \[ \|A_\beta(f, p_n) - A_\beta(f, p)\|_{C^k(X)} \leq C' \left\| p_n - p\right\| \|f\|_{C^1(X)} \]
    for large enough $n$, so the map $f \mapsto A_\beta(f, p_n)$ does indeed converge to $f \mapsto A_\beta(f, p)$ in the $\mathcal{B}(C^k(X), C^k(X))$ norm for $k \geq 1$. Thus $D_1 A_\beta$ is continuous. 

    \bigskip
    \textbf{Differentiability of $D_1A_\beta$ when $k \geq 2$.}
    Now suppose $k \geq 2$. We want to show that $D_1 A_\beta$ is $C^{k-1}$. Again we look at its partial derivatives: we can see that $D_1^2 A_\beta$ is identically 0 (since $D_1 A_\beta(g, p)$ does not depend on $g$), whereas the existence and continuity of
    \[ D_2 D_1 A_\beta \colon C^k(X) \times U \to \mathcal{B}\left( \R^m, \mathcal{B}(C^k(X), C^k(X)) \right) \]
    follows from a Taylor expansion. For simplicity, let us consider when $m=1$ (i.e., $U \subset \R$); in this case, we can identify $\mathcal{B}(\R, \mathcal{B}(C^k(X), C^k(X)))$ with $\mathcal{B}(C^k(X), C^k(X))$.
    The case $m > 1$ is similar.
    Fix any $p \in U \subset \R$, and define the operator $\Lambda_p \colon C^k(X) \to C^k(X)$ by
    \begin{equation}
    \label{eq:Lambda-p-def}
    \Lambda_p(f) \vc = \left(y \mapsto \int \partial_p H_f(p, x, y) d\mu(x) \right)
    \end{equation}
    for each $f \in C^k(X)$,
    where we recall that $H_f$ is given by (\ref{eq:H-def}).
    One can use the dominated convergence theorem to check that we indeed have $\Lambda_p(f) \in C^k(X)$ for each $f \in C^k(X)$ and $\Lambda_p \in \mathcal{B}(C^k(X), C^k(X))$.
    Moreover, there exists $C > 0$ such that
    \begin{align*}
        \left\|A_\beta(f, p + \eps) - A_\beta(f, p) - \eps \Lambda_p(f)  \right\|_{C^k(X)}  \leq C \|f\|_{C^2(X)} |\eps|^2
    \end{align*}
    for any $f \in C^k(X)$ and $\eps \in \R$ with small enough $|\eps|$. Indeed, since $\Psi$ is twice differentiable with respect to the first variable, we can apply Taylor's theorem with the Lagrange remainder to $H_f(\cdot, x, y)$ and $H_{f, \alpha}(\cdot, x, y)$ for each $x, y \in X$ and multi-index $|\alpha| \leq k$. The existence of such a $C > 0$ then follows from the first two derivatives of $\Psi$ with respect to $p$ being continuous and using that $\partial_p \partial_y^\alpha H_f = \partial_y^\alpha \partial_p H_f$ for each multi-index $|\alpha| \leq k$. Thus
    \[ \lim_{\eps \to 0} \frac{\left\| D_1 A_\beta(g, p+\eps) - D_1 A_\beta(g, p) - \eps \Lambda_p  \right\|_{\mathcal{B}(C^k(X), C^k(X))}}{|\eps|} = 0 \]
    for any $g \in C^k(X)$ and $p \in U$, so $D_2 D_1 A_\beta(g, p)$ exists and is given by $\Lambda_p$, upon identifying 
   $\mathcal{B}(\R, \mathcal{B}(C^k(X), C^k(X)))$ with $\mathcal{B}(C^k(X), C^k(X))$.
   That $D_2 D_1 A_\beta$ is continuous follows from similar arguments as the continuity of $D_1 A_\beta$. The arguments for higher-order derivatives are also similar.
    \bigskip
    
    \textbf{Existence and regularity of $D_2 A_\beta$ when $k \geq 1$.}
    Again, for simplicity, consider $m = 1$, so $U \subset \R$. Fixing any $g \in C^k(X)$ and $p \in U$,
    we have
    \[
        \lim_{\eps \to 0}\frac{\left\| A_\beta(g, p+\eps) - A_\beta(g, p) - \eps \Lambda_p(g) \right\|_{C^k(X)}}{|\eps|} = \lim_{\eps \to 0} \left\| \frac{A_\beta(g, p+\eps) - A_\beta(g, p)}{\eps} - \Lambda_p(g) \right\|_{C^k(X)} = 0
    \]
    by the mean value theorem, the equality $\partial_p \partial_y^\alpha H_g = \partial_y^\alpha \partial_p H_g$ for each multi-index $|\alpha| \leq k$ and a uniform continuity argument. Therefore $D_2 A_\beta \colon C^k(X) \times U \to \mathcal{B}(\R, C^k(X))$ exists and is given by
    \[ D_2A_\beta(g, p) = \left( \eps \mapsto \eps \Lambda_p(g) \right) \]
    for each $g \in C^k(X)$ and $p \in U$.
    Upon making the identification $\mathcal{B}(\R, C^k(X)) \cong C^k(X)$, we see that $D_2 A_\beta$ is continuous:
    if $g_n \to g$ in $C^k(X)$ and $p_n \to p$ in $U$,
    then there exists $C > 0$ (which can depend on $\partial_p\Psi$ and the derivatives of $k_\beta$, but not $n$) such that
    \[ \left\| \Lambda_{p_n} (g_n) - \Lambda_{p_n}(g) \right\|_{C^k(X)} \leq C \|g_n - g\|_{C^1(X)} \]
    for all $n \in \N$,
    and also
    \[ \left\| \Lambda_{p_n} (g) - \Lambda_{p}(g) \right\|_{C^k(X)} \to 0 \]
    as $n \to \infty$ by a uniform continuity argument. Hence
    \[ \left \| \Lambda_{p_n} (g_n) - \Lambda_p (g) \right\|_{C^k(X)} \to 0 \]
    as $n \to \infty$, i.e., $g, p \mapsto \Lambda_p(g)$ is continuous from $C^k(X) \times U$ to $C^k(X)$. The arguments for establishing that $D_2 A_\beta$ is in fact $C^{k-1}$ when $k \geq 2$ are similar to those already discussed.
\end{proof}

\begin{proof}[Proof of \cref{lemma:regularity-of-integrand}]
    It suffices to check that
    \begin{align*}
        H \colon U \times U \times U \times U &\to \widetilde{C}(X)  \\
        p, q, r, s & \mapsto f_{p, q}(\Psi(r, \cdot)), g_{p, q}(\Psi(s, \cdot))
    \end{align*}
    is a $C^k$ map. 
    
    We first establish the continuity of $H$. Suppose $(p_n, q_n, r_n, s_n) \to (p, q, r, s)$ in $U \times U \times U \times U$.
    Then for any $n \in \N$,
    \[ \left\| H(p_n, q_n, r_n, s_n) - H(p, q, r_n, s_n) \right\|_{\widetilde{C}(X) } \leq \left\| \left(f_{p_n, q_n}, g_{p_n, q_n} \right) - \left( f_{p, q}, g_{p, q}\right) \right\|_{\widetilde{C}(X) }, \]
    and the right-hand side goes to 0 as $n \to \infty$ by 
    \cref{lemma:schro-potentials-regularity}. Moreover,
    \begin{multline*}
        \left\| H(p, q, r_n, s_n) - H(p, q, r, s)  \right\|_{\widetilde{C}(X) }  \\
        =\left\| \left(f_{p, q}(\Psi(r_n, \cdot)), g_{p, q} (\Psi(s_n, \cdot)) \right) - \left(f_{p, q}(\Psi(r, \cdot)), g_{p, q} (\Psi(s, \cdot)) \right) \right\|_{\widetilde{C}(X) }
    \end{multline*}
    also goes to 0 as $n \to \infty$, as (upon fixing a representative of $(f_{p, q}, g_{p, q})$)
    \[r', s', x \mapsto f_{p, q}(\Psi(r', x)), g_{p, q}(\Psi(s', x)) \] 
    is uniformly continuous on the compact set $\overline{B_R(r)} \times \overline{B_R(s)} \times X$ for some small enough $R > 0$. Thus
    \[ \left\| H(p_n, q_n, r_n, s_n) - H(p, q, r, s) \right\|_{\widetilde{C}(X) } \to 0 \]
    as $n \to \infty$.

    We now check that the first partial derivatives of $H$ exist and are continuous.
    For the partial derivatives of $H$ with respect to $p, q$, by \cref{lemma:schro-potentials-regularity}, $\widetilde{H}(p, q) \vc = (f_{p,q}, g_{p, q})$ is a $C^k$ map from $U \times U$ to $\widetilde{C}^k(X)$ (and hence $\widetilde{C}(X)$). For each $p, q \in U \times U$, let $\Lambda_{p, q}\vc = D\widetilde{H}(p, q) \in \mathcal{B}(\R^m \times \R^m, \widetilde{C}^k(X) ).$
    Thus for any $r, s \in U$,
    \[ \frac{
    \left\| H(p+\eps_p, q+ \eps_q, r, s) - H(p, q, r, s) - \Lambda_{p, q}(\eps_p, \eps_q) \circ (\Psi(r, \cdot), \Psi(s, \cdot)) \right\|_{\widetilde{C}(X) }
    }{\|(\eps_p, \eps_q) \|} \to 0
    \]
    as $(\eps_p, \eps_q) \to 0$. Hence for each $p, q, r, s \in U,$ the partial derivative $D_{p, q}H(p, q, r, s) \in \mathcal{B}(\R^m \times \R^m, \widetilde{C}(X))$ exists and is given by
    \[ D_{p, q}H(p, q, r, s) = \left( \eps_p, \eps_q, \mapsto \Lambda_{p, q}(\eps_p, \eps_q) \circ \left( \Psi(r, \cdot), \Psi(s, \cdot) \right)  \right). \]
    As $p, q \mapsto \Lambda_{p, q}$ is a $C^{k-1}$ map from $U \times U$ to $\mathcal{B}(\R^m \times \R^m, \widetilde{C}^k(X))$, the continuity of \[D_{p, q}H \colon U \times U \times U \times U \to \mathcal{B}(\R^m \times \R^m, \widetilde{C}(X)) \cong \left( \widetilde{C}(X) \right)^{2m}\]
    follows from a similar argument as the continuity of $H$. 

    For the partial derivative of $H$ with respect to $r$, let $\widetilde{f}_{p, q, x}(r) \vc = f_{p, q}(\Psi(r, x))$ for each $p, q \in U$ and $x \in X$. (By a slight abuse of notation, we have chosen an arbitrary representative for $f_{p, q}$.) Since one partial derivative of $\Psi$ with respect to the first variable exists and is continuous on $U \times X$, $\widetilde{f}_{p, q, x}$ is $C^1$ on $U$ for each $p, q \in U$ and $x \in X$, and $r, x \mapsto \nabla \widetilde{f}_{p, q, x}(r)$ is continuous on $U \times X$ for each $p, q \in U$. Fix any $p, q, r \in U$. By the mean value theorem, for any $\eps \in \R^m$ with small enough norm,
    \[ \widetilde{f}_{p, q, x}(r + \eps) - \widetilde{f}_{p, q, x}(r) = \nabla \widetilde{f}_{p, q, x}(r + \alpha \eps) \cdot \eps \]
    for some $\alpha \in (0, 1)$. 
    As $r', x \mapsto \nabla \widetilde{f}_{p, q, x}(r')$ is uniformly continuous on $\overline{B_R(r)} \times X$ for some small enough $R > 0$, we have 
   \[ \lim_{\eps \to 0} \frac{\sup_{x \in X} \left| \widetilde{f}_{p, q, x}(r+ \eps) - \widetilde{f}_{p, q, x}(r) - \nabla \widetilde{f}_{p, q, x}(r) \cdot \eps \right|}{\|\eps\|}  = 0. \]
   Hence for each $p, q, r, s\in U$, $D_rH(p, q, r, s) \in \mathcal{B}(\R^m, \widetilde{C}(X))$ exists and is given by
   \[ [D_rH(p, q, r, s)](\eps) = \left[ x \mapsto \left(\nabla \widetilde{f}_{p, q, x}(r) \cdot \eps , 0\right) \right]. \]
   To show that
   \[ D_r H \colon U \times U \times U \times U \to \mathcal{B}(\R^m, \widetilde{C}(X)) \cong \left( \widetilde{C}(X) \right)^m \]
    is continuous, 
    observe that \cref{lemma:schro-potentials-regularity} implies that $p, q \mapsto \nabla f_{p, q}$ is a $C^k$ map from $U \times U$ to $\left(C^{k-1}(X)\right)^d$. Using the assumptions on the regularity of $\Psi$, another uniform continuity argument then implies that
    \[ p, q, r \mapsto \left( x \mapsto \nabla \widetilde{f}_{p, q, x}(r) \right) \]
    is continuous from $U \times U \times U$ to $\left(C(X)\right)^d$, and the continuity of $D_r H$ follows. The arguments for $D_sH$ are completely analogous.
    
    If $k \geq 2$, the arguments for higher-order derivatives are similar. Indeed, for each $p,q, r, s\in U$, $D_{p, q} H(p, q, r, s)$ is given by pre-composing $\Lambda_{p,q}$ with $(\Psi(r, \cdot), \Psi(s, \cdot)),$ where by a slight abuse of notation we consider $\Lambda_{p, q}$ as an element of $( \widetilde{C}^k(X) )^{2m}$; now recall that $p, q \mapsto \Lambda_{p, q}$ is a $C^{k-1}$ map from $U \times U$ to $\mathcal{B}(\R^m \times \R^m, \widetilde{C}^k(X) ) \cong ( \widetilde{C}^k(X) )^{2m}$ by \cref{lemma:schro-potentials-regularity}. Similarly, for $D_r H$, if $k \geq 2$ one can prove the existence and continuity of additional derivatives with respect to $r$ (again via the chain rule, assumptions on the regularity of $\Psi$ and uniform continuity arguments) and with respect to $p, q$ using continuity of the partial derivatives (\ref{eq:psi-part-derivs}) and \cref{lemma:schro-potentials-regularity}.
\end{proof}

\section{Additional details and plots for \cref{sec:examples}}
\label{sec:example-app}
We first prove Claim \ref{claim:rotating-particle-closed-form}.

\begin{proof}[Proof of Claim \ref{claim:rotating-particle-closed-form}]
    Fix any $\theta \in [-\pi, \pi)$. By symmetry, we have that the optimal coupling is of the form
       \begin{equation}
       \label{eq:optimal-coupling-1}
       \Pi_\theta^* = \begin{bmatrix}
        \frac{1}{2}-p(\theta) & p(\theta) \\
        p(\theta) & \frac{1}{2} -p (\theta)
    \end{bmatrix}, 
    \end{equation}
    and it remains to prove that $p(\theta)$ is given by (\ref{eq:rotation-p-closed-form}). Since $\mu_0$ and $\mu_\theta$ are both discrete measures with finite support, we can write (\ref{eq:optimal-coupling-form}) as
    \begin{equation}
    \label{eq:optimal-coupling-2}
    \Pi_\theta^* = \frac{1}{4}\begin{bmatrix}
        a_1 b_1 & a_1 b_2 e^{-R^2/\beta}  \\
        a_2 b_1 e^{-R^2/\beta} & a_2 b_2 e^{-\left(2R \sin \left( \frac{\theta}{2} \right) \right)^2/\beta}
    \end{bmatrix} 
    \end{equation}
    where $a_1 = e^{f_1/\beta}, a_2 = e^{f_2 / \beta}, b_1 = e^{g_1/\beta}, b_2 = e^{g_2/\beta}$ for some $f_1, f_2, g_1, g_2 \in \R$.
    Without loss of generality, we can take $a_1 = b_1$, and since
    \[ a_1 b_2 e^{-R^2/\beta} = a_2 b_1 e^{-R^2/\beta} \]
    we also have $a_2 = b_2$. Therefore
    \[ \Pi_\theta^* = \begin{bmatrix}
        \frac{1}{2}-p(\theta) & p(\theta) \\
        p(\theta) & \frac{1}{2} -p (\theta)
    \end{bmatrix} = \frac{1}{4} \begin{bmatrix}
        a_1^2 & a_1 a_2 e^{-R^2/\beta}  \\
        a_1 a_2 e^{-R^2/\beta} & a_2^2 e^{-\left(2R \sin \left( \frac{\theta}{2} \right) \right)^2/\beta}
    \end{bmatrix} \]
    for some $a_1, a_2 > 0,$ so
    \[ \left( \frac{\frac{1}{2} - p(\theta)}{p(\theta)} \right)^2 = \frac{a_1^2 a_2^2 e^{-\left(2R \sin \left( \frac{\theta}{2} \right) \right)^2/\beta}}{\left( a_1 a_2 e^{-R^2/\beta} \right)^2} = e^{2R^2\left( 1-2\sin^2 \left( \frac{\theta}{2} \right)  \right)/\beta} = e^{2R^2 \cos(\theta)/\beta}. \]
    Taking the square root of both sides gives us
    \[ \frac{1}{2p(\theta)} - 1 = e^{R^2 \cos(\theta)/\beta}, \]
    which, upon rearranging, gives us (\ref{eq:rotation-p-closed-form}):
    \[ p(\theta) = \frac{1}{2 \left( 1 + e^{R^2\cos(\theta)/\beta} \right)}. \]
    (\ref{eq:ot-beta-closed-form}) is obtained by observing that 
    \[ e^{\frac{f_1 + g_1}{\beta}} = e^{\frac{f_2 + g_2 -\left(2R \sin \left( \frac{\theta}{2} \right) \right)^2}{\beta}} = 4 \left( \frac{1}{2} - p(\theta) \right)  =  2 - 4 p(\theta)  \]
    and the dual representation (\ref{eq:ent-reg-ot-dual}) of $\mathrm{OT}_\beta$ gives us
    \begin{align*}
        \mathrm{OT}_\beta(\mu_0, \mu_\theta) &= \frac{1}{2} \left( f_1 + f_2 + g_1 + g_2 \right) \\
        &= \frac{1}{2} \left( \beta \log(2-4p(\theta)) + \beta \log(2-4p(\theta)) + \left(2R \sin \left( \frac{\theta}{2} \right) \right)^2 \right) \\
        &= \beta \log(2-4p(\theta)) + R^2 \left( 1 - \cos(\theta) \right).
    \end{align*}
    The expression for $S_\beta(\mu_0, \mu_\theta)$ follows from the definition of $S_\beta$. Now $g_0^{(\beta)}$ can be computed directly by considering the second derivative of $H(\theta) \vc = S_\beta(\mu_0, \mu_\theta)$ evaluated at $\theta = 0$. Indeed,
    \begin{align*}
    p'(\theta) &= -\frac{1}{2 \left( 1 + e^{R^2\cos(\theta)/\beta} \right)^2} \left(e^{R^2\cos(\theta)/\beta} \right) \left( -\frac{R^2 \sin (\theta)}{\beta} \right) \\
    &= p(\theta) \left( \frac{e^{R^2\cos(\theta)/\beta}}{1 + e^{R^2\cos(\theta)/\beta}} \right) \left( \frac{R^2 \sin(\theta)}{\beta} \right) \\
    &= p(\theta) (1- 2p(\theta))\left( \frac{R^2 \sin(\theta)}{\beta} \right)
    \end{align*}
    for $\theta \in (-\pi, \pi)$,
    and so
    \begin{align*}
        H'(\theta) &= -\frac{\beta p'(\theta)}{\frac{1}{2}-p(\theta)} + R^2\sin(\theta) \\
        &= -2 p(\theta) \left( R^2 \sin(\theta)\right) + R^2 \sin(\theta) \\
        &= R^2 \sin (\theta) \left( 1 - 2p(\theta) \right)
    \end{align*}
    for $\theta \in (-\pi, \pi)$. Taking one more derivative gives us
    \[ g_0^{(\beta)}\left( \frac{d}{d\theta},  \frac{d}{d\theta}\right) = \frac{1}{2} H''(0) = \frac{1}{2}R^2(1-2p(0)).\]
    By rotational symmetry, we have $g_\theta^{(\beta)} = g_0^{(\beta)}$ for all $\theta \in [-\pi, \pi).$
\end{proof}
\begin{figure}
    \centering
    \includegraphics[width=0.7\linewidth]{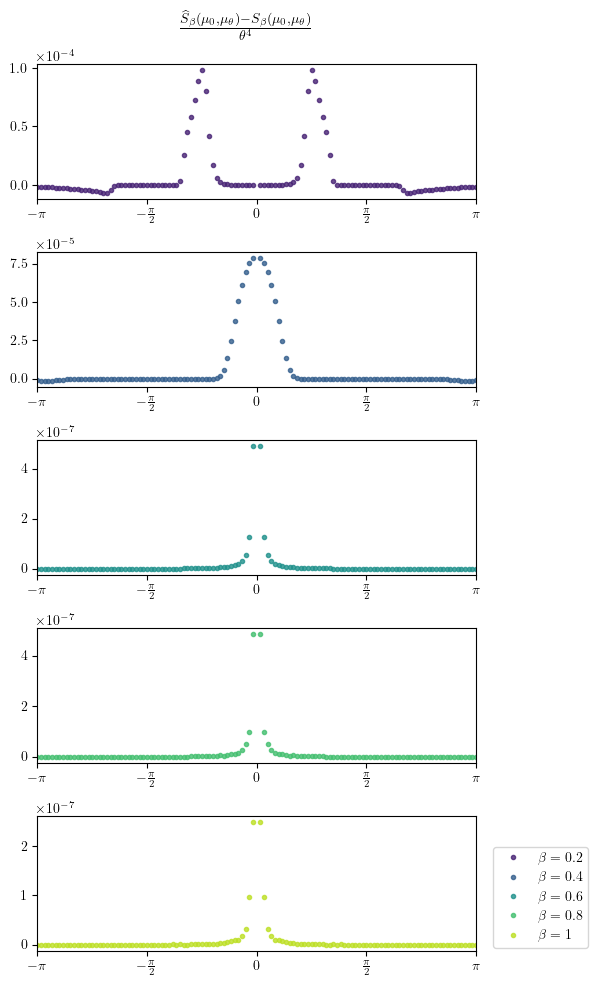}
    \caption{
    Plots of $\theta \mapsto \frac{\widehat{S}_\beta(\mu_0, \mu_\theta) - S_\beta(\mu_0, \mu_\theta)}{\theta^4}$, where
    $\widehat{S}_\beta$ is the Sinkhorn divergence approximated with Sinkhorn's algorithm and  $S_\beta(\mu_0, \mu_\theta)$ is given by the closed-form formula for the example in \cref{subsec:rotation}.}
    \label{fig:rotating-empirical-vs-closed}
\end{figure}

\begin{figure}
    \centering
    \includegraphics[width=0.75\linewidth]{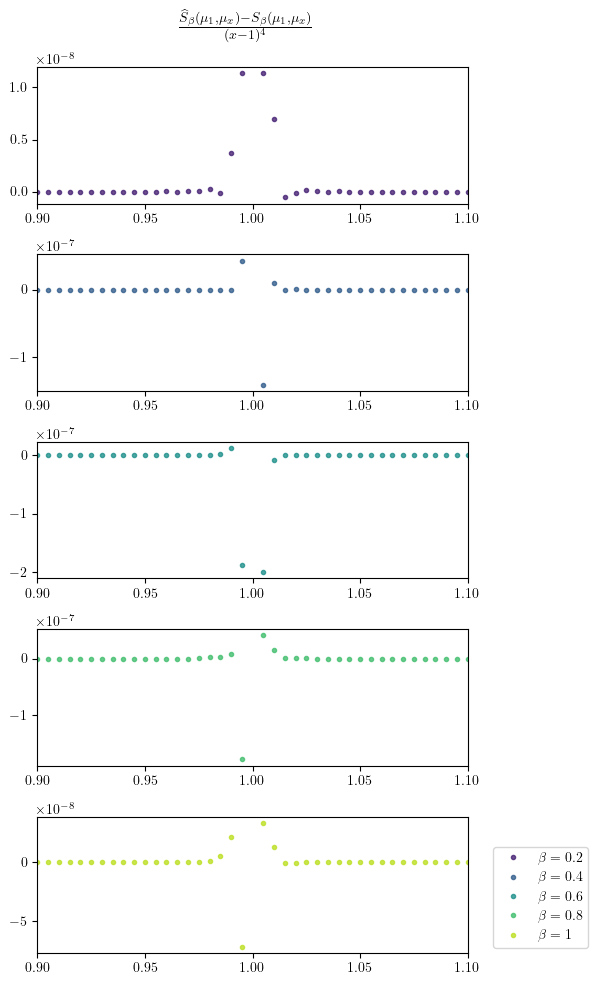}
    \caption{ Plots of $x\mapsto \frac{\widehat{S}_\beta(\mu_1, \mu_x) - S_\beta(\mu_1, \mu_x)}{(x-1)^4}$, where $\widehat{S}_\beta$ is the Sinkhorn divergence approximated with Sinkhorn's algorithm and  $S_\beta(\mu_1, \mu_x)$ is given by the closed-form formula for the example in \cref{subsec:dilation}.}
    \label{fig:dilation-empirical-vs-closed}
\end{figure}

\begin{figure}
    \centering
    \includegraphics[width=\linewidth]{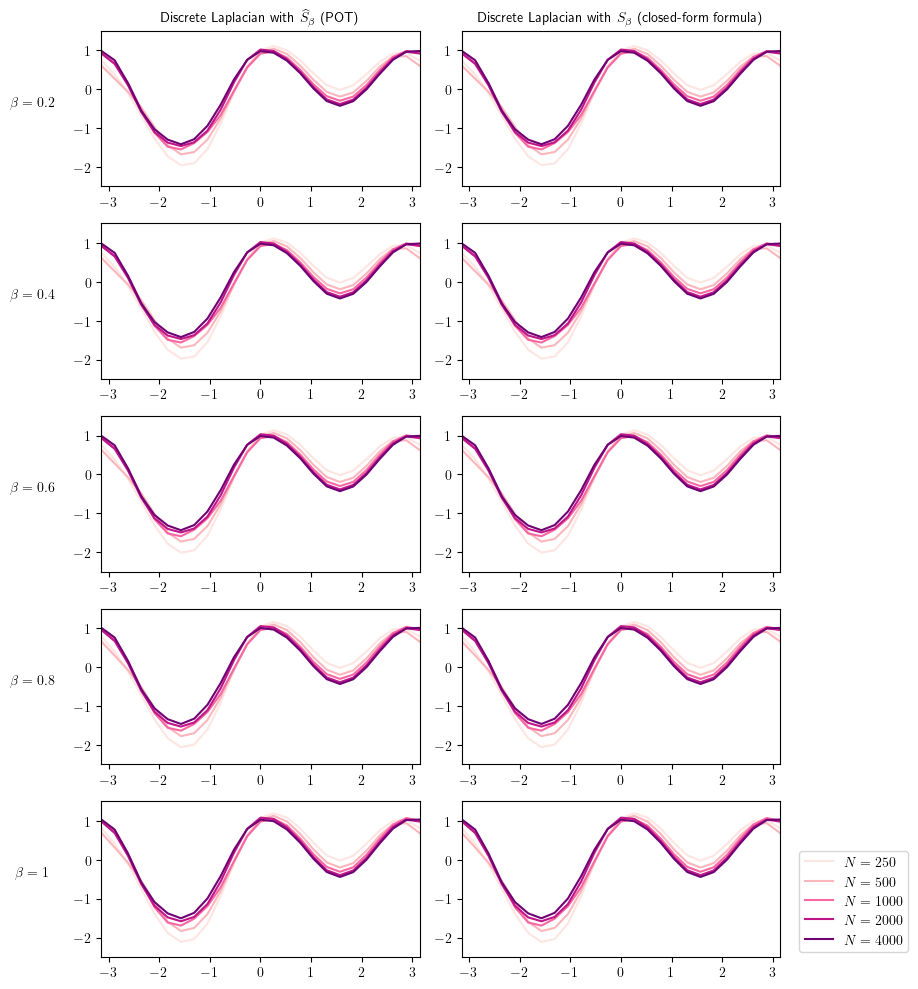}
    \caption{Same as \cref{fig:rotating-particle-gl-conv}, but with different values of $\beta$ ($0.2, 0.4, 0.6, 0.8, 1$) and using the Sinkhorn divergence $\widehat{S}_\beta$ approximated via Sinkhorn's algorithm.}
    \label{fig:rotating-particle-discrete-laplacians}
\end{figure}

\begin{figure}
    \centering
    \includegraphics[width=.9\linewidth]{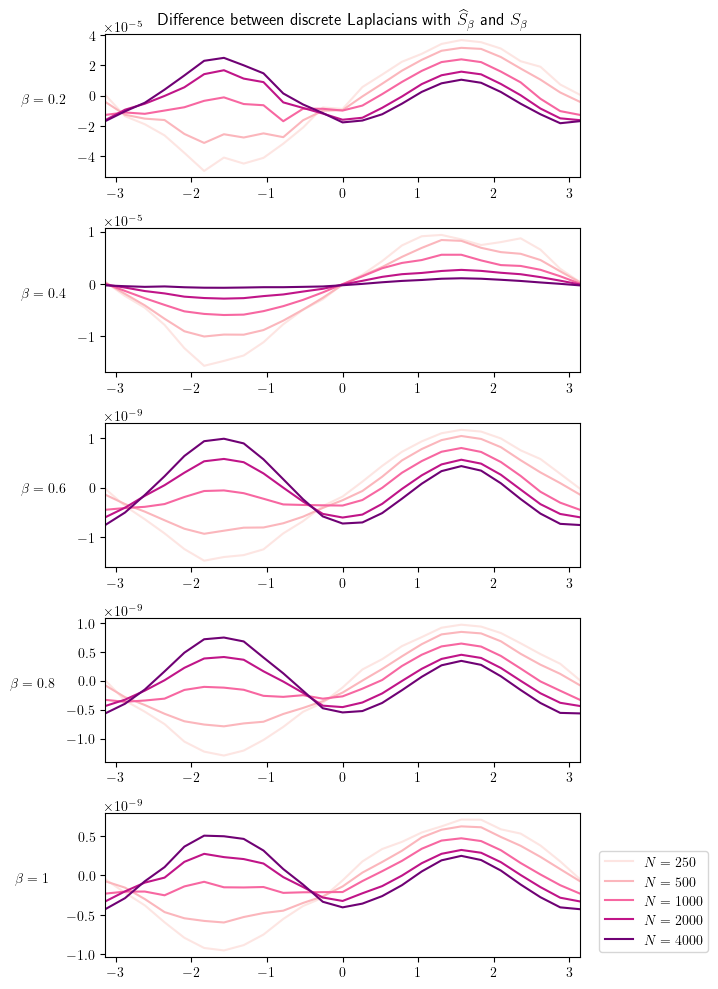}
    \caption{Subtracting the right-hand column from the left-hand column of \cref{fig:rotating-particle-discrete-laplacians}.}
    \label{fig:rotating-particle-discrete-laplacian-difference}
\end{figure}

As noted in Remark \ref{rmk:sinkhorn-algo}, in practice we often do not have a closed-form formula for the Sinkhorn divergence between two probability measures, and instead we have to approximate it. For Figures \ref{fig:rotating-particle-4th-order-error} and \ref{fig:fourth-order-error} (demonstrating the approximation error on the squared geodesic distance arising from \cref{lemma:third-order-vanish}), we find that the difference between using the closed-form formula and the Sinkhorn divergence computed using the default implementation of Sinkhorn's algorithm in the Python Optimal Transport (POT) library \cite{flamary2021pot, flamary2024pot} is very small; see Figures \ref{fig:rotating-empirical-vs-closed} and 
\ref{fig:dilation-empirical-vs-closed}, respectively.

One therefore expects the difference between the resulting discrete Laplacians to also be very small. As an example, in Figure \ref{fig:rotating-particle-discrete-laplacians}, we recreate Figure \ref{fig:rotating-particle-gl-conv} but with the Sinkhorn divergence computed using the default implementation of Sinkhorn's algorithm in POT and different values of $\beta$ ($0.2, 0.4, 0.6, 0.8, 1$). The difference between using the closed-form formula and the POT version is virtually unnoticeable unless we specifically plot the difference between the resulting discrete Laplacians (Figure \ref{fig:rotating-particle-discrete-laplacian-difference}). Observe that for each $\beta \in \{0.2, 0.4, 0.6, 0.8, 1\},$ the scale on the $y$-axis in Figure \ref{fig:rotating-particle-discrete-laplacian-difference} is much smaller than that on the corresponding plot in Figure \ref{fig:rotating-particle-discrete-laplacians}.

\end{document}